\documentclass{article} % Anonymized submission

\usepackage{graphicx,epstopdf,amsfonts,,amsmath,algorithmic,hyperref, enumerate, paralist, bm,mathabx}
\usepackage{amsthm}
\usepackage{latexsym}
\usepackage{bm} % bold math -- good for bolded subscripts
\usepackage{amsmath,amssymb,amsfonts}
\usepackage{url}
\usepackage{enumitem}

\usepackage{lmodern}
\hypersetup{colorlinks=true,allcolors=blue}

\usepackage[affil-it]{authblk}

%bibliography stuff
\usepackage[authoryear]{natbib}
\bibliographystyle{abbrvnat}
\setcitestyle{open={(},close={)}}
\usepackage{bm}
\usepackage{algorithmic}
\usepackage{algorithm}
\usepackage{wrapfig}

\usepackage{geometry}

% Useful macros -- you can add your own

\newcommand{\sX}{{\mathcal X}}

\newcommand{\sP}{\mathcal{P}}
\newcommand{\sS}{\mathcal{S}}

\newcommand{\sC}{\mathcal{C}}

\newcommand{\sD}{\mathcal{D}}

% frequently used symbols
 % use this for expectation
\newcommand{\bbR}{\mathbb{R}} % real numbers

\newcommand{\bbZ}{\mathbb{Z}}
 %power set

\newcommand{\si}{\Delta}

% operators

\newcommand{\supp}{\mathop{\mathrm{supp}}} % support
\newcommand{\argmin}{\operatornamewithlimits{arg\ min}}
 % distance

\newcommand{\aff}{\mathop{\text{aff}}} % affine hull
 % closure
\newcommand{\co}{\mathop { \textnormal{conv}}} % convex hull
\newcommand{\spa}{\mathop { \textnormal{span}}} % span
\newcommand{\rank}{\mathop { \textnormal{rank}}}
\newcommand{\unif}{\mathop { \text{unif}}}
\newcommand{\range}{\mathop { \text{range}}}

% indicator function
\newcommand{\ind}[1]{\bm{1}_{\{#1\}}} 

% grouping operators

\newcommand{\set}[1]{\{#1\}}

\newcommand{\norm}[1]{\left\|#1\right\|}

%vector notation
\renewcommand{\vec}[1]{\bm{#1}}% Bold math
\renewcommand{\set}[1]{\{#1\}}

% useful commands for frequent symbols and notation
\newcommand{\p}[1]{\tilde{P}_#1}

\newcommand{\est}[1]{\widehat{#1}}
\newcommand{\ed}[1]{#1^\dagger} %for empirical estimator

%theorem stuff
\newtheorem{thm}{Theorem}
\newtheorem{prop}{Proposition}
\newtheorem{lemma}{Lemma}
\newtheorem{cor}{Corollary}
\newtheorem{defn}{Definition}

\newtheorem{rem}{Remark}

%to keep algorithmson same page

\title{A Mutual Contamination Analysis of Mixed Membership and Partial Label Models}
\usepackage{times}

\author{Julian Katz-Samuels}
\author{Clayton Scott \\ \texttt{\{jkatzsam,clayscot\}@umich.edu}}
\affil{Department of Electrical Engineering and Computer Science, \\
University of Michigan}

\begin{document}
\maketitle
\begin{abstract}
Many machine learning problems can be characterized by \emph{mutual contamination models}.
In these problems, one observes several random samples from different convex combinations of a set of unknown base distributions. It is of interest to \emph{decontaminate} mutual contamination models, i.e., to recover the base distributions either exactly or up to a permutation. This paper considers the general setting where the base distributions are defined on arbitrary probability spaces. We examine the decontamination problem in two  mutual contamination models that describe popular machine learning tasks: recovering the base distributions up to a permutation in a mixed membership model, and recovering the base distributions exactly in a partial label model for classification. We give necessary and sufficient conditions for identifiability of both mutual contamination models, algorithms for both problems in the infinite and finite sample cases, and introduce novel proof techniques based on affine geometry.
\end{abstract}
\section{Introduction}
In a mutual contamination model \citep{blanchard2014}, there are $L$ distributions $P_1,\ldots, P_L$ called \emph{base distributions}. The learner observes $M$ training random samples
\begin{equation*}
X^i_1, \ldots, X_{n_i}^i \overset{i.i.d.}{\sim} \tilde{P}_i = \sum_{j =1}^L \pi_{i, j} P_j
\end{equation*}
$i=1,\ldots, M$, where $\pi_{i,j} \geq 0$ and $\sum_{j} \pi_{i,j} = 1$. Here $\pi_{i,j}$ is the probability that an instance of the \emph{contaminated distribution} $\tilde{P}_i$ is a realization of $P_j$. The $\pi_{i,j}$ and $P_j$s are unknown and $\tilde{P}_i$ is observed through data. In this work, we avoid parametric models and assume that the sample space is arbitrary. The model can be stated concisely as
\begin{equation}
\tilde{\vec{P}} = \vec{\Pi} \vec{P} \label{model}
\end{equation}
where $\vec{P} = (P_1, \ldots, P_L)^T$, $\tilde{\vec{P}} = (\p{1},\ldots, \p{M})^T$, and $\vec{\Pi} = \begin{pmatrix}
\pi_{i,j}
\end{pmatrix}$ is an $M \times L$ matrix (which we call the \emph{mixing matrix}). The \emph{decontamination} problem is to recover the base distributions either exactly or up to a permutation.

We study the decontamination problem in two mutual contamination models that describe popular machine learning tasks. First, in \emph{mixed membership models}, the learner observes samples from $\p{1}, \ldots, \p{M}$ and the decontamination problem is to recover a permutation of $P_1, \ldots, P_L$. We refer to this problem as the decontamination or \emph{demixing} problem interchangeably. Recently, mixed membership models have become a powerful modeling tool for data where data points are associated with multiple distributions. Applications have appeared in a wide range of fields including image processing \citep{li}, population genetics \citep{pritchard}, document analysis \citep{blei2003}, and surveys \citep{berkman}. There is potential value in developing a theory of mixed membership models for arbitrary sample spaces since in some applications (e.g., image processing and computer vision) it may be preferable to model features as more general random variables \citep{poczos2012}. Such a theory will necessarily differ from previous work (reviewed below) which relies heavily on modeling the base and contaminated distributions as finite-dimensional probability vectors.

Consider the following application from natural language processing. In topic models, a form of mixed membership models, the $P_i$s are distributions over words and the $\p{i}$s represent documents. As such, $P_i$s are often treated as discrete random variables. However, it may be desirable to use word representations (mappings of words to $\bbR^d$) since word representations have contributed immensely to the success of systems on many tasks including parsing, entity recognition, and part-of-speech tagging \citep{luong2013}. In such an approach, the $P_i$s are continuous distributions over $\bbR^d$. Whereas most topic modeling algorithms could not operate on such a useful representation, our results and algorithms do apply. 

A partial label model is an alternative setting for multiclass classification with $L$ classes.\footnote{The partial label problem has also been referred to as the ``superset learning problem" or the ``multiple label problem" \citep{liu2014}.} In a partial label model, each data point is labeled with a \emph{partial label} $S \subset \set{1, \ldots, L}$; the true label is in $S$, but is it not known which label is the true one. If $S_i$ is a partial label, then we view patterns with label $S_i$ as distributed according to $\tilde{P}_i = \sum_{j \in S_i} \pi_{i,j} P_j$. In the mutual contamination model setting, the learner has access to the \emph{partial label matrix} $\vec{S} = (\ind{\vec{\Pi}_{i,j} > 0})$, an $M \times L$ matrix. The decontamination problem is to recover $(P_1, \ldots, P_L)^T$ \emph{exactly} (not just up to a permutation) from $\vec{S}$ and $\tilde{\vec{P}}$ (or its empirical version).

Most work has approached the partial label problem by trying to minimize the partial label error \citep{jin2002, nyugen2008, cour2011, liu2012}. The partial label error is the probability that a given classifier assigns a label to a training instance that is not contained in the partial label associated with the training instance. To our knowledge our work is the first to consider recovery of the base distributions $P_j$. Once the base distributions are known, it would then be possible to design a classifier according to any criterion, such as probability of error.

We make the following contributions: (i) We give necessary and sufficient conditions on $\vec{P}$, $\vec{\Pi}$, and $\vec{S}$ for identifiability of the mixed membership and partial label models under mild assumptions on the decontamination procedure.
(ii) We introduce novel algorithms for the demixing problem and partial label classification in the infinite and finite sample settings. These algorithms are nonparametric in the sense that they do not model $P_i$ as a probability vector or other parametric model. (iii) We develop novel estimators for distributions obtained by iteratively applying the $\kappa^*$ operator (defined below). (iv) We introduce novel proof techniques based on affine geometry.
\subsection{Notation}
Let $\bbZ^+$ denote the positive integers. For $n \in \bbZ^+$, let $[n] = \set{1, \ldots, n}$. If $\vec{x} \in \bbR^K$, let $x_i$ denote the $i$th entry of $\vec{x}$. In contrast, if $\vec{x}_j \in \bbR^K$, then $x_{j,i}$ denotes the $i$th entry of $\vec{x}_j$. Let $\vec{e}_i$ denote the length $L$ vector with $1$ in the $i$th position and zeros elsewhere. Let $\vec{\pi}_i \in \si_L \subset \bbR^L$ be the transpose of the $i$th row of $\vec{\Pi}$ where $\si_L$ denotes the $(L-1)$-dimensional simplex, i.e., $\si_L = \{ \vec{\mu} = (\mu_1,\ldots, \mu_L)^T \in \bbR^L \,| \, \sum_{i=1}^L \mu_i = 1 \text{ and }  \forall i:  \mu_i   \geq 0  \}$. Let $\Delta_L^M$ denote the product of $M$ $(L-1)$-dimensional simplices, the space of $M \times L$ stochastic matrices. Let $\sP$ denote the space of probability distributions on a measurable space $(\sX, \sC)$. Let $\supp(F)$ denote the support of a distribution $F$ on a Borel space. Let $\vec{B}^{M \times L}$ denote the space of binary matrices of dimension $M \times L$. Let $A$ be a set. Let $\aff A$ denote the affine hull of $A$, i.e., 
$\aff A = \set{\sum_{i=1}^K \theta_i \vec{x}_i | \vec{x}_1, \ldots , \vec{x}_K \in A, \sum_{i=1}^K \theta_i = 1}$.

\section{Related Work}
Mutual contamination models were employed by \citet{scott2013} and \citet{blanchard2014} to study classification with label noise. \citet{scott2013} assume that $M = L = 2$ and show how to recover $P_1$ and $P_2$ exactly (not just up to a permutation). \citet{blanchard2014} assume that  $M = L \geq 2$ and recover $P_1, \ldots, P_L$ exactly. The demixing problem in the current paper differs from these problems in that \emph{(i)} the number of contaminated distributions is not necessarily equal to the number of base distributions (i.e., we allow $M \neq L$) and, more significantly, \emph{(ii)} the demixing problem only requires recovering $P_1,\ldots, P_L$ up to a permutation. The partial label model differs from these problems in that it assumes access to the partial label matrix $\vec{S}$. We use some ideas from \citep{blanchard2014}, but make significant extensions. These differences lead us to make different (substantially weaker) assumptions on $\vec{\Pi}$ to achieve identifiability, although we use the same assumption on $\vec{P}$, namely that $P_1, \ldots, P_L$ are jointly irreducible (defined below).
\subsection{The Demixing Problem and Topic Models}
Our demixing problem may also be viewed as topic modeling on general domains. In topic modeling, the base distributions $P_i$ correspond to topics and the contaminated distributions $\p{i}$ to documents, which are regarded as mixtures of topics. In most cases, the $P_i$s are assumed to have a finite sample space. A variety of approaches have been proposed for topic modeling. The most common approach assumes a generative model for a corpus of documents and determines the maximum likelihood fit of the model given data. However, because maximum likelihood is NP-hard, these approaches must rely on heuristics that can get stuck in local minima \citep{arora_beyond_2012}. 

Recently, a trend towards algorithms for topic modeling with provable guarantees has emerged. These methods rely on the separability assumption \textbf{(SEP)} and its variants. According to \textbf{(SEP)}, $P_1,\ldots, P_L$ are distributions on a finite sample space and for every $i \in \set{1,\ldots,L}$, there exists a word $x \in \supp(P_i)$ such that $x \not \in \cup_{j \neq i} \supp(P_j)$. Our requirement that $P_1,\ldots, P_L$ are jointly irreducible is a natural generalization of separability of $P_1,\ldots, P_L$, as we will argue below. Specifically, if $P_1,\ldots, P_L$ have discrete sample spaces, separability and joint irreducibility coincide; however, if $P_1,\ldots, P_L$ are continuous, under joint irreducibility, $P_1,\ldots, P_L$ can have the same support. 

A key ingredient in these algorithms is to use the assumption of a finite sample space to view the distributions as probability vectors in Euclidean space; this leads to approaches based on non-negative matrix factorization (NMF), linear programs, and random projections \citep{donoho, arora_beyond_2012, arora_practical_2012, ding_2013, ding2014, recht_2012}. However, more general distributions cannot be viewed as finite-dimensional vectors. Therefore, topic modeling on general domains requires new techniques. Our work seeks to provide such techniques.
\subsection{Partial Label Model}
The partial label model has had two main formulations in previous work. In \textbf{(PL-1)}, instances from each class are drawn independently and the partial label for each instance is drawn independently from a set-valued distribution. In \textbf{(PL-2)}, training data are in the form of bags where each bag is a set of instances and the bag has a set of labels. Each instance belongs to a single class, and the set of labels associated with the bag is given by the union of the labels of the instances in the bag \citep{liu2014}. Our framework is similar to \textbf{(PL-2)}, although it does not assume a joint distribution on the features of instances and the partial labels. 

Most algorithms approach classification in a partial label model by picking a classifier that minimizes the partial label error \citep{jin2002, nyugen2008, cour2011, liu2012}. \citet{liu2014} develop learnability results in the realizable case for algorithms that use this approach. One of the key concepts that they take from \citet{cour2011} is \emph{the ambiguity degree}. It bounds the probability that a specific incorrect label appears in the partial label of an instance. Our approach has the advantage that it makes no assumption regarding realizability, which essentially means that $P_i$ have disjoint supports.

Our paper makes three main contributions to the literature. First, we address the fundamental question of identifiability of a partial label model. Second, we provide nonparametric algorithms for the infinite and finite sample settings. Finally, we introduce a novel approach based on affine geometry that differs significantly from approaches that minimize the partial label error.

\section{Necessary Conditions}
\label{nec_cond_sec}

We begin by developing necessary conditions for identifiability of a mixed membership model and, then, discuss necessary conditions for the identifiability of a partial label model. A mixed membership model is identifiable if, given $\tilde{\vec{P}}$, the pair $(\vec{\Pi}, \vec{P})$ that solves (\ref{model}) is uniquely determined. In general, this pair is not unique. For example, consider the case where $L=M$, $(\vec{\Pi},\vec{P})$ solves (\ref{model}), and $\vec{\Pi}$ is not a permutation matrix. Then, another solution is $\tilde{\vec{P}} = \vec{I} \tilde{\vec{P}}$. 

Furthermore, there are infinitely many non-trivial solutions in the realistic scenario where there is some $\p{i}$ in the interior of $\co(P_1, \ldots, P_L)$ and at least two of the $\p{j}$s are distinct. We can construct such solutions as follows. Without loss of generality, suppose that $i = 1$ and $\p{1} \neq \p{2}$. Then, since $\p{1}$ is in the interior of $\co(P_1, \ldots, P_L)$, there is some $\delta > 0$ such that for any $\alpha \in (1, 1 + \delta)$, $Q_\alpha = \alpha \p{1} + (1- \alpha) \p{2}$ is a distribution. Then, $\co(\p{1}, \ldots, \p{L}) \subseteq \co(Q_\alpha, \p{2}, \ldots, \p{L})$   and, consequently, there is some $\vec{\Pi^\prime} \in \Delta_L^L$ such that $(\vec{\Pi^\prime}, (Q_\alpha, \p{2}, \ldots, \p{L})^T)$ solves (\ref{model}). Clearly, by varying $\alpha$, there are infinitely many solutions to (\ref{model}). Moreover, we can replace $\p{2}$ in the above argument with any distribution in the convex hull of $\set{\p{j} : j \neq 1}$ that is not equal to $\p{1}$. 

In light of the above, it is natural to impose conditions on the decontamination procedure so as to eliminate trivial and other simple solutions to (\ref{model}). We now formalize the notion of a decontamination operator and present two conditions that we believe it should satisfy. Let $\phi$ denote a \emph{decontamination operator}: a function from $\sP^M$ to $\Delta_L^M \times \sP^L$ such that $\phi(\tilde{\vec{P}})$ returns $(\vec{\Pi}, \vec{P})$ that solves (\ref{model}). Let $\phi_1(\tilde{\vec{P}})$ return $\vec{\Pi}$ and $\phi_2(\tilde{\vec{P}})$ return $\vec{P}$.
\begin{defn}
A decontamination operator $\phi$ satisfies Maximality \textbf{(M)} iff $\phi(\tilde{\vec{P}}) = (\vec{\Pi}, \vec{P})$ implies that any pair $(\vec{\Pi}^\prime, \vec{P}^\prime)$ with $\vec{\Pi}^\prime \in \Delta_L^M$ and $\vec{P}^\prime = (P^\prime_1,\ldots, P^\prime_L)^T \in \mathcal{P}^L$ that solves (\ref{model}) satisfies $\set{P^\prime_1,\ldots, P^\prime_L} \subseteq \co(P_1,\ldots, P_L)$.
\end{defn} 
\noindent In words, \textbf{(M)} states that $\vec{P} = (P_1, \ldots, P_L)^T$ is a maximal collection of base distributions in the sense that it is not possible to move any of the $P_i$s outside of $\co(P_1, \ldots,P_L)$ and represent $\tilde{\vec{P}}$. 
\begin{defn}
A decontamination operator $\phi$ satisfies Linearity \textbf{(L)} iff $\phi_2(\tilde{\vec{P}}) = \vec{P}$ implies $\set{P_1, \ldots, P_L} \subseteq \spa(\p{1}, \ldots, \p{M})$.
\end{defn}
We believe that \textbf{(L)} is a reasonable requirement since it holds in the common situation in which there exist $\vec{\pi}_{i_1}, \ldots, \vec{\pi}_{i_L}$ that are linearly independent. Then for $I = \set{i_1, \ldots, i_L}$, we can write $\tilde{\vec{P}}_I = \vec{\Pi}_I \vec{P}$ where $\vec{\Pi}_I$ is the submatrix of $\vec{\Pi}$ containing only the rows indexed by $I$ and  $\tilde{\vec{P}}_I$ is similarly defined. Then, $\vec{\Pi}_I$ is invertible and $\vec{P} = \vec{\Pi}_I^{-1} \tilde{\vec{P}}$.

To formulate a necessary condition, we introduce another definition. For distributions $G$ and $H$, we say that $G$ is \emph{irreducible} with respect to $H$ if it is not possible to write $G = \gamma H + (1-\gamma) F$ where $F$ is a distribution and $0 < \gamma \leq 1$. This condition was used previously in the study of classification with label noise \citep{blanchard2010, scott2013}; here, we show that it arises in the context of a necessary condition for demixing mixed membership models. 
\begin{thm}
\label{nec_cond}
Let $\phi$ denote a decontamination operator and $\phi(\tilde{\vec{P}}) = (\vec{\Pi}, \vec{P})$. If $\phi$ satisfies \textbf{(M)}, then
\begin{description}
\item[(A)] $\forall i$, $P_i$ is irreducible with respect to every distribution in $\co(\set{P_j : j \neq i})$.
\end{description}
If $\phi$ satisfies \textbf{(L)}, then 
\begin{description}
\item[(B)] $\rank(\vec{\Pi}) \geq \dim \spa(P_1, \ldots, P_L)$.
\end{description}
\end{thm}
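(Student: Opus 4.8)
The plan is to prove the two implications separately, since they rely on the two conditions \textbf{(M)} and \textbf{(L)} independently. Both arguments are most naturally carried out by contraposition, using throughout the representation $\tilde{\vec{P}} = \vec{\Pi}\vec{P}$, which guarantees that every $\p{m}$ lies in $\co(P_1,\ldots,P_L)$.

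For \textbf{(A)}, I would assume the conclusion fails and derive a violation of \textbf{(M)}. If \textbf{(A)} fails, there is an index $i$, a distribution $H \in \co(\set{P_j : j \neq i})$, a scalar $\gamma \in (0,1]$, and a distribution $F$ with $P_i = \gamma H + (1-\gamma) F$. The key observation is that this writes $P_i$ as a convex combination of $F$ and $H$, hence of $F$ and $\set{P_j : j \neq i}$; consequently $\co(P_1,\ldots,P_L) \subseteq \co(\set{F} \cup \set{P_j : j \neq i})$. Replacing $P_i$ by $F$ therefore yields a collection $\vec{P}'$ whose convex hull still contains every $\p{m}$, so there is a $\vec{\Pi}' \in \Delta_L^M$ with $\tilde{\vec{P}} = \vec{\Pi}' \vec{P}'$. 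I would then split into two cases. If $F \notin \co(P_1,\ldots,P_L)$, this alternative solution already contradicts \textbf{(M)}. If instead $F \in \co(P_1,\ldots,P_L)$, I would substitute the convex expansions of $F$ and of $H$ into $P_i = \gamma H + (1-\gamma)F$ and collect terms; the coefficient of $P_i$ on the right-hand side is $(1-\gamma)\beta_i < 1$, where $\beta_i$ is the weight of $P_i$ in the expansion of $F$, so one can solve for $P_i$ and verify that the resulting coefficients are nonnegative and sum to one. This shows $P_i \in \co(\set{P_j : j \neq i})$, i.e.\ $P_i$ is redundant, whence replacing $P_i$ by any distribution outside $\co(P_1,\ldots,P_L)$ again produces a valid solution violating \textbf{(M)}.

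For \textbf{(B)}, the argument is pure linear algebra. From $\tilde{\vec{P}} = \vec{\Pi}\vec{P}$ each $\p{m} = \sum_j \pi_{m,j} P_j$, so $\spa(\p{1},\ldots,\p{M}) \subseteq \spa(P_1,\ldots,P_L)$. Condition \textbf{(L)} supplies the reverse inclusion $\set{P_1,\ldots,P_L} \subseteq \spa(\p{1},\ldots,\p{M})$, so the two spans coincide and in particular have equal dimension. Finally, the linear map sending a coefficient vector $\vec{c} \in \bbR^L$ to $\sum_j c_j P_j$ carries the row space of $\vec{\Pi}$ onto $\spa(\p{1},\ldots,\p{M})$, so $\dim \spa(\p{1},\ldots,\p{M}) \leq \rank(\vec{\Pi})$. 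Chaining these facts gives $\dim \spa(P_1,\ldots,P_L) \leq \rank(\vec{\Pi})$, as required.

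I expect the main obstacle to lie entirely in \textbf{(A)}, specifically in the second case: verifying that $F \in \co(P_1,\ldots,P_L)$ forces $P_i$ to be redundant requires the explicit bookkeeping of convex coefficients above, and one must confirm that the denominator $1-(1-\gamma)\beta_i$ is strictly positive (which holds since $\gamma \in (0,1]$ and $\beta_i \in [0,1]$) so that the division is legitimate. A secondary technical point is the existence of a distribution outside $\co(P_1,\ldots,P_L)$ needed to finish the redundant case; this should follow from the richness of $\sP$ on a non-degenerate sample space, but it is the one place where the ambient space enters. Part \textbf{(B)}, by contrast, is routine once the equality of spans is established.
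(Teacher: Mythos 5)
Your proposal is correct and follows essentially the same route as the paper's proof: for \textbf{(A)}, contraposition with a case split on whether the residual distribution lies in $\co(P_1,\ldots,P_L)$, using the same redundancy argument (your explicit coefficient bookkeeping, with denominator $1-(1-\gamma)\beta_i \geq \gamma > 0$, just fills in a step the paper states without detail) and the same replacement-by-an-outside-distribution trick to violate \textbf{(M)}; for \textbf{(B)}, the identical span-equality argument followed by the rank bound, where your coefficient-map formulation is a cleaner rendering of the paper's appeal to $\range(\vec{\Pi})$. The one implicit assumption you flag (existence of a distribution outside $\co(P_1,\ldots,P_L)$) is equally implicit in the paper, so it is not a gap relative to the paper's own argument.
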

\begin{proof}[Sketch]
\begin{description}
\item[(A)] Proof by contraposition. Let $\phi$ be a demixing operator and $\phi(\tilde{\vec{P}}) = (\vec{\Pi}, \vec{P})$. Suppose that there is some $P_i$ and $Q \in \co(\set{P_j : j \neq i})$ such that $P_i$ is not irreducible with respect to $Q$. Then, there is some distribution $G$ and $\gamma \in (0,1]$ such that $P_i = \gamma Q + (1-\gamma) G$. First, we show that if $\gamma = 1$ or $G \in \co(P_1, \ldots, P_L)$, then $P_i \in \co(\set{P_k : k \neq i})$ in which case $\p{1}, \ldots, \p{M} \in \co(\set{P_j : j \neq i} \cup \set{R})$ for any distribution $R \not \in \co(P_1, \ldots, P_L)$. Thus, $\phi$ violates \textbf{(M)}. Second, we show that if $\gamma \in (0,1)$ and $G \not \in \co(P_1, \ldots, P_L)$, then $\co(P_1, \ldots, P_L) \subseteq \co(\set{P_k : k \neq i} \cup \set{G})$, from which it follows that $\phi$ violates \textbf{(M)}.
\item[(B)] Let $\phi(\tilde{\vec{P}}) = (\vec{\Pi}, \vec{P})$. Using the hypothesis that $\phi$ satisfies \textbf{(L)} and the relation $\tilde{\vec{P}} = \vec{\Pi} \vec{P}$, we show that $\dim \spa (\p{1}, \ldots , \p{M}) = \dim \spa(P_1, \ldots, P_L)$. Then, the result follows from the fact that $\p{1}, \ldots, \p{M} \in \range(\vec{\Pi})$.
\end{description}
\end{proof}
In sum, we have obtained necessary conditions on the base distributions and the mixing matrix for identifiability of a mixed membership model. In particular, Theorem \ref{nec_cond} implies that if $P_1, \ldots, P_L$ are linearly independent, then there must be at least as many contaminated distributions as base distributions, i.e., $M \geq L$. We also remark that \textbf{(A)} and \textbf{(B)} apply to \citet{blanchard2014} (in which no necessary conditions were given) since in that paper, the goal is to identify $P_1, \ldots, P_L$ exactly. Further, note that \textbf{(A)} appears as a sufficient condition in \citet{sanderson2014}.

Now, we turn to the partial label model. A partial label model is identifiable if given $\tilde{\vec{P}}$ and $\vec{S}$, the pair $(\vec{\Pi}, \vec{P})$ that is consistent with $\vec{S}$ and solves $\tilde{\vec{P}} = \vec{\Pi} \vec{P}$ is unique. A decontamination operator for the partial label model $\psi$ is a function from $\sP^M \times \vec{B}^{M \times L}$ to $\si_L^M \times \sP^L$. This difference necessitates slight modifications to our notions of Maximality and Linearity, which we refer to as \textbf{(M$'$)} and \textbf{(L$'$)}, respectively. Let $\psi(\tilde{\vec{P}}, \vec{S}) = (\vec{\Pi}, \vec{P})$.  If $\psi$ satisfies \textbf{(M$'$)}, then $(\vec{\Pi}, \vec{P})$ need not satisfy \textbf{(A)}. However, if $\psi$ satisfies \textbf{(B)}, then $(\vec{\Pi}, \vec{P})$ satisfies \textbf{(B)}. See the Appendix for details. Identifiability of the partial label model implies a condition \textbf{(C)} on the partial label matrix.
\begin{prop}
\label{partial_necessity}
Let $\psi$ denote a decontamination operator for the partial label model and $\psi(\tilde{\vec{P}}, \vec{S}) = (\vec{\Pi}, \vec{P})$. If $(\tilde{\vec{P}}, \vec{S})$ is identifiable, then 
\begin{description}
\item[(C)] $\vec{S}$ does not contain a pair of identical columns.
\end{description}
\end{prop}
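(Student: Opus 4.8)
The plan is to argue by contraposition. Suppose $\vec{S}$ contains two identical columns, indexed by $j \neq k$; I will exhibit two distinct pairs, each lying in $\si_L^M \times \sP^L$, each consistent with $\vec{S}$ (i.e.\ having $\vec{S}$ as its partial label matrix $(\ind{\pi_{i,\ell}>0})$), and each solving $\tilde{\vec{P}} = \vec{\Pi}\vec{P}$. Producing two such pairs contradicts the uniqueness required for identifiability.

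The first construction is a column/row swap. Let $\sigma$ be the transposition of $j$ and $k$. Define $\vec{P}'$ by exchanging the $j$th and $k$th entries of $\vec{P}$ and $\vec{\Pi}'$ by exchanging the $j$th and $k$th columns of $\vec{\Pi}$. Permuting the columns of $\vec{\Pi}$ is undone by permuting the matching entries of $\vec{P}$, so $\vec{\Pi}'\vec{P}' = \vec{\Pi}\vec{P} = \tilde{\vec{P}}$; moreover the partial label matrix of $\vec{\Pi}'$ is the column-swap of $\vec{S}$, which equals $\vec{S}$ precisely because columns $j$ and $k$ coincide. Hence $(\vec{\Pi}', \vec{P}')$ is admissible, and whenever $P_j \neq P_k$ it differs from $(\vec{\Pi}, \vec{P})$, so identifiability fails.

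The remaining, more delicate case is $P_j = P_k$, where the swap returns the original pair. Here I would instead redistribute mixing weight between coordinates $j$ and $k$. If some row $i$ has both $S_{i,j} = S_{i,k} = 1$, then $\pi_{i,j}, \pi_{i,k} > 0$, and I set $\pi'_{i,j} = \pi_{i,j} + \varepsilon$ and $\pi'_{i,k} = \pi_{i,k} - \varepsilon$ for small $\varepsilon > 0$, leaving every other entry of $\vec{\Pi}$ and all of $\vec{P}$ fixed. This keeps the $i$th row in $\si_L$, preserves strict positivity (hence the full partial label matrix $\vec{S}$), and, because $P_j = P_k$, leaves $\tilde{P}_i = \sum_{\ell} \pi'_{i,\ell} P_\ell$ unchanged. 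The resulting $\vec{\Pi}' \neq \vec{\Pi}$ gives a second admissible solution. If no such row exists, then columns $j$ and $k$ of $\vec{S}$ are identically zero, so $P_j$ enters no contaminated distribution and may be replaced by an arbitrary element of $\sP$ without altering either $\tilde{\vec{P}}$ or $\vec{S}$; uniqueness again fails.

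I expect the main obstacle to be the bookkeeping in the degenerate case $P_j = P_k$: verifying that the perturbed matrix stays in $\si_L^M$ (nonnegativity and unit row sums) while inducing exactly the partial label matrix $\vec{S}$, and confirming that the all-zero-column subcase is genuinely non-identifiable. The swap construction itself is routine once one observes that swapping a pair of identical columns is an invariance of $\vec{S}$.
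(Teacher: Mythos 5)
Your core construction is the same as the paper's: argue by contraposition, swap the two identical columns of $\vec{\Pi}$ and the corresponding entries of $\vec{P}$, and observe that the swapped pair still solves (\ref{model}) and is still consistent with $\vec{S}$ precisely because the two columns of $\vec{S}$ coincide. The difference is that the paper stops there, which leaves a small gap: if $P_j = P_k$ and the $j$th and $k$th columns of $\vec{\Pi}$ also coincide, the swap returns the \emph{same} pair $(\vec{\Pi}, \vec{P})$, so no violation of uniqueness has actually been exhibited. Your two supplementary constructions close exactly this gap, and both check out: when some row has $S_{i,j}=S_{i,k}=1$, shifting $\varepsilon$ of mass between $\pi_{i,j}$ and $\pi_{i,k}$ preserves the row sum, preserves strict positivity for $\varepsilon < \pi_{i,k}$ (hence consistency with $\vec{S}$), and leaves $\tilde{P}_i$ unchanged because $P_j = P_k$; and when no such row exists, the identical columns force $\vec{S}_{:,j} = \vec{S}_{:,k} = 0$, so $\pi_{i,j} = 0$ for all $i$ and $P_j$ can be replaced by any other distribution without affecting $\tilde{\vec{P}}$ or $\vec{S}$. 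So your proof follows the paper's route but is strictly more complete, since it verifies distinctness of the two exhibited solutions in all cases rather than only in the generic one.
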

\begin{proof}[Sketch]
Let $(\tilde{\vec{P}}, \vec{S})$ such that $\vec{S}$ does not satisfy \textbf{(C)}. Without loss of generality, suppose that the $\vec{S}_{:,1}$ and $\vec{S}_{:,2}$ are identical. Let $(\vec{\Pi}, \vec{P})$ satisfy (\ref{model}). Interchange the first two columns of $\vec{\Pi}$ to obtain $\vec{\Pi}^\prime$ and the first two rows of $\vec{P}$ to obtain $\vec{P}^\prime$. Then, ($\vec{\Pi}^\prime, \vec{P}^\prime)$ solves (\ref{model}), establishing that $(\tilde{\vec{P}}, \vec{S})$ is not identifiable.
\end{proof}
\vspace{-2em}
\section{Sufficient Conditions}
Now, we state the sufficient conditions for demixing a mixed membership model and decontaminating a partial label model. Consider the following property from \citet{blanchard2014}.
\begin{defn}
The distributions $\set{P_i}_{1 \leq i \leq L}$ are \emph{jointly irreducible} iff the following equivalent conditions hold
\begin{enumerate}
\item[(a)] It is not possible to write
\begin{equation*}
\sum_{i \in I} \epsilon_i P_i = \alpha\sum_{i \not \in I} \epsilon_i P_i + (1- \alpha) H
\end{equation*}
where $I \subset [L]$ such that $1 \leq |I| < L, \epsilon_i$ are such that $\epsilon_i \geq 0$ and $\sum_{i \in I} \epsilon_i = \sum_{i \not \in I} \epsilon_i = 1$, $\alpha \in (0,1]$ and $H$ is a distribution;
\item[(b)] $\sum_{i=1}^L \gamma_i P_i$ is a distribution implies that $\gamma_i \geq 0 \, \forall i$.
\end{enumerate}
\end{defn}
\noindent Conditions \emph{(a)} and \emph{(b)} give two ways to think about joint irreducibility. Condition \emph{(a)} says that every convex combination of a subset of the $P_i$s is irreducible with respect to every convex combination of the other $P_i$s. Condition \emph{(b)} says that if a distribution is in the span of $P_1, \ldots, P_L$, it is in their convex hull. Joint irreducibility holds when each $P_i$ has a region of positive probability that does not belong to the support of any of the other $P_i$s; thus, separability of $P_i$s entails joint irreducibility of $P_1,\ldots, P_L$. However, the converse is not true: the $P_i$s can have the same support and still be jointly irreducible (e.g., $P_i$s Gaussian with a common variance and distinct means \citep{scott2013}). 

Our two sufficient conditions for the mixed membership model are:
\begin{description}
\item[(A$'$)] $P_1, \ldots, P_L$ are jointly irreducible.
\item[(B$'$)] $\vec{\Pi}$ has full rank.
\end{description}
These conditions are consistent with requiring that a decontamination operator satisfy \textbf{(M)} and \textbf{(L)}, as indicated by the following Proposition.\footnote{We give an analogous result for the partial label model in the Appendix.}
\begin{prop}
\label{max_ji_consistency}
If a decontamination operator $\phi$ is such that $\phi(\tilde{\vec{P}}) = (\vec{\Pi}, \vec{P})$ only if $(\vec{\Pi}, \vec{P})$ satisfy \textbf{(A$'$)} and \textbf{(B$'$)}, then $\phi$ satisfies \textbf{(M)} and \textbf{(L)}. 
\end{prop}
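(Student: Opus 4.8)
The plan is to read off \textbf{(L)} and \textbf{(M)} from the two structural facts that \textbf{(A$'$)} and \textbf{(B$'$)} impose on the system $\tilde{\vec{P}} = \vec{\Pi}\vec{P}$, after first extracting two preliminary consequences of joint irreducibility. Using characterization \emph{(b)}, I would first argue that $P_1,\ldots,P_L$ are linearly independent: if $\sum_i c_i P_i = 0$ with some $c_i \neq 0$, then for every $\lambda$ the combination $\sum_i (\tfrac1L + \lambda c_i) P_i$ equals the distribution $\tfrac1L \sum_i P_i$, so \emph{(b)} forces $\tfrac1L + \lambda c_i \geq 0$ for all $i$ and all $\lambda$, which is impossible. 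Hence $\dim \spa(P_1,\ldots,P_L) = L$. Second, again by \emph{(b)}, any \emph{distribution} of the form $\sum_i \gamma_i P_i$ has $\gamma_i \geq 0$, and integrating shows $\sum_i \gamma_i = 1$; therefore every distribution lying in $\spa(P_1,\ldots,P_L)$ already lies in $\co(P_1,\ldots,P_L)$. This ``span-to-hull'' bridge is the device that will turn purely linear-algebraic containments into the convex containment that \textbf{(M)} requires.

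Next I would use \textbf{(B$'$)}. Reading ``full rank'' as full column rank $L$ (forced by the necessary condition \textbf{(B)}, $\rank(\vec{\Pi}) \geq \dim\spa(P_1,\ldots,P_L) = L$, and requiring $M \geq L$), the matrix $\vec{\Pi}$ admits a left inverse $\ed{\vec{\Pi}}$ with $\ed{\vec{\Pi}}\,\vec{\Pi} = \vec{I}_L$. Then $\vec{P} = \ed{\vec{\Pi}}\,\vec{\Pi}\,\vec{P} = \ed{\vec{\Pi}}\,\tilde{\vec{P}}$, so each $P_k = \sum_{i=1}^M (\ed{\vec{\Pi}})_{k,i}\,\p{i}$ is a linear combination of $\p{1},\ldots,\p{M}$. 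This gives $\set{P_1,\ldots,P_L} \subseteq \spa(\p{1},\ldots,\p{M})$, which is exactly \textbf{(L)}; note this step needs only \textbf{(B$'$)}.

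For \textbf{(M)} I would take an arbitrary competing solution $(\vec{\Pi}',\vec{P}')$ with $\vec{\Pi}' \in \Delta_L^M$ and $\vec{P}' = (P_1',\ldots,P_L')^T \in \sP^L$ solving (\ref{model}), and compose with the same left inverse: $\vec{P} = \ed{\vec{\Pi}}\,\tilde{\vec{P}} = \ed{\vec{\Pi}}\,\vec{\Pi}'\,\vec{P}'$, so each $P_j \in \spa(P_1',\ldots,P_L')$. Thus $\spa(P_1,\ldots,P_L) \subseteq \spa(P_1',\ldots,P_L')$; since the left-hand span has dimension $L$ by the linear independence established above, while the right-hand span is generated by the $L$ elements $P_1',\ldots,P_L'$, the two spans coincide. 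In particular each $P_k' \in \spa(P_1,\ldots,P_L)$, and since each $P_k'$ is a distribution, the span-to-hull bridge yields $P_k' \in \co(P_1,\ldots,P_L)$, which is precisely \textbf{(M)}.

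The hard part is not the linear algebra but isolating exactly where joint irreducibility (rather than mere linear independence) is indispensable: it is the span-to-hull step, which is what upgrades the equality of spans into membership in the convex hull and thereby rules out the ``spreading'' solutions described before Definition~1. I would also expect the interpretation of \textbf{(B$'$)} to require a careful word, since the argument depends on $\vec{\Pi}$ having a genuine left inverse (full column rank $L$, hence $M \geq L$); pairing that left inverse with the dimension count drawn from \textbf{(A$'$)} is the crux linking the two solutions.
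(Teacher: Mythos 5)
Your proposal is correct and follows essentially the same route as the paper's proof: both arguments rest on joint irreducibility giving linear independence of $P_1,\ldots,P_L$ together with the ``span-to-hull'' fact that any distribution lying in $\spa(P_1,\ldots,P_L)$ must lie in $\co(P_1,\ldots,P_L)$, combined with a full-rank dimension count that forces $\spa(P'_1,\ldots,P'_L) = \spa(P_1,\ldots,P_L)$ for any competing solution. The differences are cosmetic — you realize the span identities via an explicit left inverse $\ed{\vec{\Pi}}$ (which also neatly isolates that \textbf{(L)} uses only \textbf{(B$'$)}), whereas the paper selects $L$ linearly independent rows of $\vec{\Pi}$ and cites Lemma B.1 of \citet{blanchard2014} for linear independence, and you phrase \textbf{(M)} as a direct containment rather than a proof by contradiction.
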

\noindent By comparing \textbf{(A)} with \textbf{(A$'$)} and \textbf{(B)} with \textbf{(B$'$)}, we see that the proposed sufficient conditions are not much stronger than \textbf{(M)} and \textbf{(L)} require. Since joint irreducibility of $P_1, \ldots, P_L$ entails their linear independence by Lemma B.1 of \citet{blanchard2014}, under \textbf{(A$'$)}, \textbf{(B)} and \textbf{(B$'$)} are the same. \textbf{(A$'$)} differs from \textbf{(A)} in that it requires that a slightly larger set of distributions are irreducible with respect to convex combinations of the remaining distributions. Specifically, under \textbf{(A$'$)}, \emph{every convex combination} of a subset of the $P_i$s is irreducible with respect to every convex combination of the other $P_i$s whereas \textbf{(A)} only requires that every $P_i$ be irreducible with respect to every convex combination of the other $P_i$s. 

Our sufficient conditions for the partial label model include \textbf{(A$'$)}, \textbf{(B$'$)}, as well as \textbf{(C)}. Note that our sufficient and necessary conditions on $\vec{S}$ are identical. 

A couple of points are in order regarding how \textbf{(A$'$)} and \textbf{(B$'$)} relate to the sufficient conditions in \citet{blanchard2014}. \citet{blanchard2014} also assume \textbf{(A$'$)}. However, the so-called \emph{recoverability} assumption on $\vec{\Pi}$ in \citet{blanchard2014} is substantially stronger than \textbf{(B$'$)}. Recoverability says that $\vec{\Pi}^{-1}$ has a very specific structure: positive diagonal entries and nonpositive off-diagonal entries. To demix a mixed membership model, we are able to weaken the assumption on $\vec{\Pi}$ because the goal is to recover the base distributions only up to a permutation (as opposed to a specific permutation of the base distributions). Recall that in the classification problem in a partial label model, the goal is to recover $P_1, \ldots, P_L$ exactly. In comparison to \citet{blanchard2014}, we are able to weaken the assumption on $\vec{\Pi}$ because the partial label matrix $\vec{S}$ gives a considerable amount of useful information.
 
\section{Population Case}
In this section, to establish that the above conditions are indeed sufficient, we show that under these assumptions, mixed membership models and partial label models can be decontaminated in the population case. Henceforth, we assume that $P_1, \ldots, P_L$ satisfy are jointly irreducible, $\vec{\Pi}$ has full rank, and $M=L$. We give extensions to the non-square case in the Appendix. We begin by reviewing the necessary background.
\subsection{Background}
This paper relies on two important quantities from \citet{blanchard2010} and \citet{blanchard2014}, which we now review.
\begin{prop}
\label{label_noise_bin_case}
Given probability distributions $F_0, F_1$ on a measurable space $(\sX, \sC)$, define
\begin{align*}
\kappa^*(F_0 \, | \, F_1)  = \max \{ \kappa \in [0,1] | \, \exists \text{ a distribution } G \text{ s.t. } F_0 = (1-\kappa)G + \kappa F_1 \};
\end{align*}
If $F_0 \neq F_1$, then $\kappa^*(F_0 \, | \, F_1) < 1$ and the above supremum is attained for a unique distribution $G$ (which we refer to as the \emph{residue} of $F_0$ wrt. $F_1$). Furthermore, the following equivalent characterization holds:
\begin{equation*}
\kappa^*(F_0 \, | \, F_1) = \inf_{C \in \sC, F_1(C) > 0} \frac{F_0(C)}{F_1(C)}.
\end{equation*}
\end{prop}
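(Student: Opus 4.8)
The plan is to identify the feasible set of $\kappa$ explicitly and show it equals the closed interval $[0,\kappa_0]$, where $\kappa_0 := \inf_{C \in \sC,\, F_1(C) > 0} F_0(C)/F_1(C)$. This single computation simultaneously yields the value of $\kappa^*$, attainment of the maximum, uniqueness of the residue $G$, and the equivalent characterization.

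First I would reduce the existence of a distribution $G$ with $F_0 = (1-\kappa)G + \kappa F_1$ to a nonnegativity condition on a signed measure. For $\kappa \in [0,1)$, set $\nu_\kappa := F_0 - \kappa F_1$, a finite signed measure of total mass $1-\kappa > 0$. If such a $G$ exists, then $(1-\kappa)G = \nu_\kappa$ is a nonnegative measure; conversely, if $\nu_\kappa \geq 0$ as a measure, then $G := \nu_\kappa/(1-\kappa)$ is nonnegative with $G(\sX) = (1-\kappa)/(1-\kappa) = 1$, hence a probability distribution solving the equation. Thus feasibility of $\kappa \in [0,1)$ is \emph{exactly} the statement that $\nu_\kappa$ is a nonnegative measure.

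Second I would convert nonnegativity of $\nu_\kappa$ into a bound on $\kappa$. The analytic crux is that a finite signed measure is nonnegative iff it assigns a nonnegative value to every set in $\sC$ (by the Hahn--Jordan decomposition, evaluate on the negative set to force the negative part to vanish). Hence $\nu_\kappa \geq 0$ iff $F_0(C) \geq \kappa F_1(C)$ for all $C \in \sC$, which, separating the cases $F_1(C) > 0$ and $F_1(C) = 0$, is equivalent to $\kappa \leq \kappa_0$. Crucially, this also shows that $\kappa = \kappa_0$ is itself feasible even when no individual $C$ attains the infimum, since $F_0(C) \geq \kappa_0 F_1(C)$ holds setwise simply because each ratio is at least $\kappa_0$. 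Taking $C = \sX$ gives $\kappa_0 \leq 1$.

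Third I would treat the endpoint $\kappa = 1$ and extract the strict inequality. The value $\kappa = 1$ is feasible iff $F_0 = F_1$, so under $F_0 \neq F_1$ it is excluded. I would then show $\kappa_0 < 1$: if $\kappa_0 = 1$, then $F_0(C) \geq F_1(C)$ for every $C$, and applying this to complements forces $F_0(C) = F_1(C)$ for all $C$, i.e.\ $F_0 = F_1$, a contradiction. Combining the three steps, the feasible set is $[0,\kappa_0]$ with $\kappa_0 < 1$, so the maximum is attained, $\kappa^*(F_0 \mid F_1) = \kappa_0$, and this is precisely the claimed infimum formula. Since $\kappa^* < 1$, the defining equation pins down $G = (F_0 - \kappa^* F_1)/(1-\kappa^*)$ uniquely. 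I expect the main obstacle to be the measure-theoretic step: arguing carefully that setwise nonnegativity upgrades to nonnegativity of $\nu_\kappa$ as a genuine measure (so that $G$ is a bona fide probability distribution), and that the infimum value $\kappa_0$ is feasible despite possibly not being witnessed by any single set $C$.
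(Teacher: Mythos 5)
Your proposal is correct. Note that the paper does not prove this proposition itself: it is quoted as background material from \citet{blanchard2010} and \citet{blanchard2014}. Your argument --- characterizing feasibility of $\kappa$ by setwise nonnegativity of the signed measure $F_0 - \kappa F_1$, which simultaneously yields the interval $[0,\kappa_0]$ of feasible values, attainment at $\kappa_0 = \inf_{C \in \sC,\, F_1(C)>0} F_0(C)/F_1(C)$, the strict inequality $\kappa_0 < 1$ via the complementation argument, and uniqueness of $G = (F_0 - \kappa^* F_1)/(1-\kappa^*)$ --- is a complete and correct reconstruction of the standard proof given in that reference, with no gaps.
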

$\kappa^*(F_0 \, | \, F_1)$ can be thought of as the maximum possible proportion of $F_1$ in $F_0$. Note that $\kappa^*( F_0 \, | \, F_1) = 0$ iff $F_0$ is irreducible wrt $F_1$. We can think of $1-\kappa^*(F_0 \, | \, F_1)$ as a statistical distance since it it non-negative and equal to zero if and only if $F_0 = F_1$. We refer to this quantity as the two-sample $\kappa^*$ operator. To obtain the residue of $F_0$ wrt $F_1$, one computes Residue($F_0 \, | \, F_1$) (see Algorithm \ref{two_residue}); this is well-defined under Proposition \ref{label_noise_bin_case} when $F_0 \neq F_1$.

We now turn to the multi-sample generalization of $\kappa^*$ defined in \citet{blanchard2014}, which we call the multi-sample $\kappa^*$ operator.
\begin{defn}
Given distributions $F_0, \ldots, F_K$, define $\kappa^*(F_0 \, | \, F_1, \ldots, F_K) =$
\begin{align*}
\max(  \sum_{i=1}^K \nu_i :   \nu_i \geq 0, \sum_{i=1}^K \nu_i \leq 1, \exists \text{ distribution } G \text{ s.t. } F_0 = (1 - \sum_{i=1}^K \nu_i) G + \sum_{i=1}^K \nu_i F_i)
\end{align*}
\end{defn}
Under our sufficient conditions, there exists some $G$ attaining the above maximum. However, the $G$ is not necessarily unique. Any $G$ attaining the maximum is called a multi-sample residue of $F_0$ wrt $F_1, \ldots, F_K$. The algorithm Residue($F_0 \, | \, \set{F_1, \ldots, F_K}$) returns one of these $G$ (see Algorithm \ref{multi_residue}).

In previous work that assumes $P_i$ are probability vectors, distributions are compared using $l_p$ distances. In our setting of general probability spaces, we use $\kappa^*$ to compare different distributions. 

\subsection{Mixture Proportions}
If $\vec{\eta} \in \bbR^L$ and $Q = \vec{\eta}^T \vec{P}$, we say that $\vec{\eta}$ is the \emph{mixture proportion} of $Q$. Since by Lemma B.1 of \citet{blanchard2014}, joint irreducibility of $P_1,\ldots, P_L$ implies linear independence of $P_1,\ldots, P_L$, mixture proportions are well-defined.

An important feature of our solution strategy involves determining where the mixture proportions of distributions are in the simplex $\si_L$. To make this precise, we introduce the following definitions. If $i \in [L]$, we say that $\co(\set{\vec{e}_j : j \neq i})$ is a \emph{face} of the simplex $\si_L$; if $A \subset [L]$ and $|A| = k$, we also say that $\co(\set{\vec{e}_j : j \in A})$ is a \emph{$k$-face} of $\si_L$. If $\vec{\eta} \in \bbR^L$, $Q$ is a distribution, and $Q = \vec{\eta}^T \vec{P}$, we say that $\sS(\vec{\eta}) = \set{j : \eta_j > 0}$ is the \emph{support set} of $\vec{\eta}$ or the support set of $Q$. Note that in this setting, $\sS(\vec{\eta})$ consists of the indices of all the nonzero entries in the mixture proportion $\vec{\eta}$ by definition of joint irreducibility. Finally, for $\vec{\eta}_i \in \si_L$, and $Q_i = \vec{\eta}_i^T \vec{P}$ for $i =1,2$, we say that the distributions $Q_1$ and $Q_2$ (or the mixture proportions $\vec{\eta}_1$ and $\vec{\eta}_2$) are \emph{on the same face} of the simplex $\si_L$ if there exists $j \in [L]$ such that $\vec{\eta}_1, \vec{\eta}_2 \in \co (\set{\vec{e}_k : k \neq j})$. 

The heart of our approach is that under joint irreducibility, one can interchange distributions $Q_1, \ldots, Q_K$ and their mixture proportions $\vec{\eta}_1, \ldots, \vec{\eta}_K$, as indicated by the following Proposition. 
\begin{prop}
\label{equiv_opt}
Let $Q_i = \vec{\eta}_i^T \vec{P}$ for $i \in [L]$ and $\vec{\eta}_i \in \si_L$. If $\vec{\eta}_1, \ldots, \vec{\eta}_L$ are linearly independent and $P_1, \ldots, P_L$ are jointly irreducible, then for any $i \in [L]$ and $A \subseteq [L] \setminus \set{i}$, $\kappa^*(Q_i \, | \, \set{Q_j : j \in  A}) =  \kappa^*(\vec{\eta}_i \, | \, \set{\vec{\eta}_j : j \in  A})<1$. Further, $\vec{\gamma} \in \si_L$ is a residue of $\vec{\eta}_i$ wrt $\set{\vec{\eta}_j : j \in  A}$ if and only if $G = \vec{\gamma}^T \vec{P}$ is a residue of $Q_i$ wrt $\set{Q_j : j \in  A}$.
\end{prop}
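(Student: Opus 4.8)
The plan is to exploit the fact that, under joint irreducibility, the linear map $\vec{\gamma} \mapsto \vec{\gamma}^T \vec{P}$ identifies the simplex $\si_L$ with the set of distributions lying in $\spa(P_1, \ldots, P_L)$, so that the two maximization problems defining $\kappa^*(Q_i \mid \set{Q_j : j \in A})$ and $\kappa^*(\vec{\eta}_i \mid \set{\vec{\eta}_j : j \in A})$ become literally the same problem in the variables $\nu$. First I would establish this identification precisely. Since joint irreducibility implies linear independence of $P_1, \ldots, P_L$ (Lemma B.1 of \citet{blanchard2014}), mixture proportions are unique, so the map is injective. If $\vec{\gamma} \in \si_L$ then $\vec{\gamma}^T \vec{P}$ is a convex combination of distributions, hence a distribution; conversely, if $G$ is a distribution lying in $\spa(P_1, \ldots, P_L)$, write $G = \vec{\gamma}^T \vec{P}$, note $\sum_k \gamma_k = 1$ because $G$ and each $P_k$ integrate to one, and invoke condition (b) of joint irreducibility to force $\gamma_k \geq 0$ for all $k$, i.e., $\vec{\gamma} \in \si_L$. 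This yields the desired bijection between $\si_L$ and the distributions in $\spa(P_1, \ldots, P_L)$.

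Next I would compare the feasible sets of the two problems. Fix $\nu = (\nu_j)_{j \in A}$ with $\nu_j \geq 0$ and $\sum_j \nu_j \leq 1$. On the $Q$ side, feasibility requires a distribution $G$ with $Q_i = (1 - \sum_j \nu_j) G + \sum_{j \in A} \nu_j Q_j$. The key observation is that any such $G$ is automatically confined to $\spa(P_1, \ldots, P_L)$: rearranging gives $(1 - \sum_j \nu_j) G = (\vec{\eta}_i - \sum_{j\in A} \nu_j \vec{\eta}_j)^T \vec{P}$, so once $\sum_j \nu_j < 1$ we may solve $G = \vec{\gamma}^T \vec{P}$ with $\vec{\gamma} = (\vec{\eta}_i - \sum_{j \in A} \nu_j \vec{\eta}_j)/(1 - \sum_j \nu_j)$. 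By the bijection above, such a $G$ is a distribution if and only if $\vec{\gamma} \in \si_L$, which is exactly the feasibility condition for the same $\nu$ on the $\vec{\eta}$ side; applying $\cdot^T \vec{P}$ to the $\vec{\eta}$-side equation gives the reverse implication. Hence, modulo the boundary case $\sum_j \nu_j = 1$, the two feasible sets of $\nu$ coincide and the candidate residues correspond through $G = \vec{\gamma}^T \vec{P}$.

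To close the gap at $\sum_j \nu_j = 1$ and simultaneously prove $\kappa^* < 1$, I would argue that $\sum_j \nu_j = 1$ is infeasible on both sides: it would force $\vec{\eta}_i = \sum_{j \in A} \nu_j \vec{\eta}_j$ (directly on the $\vec{\eta}$ side, and on the $Q$ side after cancelling $\vec{P}$ via linear independence of $P_1, \ldots, P_L$), contradicting the assumed linear independence of $\vec{\eta}_1, \ldots, \vec{\eta}_L$ since $i \notin A$. Thus every feasible $\nu$ satisfies $\sum_j \nu_j < 1$, the two problems share an identical feasible set and objective $\sum_j \nu_j$, and therefore the same optimal value (strictly below $1$) and, through $G = \vec{\gamma}^T \vec{P}$, the same set of maximizers; this is exactly the claimed equality of $\kappa^*$ values together with the residue correspondence. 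I expect the main obstacle to be the forward direction of the feasible-set comparison: one must recognize that the residue $G$, although a priori an arbitrary element of $\sP$, is forced into $\spa(P_1, \ldots, P_L)$, and then translate ``$G$ is a probability distribution'' into ``$\vec{\gamma} \in \si_L$'' through condition (b) of joint irreducibility rather than by naive coordinatewise reasoning.
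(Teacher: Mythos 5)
Your proposal is correct and takes essentially the same route as the paper's proof: both use joint irreducibility (condition (b)) together with linear independence of $P_1,\ldots,P_L$ to identify feasible pairs $(\nu, G)$ on the distribution side with feasible pairs $(\nu,\vec{\gamma})$ on the mixture-proportion side, and both rule out $\sum_{j\in A}\nu_j = 1$ via linear independence (of the $\vec{\eta}_j$, equivalently of the $Q_j$). The only step you gloss over is attainment of the maximum: concluding $\kappa^* < 1$ from ``every feasible $\nu$ has $\sum_{j\in A}\nu_j < 1$'' (and speaking of ``the same set of maximizers'') requires knowing the supremum is achieved, which follows from compactness of the feasible set (or, as the paper does, by invoking Lemma A.1 of \citet{blanchard2014}).
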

In words, this proposition says that the optimization problem given by $\kappa^*(Q_i \, | \, \set{Q_j : j \in  A})$ is equivalent to the optimization problem given by $\kappa^*(\vec{\eta}_i \, | \, \set{\vec{\eta}_j : j \in  A})$. Thus, joint irreducibility of $P_1, \ldots, P_L$ and linear independence of the mixture proportions ensure that we can reduce the decontamination problem of a mutual contamination model to the geometric problem of recovering the vertices of a simplex by applying $\kappa^*$ to points (i.e., the mixture proportions) in the simplex. This makes the figures below valid for general distributions (see Figures \ref{2_case}, \ref{3_case}, and \ref{partial_label_figure}).
\subsection{The Demix Algorithm}
In essence, the Demix algorithm (see Algorithm \ref{demix_alg}) computes the inverse of $\vec{\Pi}$ in a sequential fashion using $\kappa^*$ (recall we are assuming $M=L$). It is a recursive algorithm. Let $S_1, \ldots, S_K$ denote $K$ contaminated distributions. In the base case, the algorithm takes as its input two contaminated distributions $S_1$ and $S_2$. It returns Residue($S_1 \, | \, S_2$) and Residue($S_2 \, | \, S_1$), which are a permutation of the two base distributions (see Figure \ref{2_case}). When $K > 2$, Demix finds $K-1$ distributions $R_2,\ldots, R_K$ on the same $(K-1)$-face using a subroutine FaceTest (see Algorithm \ref{face_test_alg}) and recursively applies Demix to $R_2,\ldots, R_K$ to obtain distributions $Q_1,\ldots, Q_{K-1}$. $Q_1, \ldots, Q_{K-1}$ are a permutation of $K-1$ of the base distributions. Subsequently, the algorithm computes a sequence of residues using $Q_1, \ldots, Q_{K-1}$ and $\frac{1}{K} \sum_{i=1}^K S_i$ to obtain $Q_K$, the remaining base distribution (see Figure \ref{3_case} for an execution of the algorithm).
\begin{algorithm}
\caption{Demix($S_1, \ldots, S_K$)}
\textbf{Input: }$S_1, \ldots, S_K$ are distributions
\label{demix_alg}
\begin{algorithmic}[1]
\IF{$K = 2$}
\STATE $Q_1 \longleftarrow$ Residue($S_1 \, | \, S_2$)
\STATE $Q_2 \longleftarrow$ Residue($S_2 \, | \, S_1$)
\RETURN $(Q_1, Q_2)^T$
\ELSE
\STATE $Q \longleftarrow \text{ unif distributed element in} \co (S_2,\ldots,S_K)$
\STATE $n \longleftarrow 1$
\STATE $T \longleftarrow 0$
\WHILE{$T == 0$}
\STATE $n \longleftarrow n + 1$
\FOR{$i=2,\ldots, K$}
\STATE $R_i \longleftarrow \text{Residue}(\frac{1}{n} S_i + \frac{n-1}{n} Q \, | \, S_1)$
\ENDFOR
\STATE $T \longleftarrow \text{FaceTest}(R_2,\ldots,R_K)$	
\ENDWHILE
\STATE $(Q_1, \ldots, Q_{K-1})^T \longleftarrow \text{Demix}(R_2,\ldots,R_K)$
\STATE $Q_K \longleftarrow \frac{1}{K} \sum_{i=1}^K S_i $ 
\FOR{$i = 1, \ldots, K-1$} \label{last_loop_start_demix}
\STATE $Q_K \longleftarrow \text{Residue}(Q_K \, | \, Q_i)$
\ENDFOR \label{last_loop_end_demix}
\RETURN $(Q_1, \ldots, Q_K)^T$
\ENDIF
\end{algorithmic}
\end{algorithm}
A number of remarks are in order. First, although we compute the residue of $\frac{1}{n} S_i + \frac{n-1}{n} Q$ wrt $S_1$ for each $i \neq 1$, there is nothing special about the distribution $S_1$. We could replace $S_1$ with any $S_j$ where $j \in [K]$, provided that we adjust the rest of the algorithm accordingly. Second, we can replace the sequence $\set{\frac{n-1}{n}}_{n=1}^\infty$ with any sequence $\alpha_n \nearrow 1$. Third, $Q$ is sampled randomly to ensure w.p. 1 it does not lie on a union of $k-2$ dimensional affine subspaces (a zero measure set) on which the algorithm fails. Specifically, under the assumptions of Theorem \ref{demix_identification}, w.p. $1$, the residue of $Q$ wrt $S_1$ lies on the interior of one of the $(K-1)$-faces, so if we pick distributions close enough to $Q$, their residues wrt $S_1$ lie on the same face.
\begin{thm}
\label{demix_identification}
Let $P_1, \ldots, P_L$ be jointly irreducible, $\vec{\pi}_1, \ldots, \vec{\pi}_L \in \si_L$ be linearly independent, and $\p{i} = \vec{\pi}_i^T \vec{P}$ for $i \in [L]$. Then, with probability $1$, Demix$(\p{1}, \ldots, \p{L})$ terminates and returns a permutation of  $(P_1, \ldots, P_L)^T$.
\end{thm}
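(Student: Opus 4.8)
The plan is to lift the entire execution of Demix out of the space of distributions and into the simplex $\si_L$, using Proposition \ref{equiv_opt}, and then to reason about the geometry of the mixture proportions directly. Write $\vec{\pi}_i$ for the mixture proportion of $\p{i}$. Proposition \ref{equiv_opt} guarantees that every call to Residue on distributions returns the distribution whose mixture proportion is the Residue computed on the corresponding points $\vec{\pi}_i$, so it is enough to track the points. I will therefore prove, by induction on $K$, the statement: for every $A \subseteq [L]$ with $|A| = K \ge 2$ such that $\set{P_j : j \in A}$ is jointly irreducible, and every linearly independent collection $\vec{v}_1,\dots,\vec{v}_K \in \si_L$ with support sets contained in $A$, running Demix on the distributions $\vec{v}_i^T \vec{P}$ returns a permutation of $\set{P_j : j \in A}$ with probability $1$. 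Two bookkeeping facts let the invariant descend into the recursive call: a sub-collection of jointly irreducible distributions is again jointly irreducible (immediate from characterization (b), padding coefficients with zeros), and, since all points of $\si_L$ lie in the affine hyperplane $\sum_k x_k = 1$, linear independence of mixture proportions is equivalent to affine independence.

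The base case $K=2$ is a direct computation. As $\vec{v}_1 \neq \vec{v}_2$ by linear independence, they are distinct points of the edge $\co(\vec{e}_a,\vec{e}_b)$ with $\set{a,b}=A$, and using $\kappa^*(\vec{x}\,|\,\vec{v})=\min_k x_k/v_k$ one checks that Residue$(\vec{v}_1\,|\,\vec{v}_2)$ and Residue$(\vec{v}_2\,|\,\vec{v}_1)$ are precisely the two endpoints $\vec{e}_a,\vec{e}_b$; by Proposition \ref{equiv_opt} these correspond to $P_a,P_b$, recovered up to order. For the inductive step $K>2$, I first analyze the while loop. Let $\vec{\eta}_Q$ be the mixture proportion of the random $Q\in\co(S_2,\dots,S_K)$; it is an absolutely continuous random point of $\co(\vec{v}_2,\dots,\vec{v}_K)$ and so avoids any fixed measure-zero set a.s. The Residue of a point wrt $\vec{v}_1$ is its central projection away from $\vec{v}_1$ onto the face $\set{x_m=0}$, where $m$ minimizes $x_k/v_{1,k}$; Proposition \ref{equiv_opt} gives $\kappa^*(\vec{\eta}_Q\,|\,\vec{v}_1)<1$, so the projection does not blow up. For a.e.\ $\vec{\eta}_Q$ the minimizer $m$ is unique, placing Residue$(\vec{\eta}_Q\,|\,\vec{v}_1)$ in the relative interior of a single $(K-1)$-face; since each $R_i$ is the Residue of $\tfrac1n\vec{v}_i+\tfrac{n-1}{n}\vec{\eta}_Q \to \vec{\eta}_Q$, continuity of the projection forces $R_2,\dots,R_K$ onto that same face once $n$ is large, so FaceTest eventually returns $1$ and the loop terminates a.s.

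The heart of the argument, and the step I expect to be the main obstacle, is showing that $R_2,\dots,R_K$ again have linearly independent mixture proportions, so that the recursive call is governed by the same invariant. I will use that the points $\tfrac1n\vec{v}_i+\tfrac{n-1}{n}\vec{\eta}_Q$, indexed by $i=2,\dots,K$, are affinely independent: their pairwise differences are proportional to $\vec{v}_i-\vec{v}_{i'}$, and $\vec{v}_2,\dots,\vec{v}_K$ are affinely independent because $\vec{v}_1,\dots,\vec{v}_K$ are linearly independent. Central projection from $\vec{v}_1$ preserves affine independence provided $\vec{v}_1$ does not lie in the affine hull of these points, and for fixed $\vec{v}_i$ this hull meets $\vec{v}_1$ only on a measure-zero set of $\vec{\eta}_Q$, avoided a.s. Hence $R_2,\dots,R_K$ are affinely independent, i.e.\ have linearly independent mixture proportions, and they are supported on the $K-1$ indices $A\setminus\set{m}$; since $\set{P_j : j\in A\setminus\set{m}}$ is jointly irreducible, the inductive hypothesis applies and the recursive call returns those $K-1$ base distributions up to permutation.

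It remains to recover the missing base distribution $P_m$. Writing the recovered $Q_1,\dots,Q_{K-1}$ as the vertices $\set{\vec{e}_j : j\neq m}$, the final loop computes the Residue of $\tfrac1K\sum_i\vec{v}_i$ successively against each $\vec{e}_j$; because Residue against a vertex $\vec{e}_j$ merely zeroes the $j$-th coordinate and renormalizes, after eliminating all $K-1$ indices $j\neq m$ the result is $\vec{e}_m$, i.e.\ $P_m$. This is well defined since the centroid has strictly positive $m$-th coordinate: otherwise every $\vec{v}_i$ would lie in $\set{x_m=0}$, contradicting linear independence. Combining the recursion output with $Q_K=\vec{e}_m$ yields all $K$ vertices, hence (via Proposition \ref{equiv_opt}) a permutation of $\set{P_j : j\in A}$. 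Finally, the finitely many a.s.\ events across the at most $L-1$ levels of recursion intersect to a probability-$1$ event, which completes the induction and, taking $A=[L]$, $K=L$, proves the theorem.
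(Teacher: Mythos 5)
Your proposal follows the same route as the paper's proof: lift the execution to the simplex via Proposition \ref{equiv_opt}, induct on the number of distributions, use genericity of $Q$ plus continuity of the residue to force the while loop to terminate on a common facet, preserve linear independence into the recursive call, and peel off the last vertex with successive residues against the recovered vertices. Two of your refinements are genuinely nicer than the paper's versions: the endgame (residues against vertices just zero a coordinate and renormalize, and positivity of the centroid's $m$-th coordinate follows from linear independence alone) replaces Lemma \ref{mod_ident_lemma}, and the central-projection-preserves-affine-independence argument is a clean substitute for Lemma \ref{pres_lin_ind}.

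The one genuine gap is the assertion that ``for a.e.\ $\vec{\eta}_Q$ the minimizer $m$ is unique.'' Your justification --- an absolutely continuous point avoids any fixed measure-zero set --- presupposes exactly what must be proved: that the tie locus $\set{\vec{x} : x_k/v_{1,k} = x_{k'}/v_{1,k'}}$ has measure zero inside $C = \co(\vec{v}_2, \dots, \vec{v}_K)$. This is not automatic, since the tie locus is the intersection of $C$ with a hyperplane that always contains $\vec{v}_1$, and it is null in $C$ only if that hyperplane fails to contain $C$. Ruling out containment requires the linear independence of the \emph{whole} collection $\vec{v}_1, \dots, \vec{v}_K$: if $\vec{v}_2, \dots, \vec{v}_K$ all lay in the tie hyperplane, then so would $\spa(\vec{v}_1,\dots,\vec{v}_K) = \spa(\set{\vec{e}_j : j \in A})$, and plugging in $\vec{e}_k, \vec{e}_{k'}$ forces $v_{1,k} = v_{1,k'} = 0$. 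This is precisely the paper's Step 1, carried out there by showing $\co(\vec{\pi}_2,\dots,\vec{\pi}_L) \cap \co(\set{\vec{\pi}_1}\cup\set{\vec{e}_k : k \neq i,j})$ has affine dimension at most $L-3$. Relatedly, since your invariant (and the theorem itself) permits $\vec{v}_1$ to have zero coordinates in $A$, uniqueness of the minimizer over $\set{k : v_{1,k} > 0}$ does not by itself place the residue in the relative interior of a facet; you also need $\eta_{Q,k} > 0$ a.s.\ at every $k \in A$ with $v_{1,k} = 0$. Both points are true and provable with the tools you set up, but as written the key genericity step --- the only place where linear independence of $\vec{v}_1$ together with the others must actually be invoked for the loop analysis --- is asserted rather than proved.
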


\subsection{The Partial Label Algorithm}
\begin{algorithm}
\caption{PartialLabel($\vec{S}, (\p{1}, \ldots, \p{L})^T$)}
\begin{algorithmic}[1]
\label{partial_label_alg}
\FOR{$i = 1, \ldots, L$}
\STATE $Q_i \longleftarrow \text{ uniformly random distribution in } \co(\p{1}, \ldots, \p{L})$ 
\STATE $W_i \longleftarrow Q_i$
\ENDFOR
\STATE $k = 2$\\
\STATE $\text{FoundVertices } \longleftarrow 0$
\WHILE{$\text{FoundVertices } == 0$}
\FOR{$i = 1, \ldots, L$}
\STATE $\bar{Q}_i \longleftarrow \frac{1}{L-1} [\sum_{j > i} Q_j + \sum_{j < i} W_j]$
\STATE $W_i \longleftarrow \text{Residue}(\frac{1}{k} Q_i + (1- \frac{1}{k}) \bar{Q}_i \, | \,   \set{Q_j }_{j > i} \cup \set{W_j}_{j < i})$
\ENDFOR
\STATE $k = k + 1$\\
\STATE $(\text{FoundVertices, } \vec{C}) \longleftarrow \text{VertexTest}(\vec{S}, \p{1}, \ldots, \p{L}, W_1, \ldots, W_L)$
\ENDWHILE
\RETURN $\vec{C} (W_1, \ldots, W_L)^T$
\end{algorithmic}
\end{algorithm}
The PartialLabel algorithm (see Algorithm \ref{partial_label_alg}) proceeds by iteratively finding sets of candidate distributions $(W_1, \ldots, W_L)^T$ for increasing values of $k$. Given each $(W_1, \ldots, W_L)^T$, it runs an algorithm VertexTest (see Algorithm \ref{vertex_test_alg}) that uses $\p{1}, \ldots, \p{L}$ and the partial label matrix $\vec{S}$ to determine whether $(W_1, \ldots, W_L)^T$ is a permutation of the base distributions $(P_1, \ldots, P_L)^T$. If $(W_1, \ldots, W_L)^T$ is a permutation of $(P_1, \ldots, P_L)^T$, VertexTest constructs the corresponding permutation matrix for relating these distributions. If not, it returns a value indicating that the PartialLabel algorithm increment $k$ to find another candidate set of distributions. 
\begin{thm}
\label{partial_identication}
Suppose that $P_1, \ldots, P_L$ satisfy \textbf{(A$'$)}, $\vec{\Pi}$ satisfies \textbf{(B$'$)}, and $\vec{S}$ satisfies \textbf{(C)}. If $(R_1, \ldots, R_L)^T \longleftarrow \text{PartialLabel}(\vec{S}, (\p{1}, \ldots, \p{L})^T)$, then $R_i = P_i$ for all $i \in [L]$. 
\end{thm}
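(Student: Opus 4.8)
The plan is to analyze the algorithm in two stages. First I would show that the candidate vector $(W_1, \ldots, W_L)^T$ produced in the \textbf{while} loop becomes, at some finite value of $k$, a permutation of the base distributions $(P_1, \ldots, P_L)^T$; second I would show that VertexTest, using $\vec{S}$ and condition \textbf{(C)}, correctly detects this event and returns the unique permutation matrix $\vec{C}$ that undoes the permutation, so that $\vec{C}(W_1, \ldots, W_L)^T = (P_1, \ldots, P_L)^T$. The whole argument is carried out in the simplex: since $\vec{\Pi}$ has full rank (\textbf{B$'$}), the $\vec{\pi}_i$ are linearly independent and every $Q_i$ drawn in $\co(\p{1}, \ldots, \p{L})$ has a well-defined mixture proportion $\vec{\eta}_i \in \co(\vec{\pi}_1, \ldots, \vec{\pi}_L) \subseteq \si_L$. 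By Proposition \ref{equiv_opt} (applied to the successively constructed, linearly independent families of mixture proportions), each Residue computation on distributions corresponds exactly to the same Residue computation on mixture proportions, and a candidate $W_m$ equals a base distribution iff its mixture proportion is a vertex $\vec{e}_j$ of $\si_L$. Thus it suffices to prove the stated behavior for points in $\si_L$.

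For the first stage, the key geometric observation is that a Residue taken with respect to $L-1$ points is generically a vertex of $\si_L$. Writing the computation of $W_i$ as the linear program maximizing $\sum_j \nu_j$ subject to $\vec{\mu}_i - \sum_j \nu_j \vec{v}_j \geq 0$ (where $\vec{\mu}_i$ is the mixture proportion of the input $\frac{1}{k}Q_i + (1-\frac1k)\bar{Q}_i$ and $\vec{v}_j$ ranges over the $L-1$ subtracted points), the optimum is attained at a vertex of the feasible region; when all $\nu_j > 0$ there, exactly $L-1$ of the constraints $\gamma_l \geq 0$ are active, forcing the residue to have $L-1$ zero coordinates, i.e.\ to be a vertex of $\si_L$. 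The role of increasing $k$ is precisely to guarantee $\nu_j > 0$ for all $j$: as $k$ grows the input $\vec{\mu}_i$ approaches the centroid $\bar{Q}_i = \frac{1}{L-1}\sum_j \vec{v}_j$, at which the optimal weights tend to $\frac{1}{L-1} > 0$, so there is a finite threshold past which the residue is a genuine vertex. I would then establish distinctness by induction on $i$: subtracting a vertex $\vec{e}_{\sigma(j)}$ (for $j<i$, already shown to be a vertex) drives the corresponding coordinate of the residue to $0$, so the computation of $W_i$ reduces to a Residue with respect to the $L-i$ points $\set{Q_{i+1}, \ldots, Q_L}$ inside the sub-simplex spanned by the as-yet-unused vertices, and the same dimension count places $W_i$ at one of the remaining vertices. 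Hence for all sufficiently large finite $k$, $(W_1, \ldots, W_L)^T$ is a permutation of the vertices, i.e.\ of $(P_1, \ldots, P_L)^T$, and the random draw of the $Q_i$ ensures the required general position (avoidance of the measure-zero sets on which the LP optimum is degenerate) with probability $1$.

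For the second stage, I would analyze VertexTest. When $W_m = P_{\tau(m)}$ for a permutation $\tau$, joint irreducibility (\textbf{A$'$}) gives $\kappa^*(\p{i} \mid W_m) = \pi_{i,\tau(m)}$, so the binary ``signature'' $\big(\ind{\kappa^*(\p{i} \mid W_m) > 0}\big)_{i=1}^L$ equals exactly the $\tau(m)$-th column of $\vec{S}$. Condition \textbf{(C)} says the columns of $\vec{S}$ are pairwise distinct, so the $L$ signatures form a permutation of the columns of $\vec{S}$ and this matching is unique; VertexTest reads off $\tau$ from it, sets $\vec{C}$ to the permutation matrix sending $W_m$ to coordinate $\tau(m)$, and returns FoundVertices $=1$. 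This yields completeness and the exactness $\vec{C}(W_1,\ldots,W_L)^T = (P_1,\ldots,P_L)^T$; soundness (not terminating on a non-vertex configuration) follows because, by the first stage, the configuration tested at the terminating $k$ is w.p.\ $1$ a permutation of vertices, while any $W_m$ on a higher-dimensional face produces a signature $\big(\ind{\sS(\vec{w}_m) \subseteq \sS(\vec{\pi}_i)}\big)_i$ that, by \textbf{(C)} and joint irreducibility, cannot coincide with a full set of distinct columns of $\vec{S}$.

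The main obstacle is the first stage: the geometric claim that the nested residue recursion lands on $L$ \emph{distinct} vertices at some finite $k$. The subtle points are (i) ruling out degenerate LP optima (residues on higher faces) via the random initialization together with the monotone effect of increasing $k$, and (ii) making rigorous the induction that reduces each step to a lower-dimensional sub-simplex, including verifying that Proposition \ref{equiv_opt} continues to apply to the mixed families $\set{Q_j : j>i} \cup \set{W_j : j<i}$ that arise. By comparison, the VertexTest analysis is routine once the $\kappa^*$–column identity and condition \textbf{(C)} are in hand.
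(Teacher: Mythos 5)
Your overall architecture --- reduce to simplex geometry via Proposition \ref{equiv_opt}, show the candidates $(W_1^{(k)},\ldots,W_L^{(k)})$ become a permutation of the vertices for all large finite $k$, then invoke correctness of VertexTest --- is the same as the paper's, but both stages have genuine gaps. In stage one, your LP argument only shows that an optimum attained at an \emph{extreme point} of the feasible polytope with all $\nu_j>0$ yields a residue that is a vertex of $\si_L$. The multi-sample Residue routine (Algorithm \ref{multi_residue}), however, returns an \emph{arbitrary} maximizer. If the optimal face has positive dimension, its extreme points necessarily yield \emph{distinct} vertices $\vec{e}_a\neq\vec{e}_b$ (two extreme optima with the same vertex residue would force $\sum_j(\nu_j^1-\nu_j^2)\vec{v}_j=0$, contradicting linear independence of the subtracted proportions), and since the residue map is affine on that face, non-extreme maximizers yield residues that are not vertices at all; the algorithm may return one of those. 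So you need \emph{uniqueness} of the maximizer, i.e., you must rule out $\vec{e}_a-\vec{e}_b\in\spa(\vec{v}_2,\ldots,\vec{v}_L)$; this is exactly the step the paper carries out with Lemma \ref{random_lin_indep}, and it is the step you flag as ``the main obstacle'' without resolving. Likewise, your assertion that the optimal weights ``tend to $\frac{1}{L-1}$'' as $k$ grows is a continuity claim about LP argmax correspondences (upper hemicontinuity plus uniqueness at the limit) that you never justify. The paper avoids both issues with a self-contained construction: the open halfspace $\vec{H}$ bounded by $\aff(\vec{\tau}_2,\ldots,\vec{\tau}_L)$, the uniquely minimal $\kappa_j$ among vertices in $\vec{H}$, and Lemma \ref{single}, which certifies that vertex as \emph{the} residue.

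Stage two is not an analysis of the algorithm the theorem is about. Your identity $\kappa^*(\p{i}\,|\,W_m)=\pi_{i,\tau(m)}$ is correct, and it shows the signature $(\ind{\kappa^*(\p{i}\,|\,W_m)>0})_{i\in[L]}$ equals column $\tau(m)$ of $\vec{S}$, so under \textbf{(C)} the permutation is determined \emph{in principle} by direct signature-to-column matching. But VertexTest performs no such matching: it sets $\vec{C}_{m,:}$ to the componentwise minimum of the rows $\vec{S}_{j,:}$ over $j$ in the signature's support --- a vector that in general has several $1$s, not only in position $\tau(m)$ --- and then runs the iterative elimination in the loop at line \ref{k_loop_C}. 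Proving that this elimination terminates in the correct permutation matrix is Claim 2 of Lemma \ref{vertex_test} and needs Lemma \ref{find_set} (every binary matrix with pairwise distinct columns has a set of rows whose minimum is a standard basis vector); proving that it can never return FoundVertices $=1$ on a non-permutation candidate --- which matters because PartialLabel tests a candidate at every $k$ below the good threshold and stops at the first $1$ --- is the separate ``only if'' direction, and it relies on the pairwise $\kappa^*(Q_i\,|\,Q_j)$ pre-check in VertexTest's first loop, a mechanism your sketch never engages. Declaring this part ``routine'' leaves the theorem, which is a statement about this specific algorithm, unproven.
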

\begin{proof}[Sketch]
We adopt the notation from the description of the algorithm with the exception that we make explicit the dependence on $k$ by writing $W_i^{(k)}$ instead of $W_i$ and $\bar{Q}^{(k)}_i$ instead of $\bar{Q}_i$. In this proof sketch, we  only show that there is a $K$ such that for all $k \geq K$, $(S^{(k)}_1, \ldots, S^{(k)}_L)^T$ is a permutation of $(P_1, \ldots, P_L)^T$. 

Let $Q_i = \vec{\tau}^T_i \vec{P}$, $\bar{Q}^{(k)}_i = \vec{\tau}^{{(k)}^T} \vec{P}$, and  $W_i^{(k)} = \vec{\gamma}_i^{{(k)}^T} \vec{P}$. We prove the claim inductively. First, we prove the base case: there is large enough $k$ such that $W_1^{(k)} = P_i$ for some $i \in [L]$. By Lemma \ref{random_lin_indep}, $\vec{\tau}_1, \ldots, \vec{\tau}_L$ are linearly independent. Therefore, $\aff(\vec{\tau}_2, \ldots, \vec{\tau}_L)$ gives a hyperplane with an associated open halfspace $\vec{H}$ that contains $\vec{\tau}_1$ and at least one $\vec{e}_j$. We can pick $k$ large enough such that for all $\vec{e}_j \in \vec{H}$, $\lambda_k \equiv \frac{1}{k} \vec{\tau}_1 + \frac{k-1}{k} \vec{\tau}^{(k)}_1 \in \co(\vec{e}_j, \vec{\tau}_2, \ldots, \vec{\tau}_L)$. Then, for all $\vec{e}_j \in \vec{H}$, there exists $\kappa_j > 0$ such that 
\begin{align*}
\vec{\lambda}_k & = \kappa_j \vec{e}_j + (1- \kappa_j) \tilde{\tau}_j
\end{align*}
for some $\tilde{\tau}_j \in \co(\vec{\tau}_2, \ldots, \vec{\tau}_L)$. Suppose that there is $\vec{e}_i, \vec{e}_j \in \vec{H}$ such that $\kappa_i = \kappa_j$. Then, it can be shown that $\vec{e}_j - \vec{e}_i, \vec{\tau}_2, \ldots, \vec{\tau}_L$ are linearly dependent, which cannot happen by the randomness of $\vec{\tau}_2, \ldots, \vec{\tau}_L$ (Lemma \ref{random_lin_indep}). Therefore, there is a unique minimum $\kappa_j$ that satisfies the above relation. By Lemma \ref{single}, $\vec{e}_j$ is the residue of $\vec{\lambda}_k$ wrt $\set{\vec{\tau}_2, \ldots, \vec{\tau}_L}$. By Proposition \ref{equiv_opt}, it follows that $W_1^{(k)}$ is one of the base distributions.
\end{proof}
The VertexTest algorithm proceeds as follows on a vector of candidate distributions $(Q_1, \ldots, Q_L)^T$. First, it determines whether there are two distinct distributions $Q_i, Q_j$ such that $Q_i$ is not irreducible wrt $Q_j$ (see Section \ref{nec_cond_sec} for definition of irreducibility), in which case $(Q_1, \ldots, Q_L)^T$ cannot be a permutation of $(P_1, \ldots, P_L)^T$. If there is such a pair, it returns a value indicating that $(Q_1, \ldots, Q_L)^T$ is not a permutation of $(W_1, \ldots, W_L)^T$. Otherwise, it finds for each $Q_i$ the set of $\p{j}$ such that $\p{j}$ and $Q_i$ lie on the same face of the simplex. Finally, using $\vec{S}$, it iteratively determines for each $Q_i$ whether there is some $P_j$ such that $P_j = Q_i$. If $(Q_1, \ldots, Q_L)^T$ is a permutation of $(P_1, \ldots, P_L)^T$, this procedure finds the appropriate permutation matrix; otherwise, the algorithm returns a value indicating that $(Q_1, \ldots, Q_L)^T$ is not a permutation of $(W_1, \ldots, W_L)^T$. 

\section{Estimation}
In this section, we develop novel estimators that can be used to extend the Demix algorithm to the finite sample case. This will be the basis of finite sample algorithms for the demixing problem in mixed membership models and the decontamination problem of partial label models. Let $\sX = \bbR^d$ be equipped with the standard Borel $\sigma$-algebra $\sC$ and $\p{1}, \ldots, \p{L}$ be probability distributions on this space.  Suppose that we observe for $i = 1, \ldots, L$, 
\begin{equation*}
X_1^i, \ldots, X_{n_i}^i \sim \p{i}.
\end{equation*}
Let $\sS$ be any VC class with VC-dimension $V < \infty$, containing the set of all open balls, all open rectangles, or some other collection of sets that generates the Borel $\sigma$-algebra $\sC$. Define $\epsilon_i(\delta_i) \equiv 3 \sqrt{\frac{V \log(n_i +1) - \log(\delta_i /2)}{n_i}}$ for $i = 1, \ldots, L$.

Our goal is to establish estimators $\est{Q}_1, \ldots, \est{Q}_L$ that, when suitably permuted, converge uniformly on $\sS$ to $P_1, \ldots, P_L$. Previously developed uniform convergence results assume access to i.i.d. samples and are based on the VC inequality \citep{blanchard2010}. This inequality says that for each $i \in [L]$, and $\delta >0$, the following holds with probability at least $1 - \delta$:
\begin{equation*}
\sup_{S \in \sS}|\p{i}(S) - \ed{\tilde{P}}_i(S) | \leq \epsilon_i(\delta)
\end{equation*}
where the \emph{empirical distribution} is given by $\ed{\tilde{P}}_i(S) = \frac{1}{n_i} \sum_{j=1}^{n_i} \ind{X^i_{j} \in S}$. The challenge is that because of the recursive nature of the Demix algorithm, we cannot assume access to i.i.d. samples to estimate every distribution that arises. We show that uniform convergence of distributions propagates through the algorithm if we employ an estimator of $\kappa^*$ with a known rate of convergence. 

Let $\est{F}$ be an estimate of a distribution $F$ and $\sD(\est{F}) = \set{i : \est{F} \text{ relies on data from the distribution } \p{i}}$. We introduce the following estimator using the estimates $\est{F}$ and $\est{H}$: 
\begin{equation*}
\est{\kappa}( \est{F} \, | \, \est{H}) = \inf_{S \in \sS} \frac{\est{F}(S) + \gamma( \sD(\est{F})) }{(\est{H}(S) - \gamma(\sD(\est{H})))_+}
\end{equation*}
where $\gamma(I) = \sum_{i \in I} \epsilon_i(\frac{1}{n_i})$ and $I \subseteq [L]$. Notice that when $\est{F}$ and $\est{H}$ are empirical distributions, e.g., $\est{F} = \ed{\tilde{P}}_i$ and $\est{H} = \ed{\tilde{P}}_j$, we recover the consistent estimator from \citet{blanchard2010}: $\est{\kappa}( \est{F} \, | \, \est{H}) = \inf_{S \in \sS} \frac{\est{F}(S) + \epsilon_i(\frac{1}{n_i})  }{(\est{H}(S) - \epsilon_j(\frac{1}{n_j}))_+}$.

Based on the estimator $\est{\kappa}$, we introduce the following estimator of the residue of $F$ wrt $H$.
\begin{algorithm}
\caption{ResidueHat($\est{F} \, | \, \est{H}$)}
\label{residue_hat}
\textbf{Input: }$\est{F}, \est{H} \text { are estimates of } F,H$
\begin{algorithmic}[1]
\STATE $\est{\kappa} \longleftarrow \est{\kappa}(\est{F} \, | \, \est{H})$
\RETURN $\frac{\est{F} - \est{\kappa}(1 - \est{H})}{1 - \est{\kappa}}$
\end{algorithmic}
\end{algorithm}

\begin{defn}
Let $G \longleftarrow \text{Residue}(F \, | \, H)$ and $\est{G} \longleftarrow \text{ResidueHat}(\est{F} \, | \, \est{H})$. We call $\est{G}$ a \emph{ResidueHat estimator} of $G$ if (i) $F \neq H$, (ii) $F, H \in \co(P_1, \ldots, P_L)$, and (iii) $\est{F}$ and $\est{H}$ are either empirical distributions or ResidueHat estimators of $F$ and $H$.
\end{defn}
\noindent Note that the above definition is recursive and matches the recursive structure of the Demix algorithm. 

To use ResidueHat estimators to estimate the $P_i$s, we build on the rate of convergence result from \citet{scott2015}. In \citet{scott2015}, a rate of convergence was established for an estimator of $\kappa^*$ using empirical distributions; we extend these results to our setting of recursive estimators and achieve the same rate of convergence. To ensure that this rate of convergence holds for every estimate in our algorithm, we introduce the following condition.
\begin{description}
\item[(A$''$)] $P_1, \ldots, P_L$ are such that $\forall i$ $\supp(P_i) \not \subseteq \cup_{j \neq i} \supp(P_j) $. 
\end{description}
Note that this assumption is a natural generalization of the separability assumption.

Let $\vec{n} \equiv (n_1,\ldots, n_L)$; we write $\vec{n} \longrightarrow \infty$ to indicate that $\min_{i} n_i \longrightarrow \infty$. The following result establishes sufficient conditions under which ResidueHat estimates converge uniformly.
\begin{prop}
\label{uniform_convergence}
If $P_1, \ldots, P_L$ satisfy \textbf{(A$''$)} and $\est{G}$ is a ResidueHat estimator of a distribution $G \in \co(P_1, \ldots, P_L)$, then $\sup_{S \in \sS} | \est{G}(S) - G(S)| \overset{i.p.}{\longrightarrow} 0$ as $\vec{n} \longrightarrow \infty$.
\end{prop}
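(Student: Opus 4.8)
The plan is to induct on the recursive structure of the ResidueHat estimator, which by definition mirrors the recursion in the Demix algorithm. The base case covers an empirical distribution $\ed{\tilde{P}}_i$, where uniform convergence $\sup_{S \in \sS} |\ed{\tilde{P}}_i(S) - \p{i}(S)| \to 0$ follows directly from the VC inequality stated in the excerpt, since $\epsilon_i(\delta) \to 0$ as $n_i \to \infty$. For the inductive step, I would assume that $\est{F}$ and $\est{H}$ converge uniformly on $\sS$ to $F$ and $H$ respectively (both in $\co(P_1,\ldots,P_L)$ with $F \neq H$), and show that $\est{G} = \text{ResidueHat}(\est{F} \mid \est{H})$ converges uniformly to $G = \text{Residue}(F \mid H)$. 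From the formula in Algorithm \ref{residue_hat}, $\est{G} = \frac{\est{F} - \est{\kappa}(1 - \est{H})}{1 - \est{\kappa}}$, so the whole problem reduces to controlling two things: (i) the convergence $\est{\kappa}(\est{F} \mid \est{H}) \overset{i.p.}{\to} \kappa^*(F \mid H)$, and (ii) propagating this plus the uniform convergence of $\est{F}, \est{H}$ through the algebraic expression for $\est{G}$.

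For part (i), I would invoke the rate-of-convergence result from \citet{scott2015} for the $\kappa^*$ estimator, suitably adapted to recursive estimates. The key observation is that the correction term $\gamma(\sD(\est{F})) = \sum_{i \in \sD(\est{F})} \epsilon_i(1/n_i)$ in the definition of $\est{\kappa}$ is precisely tuned so that, with high probability, $\est{F}(S) + \gamma(\sD(\est{F}))$ dominates $F(S)$ and $\est{H}(S) - \gamma(\sD(\est{H}))$ is dominated by $H(S)$ uniformly over $S \in \sS$; this is where assumption \textbf{(A$''$)} enters, guaranteeing a region of $S$ where the denominator is bounded away from zero (so the infimum is attained on a well-behaved set and does not blow up). The role of \textbf{(A$''$)} is to ensure each $\p{i}$, hence each distribution arising in the recursion, places mass on a set disjoint from the others, which keeps $\kappa^*(F \mid H) < 1$ strictly and bounds the denominator $1 - \est{\kappa}$ away from zero. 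I would argue that $\sD(\est{F})$ tracks which empirical samples the recursive estimate depends on, and since the recursion depth is finite (bounded by $L$), each $\gamma$ is a finite sum of terms each $\to 0$, so all correction terms vanish as $\vec{n} \to \infty$.

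For part (ii), once $\est{\kappa} \to \kappa^* \in [0,1)$ in probability and $\est{F} \to F$, $\est{H} \to H$ uniformly, I would bound
\begin{equation*}
\sup_{S} |\est{G}(S) - G(S)| = \sup_S \left| \frac{\est{F}(S) - \est{\kappa}(1 - \est{H}(S))}{1 - \est{\kappa}} - \frac{F(S) - \kappa^*(1 - H(S))}{1 - \kappa^*} \right|
\end{equation*}
by a standard decomposition: add and subtract intermediate terms to separate the effect of $\est{\kappa} - \kappa^*$ from that of $\est{F} - F$ and $\est{H} - H$, then use the uniform boundedness of all distribution values in $[0,1]$ and the fact that $1 - \kappa^*$ is bounded below by a positive constant. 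The main obstacle I anticipate is part (i): establishing the uniform control of $\est{\kappa}$ for \emph{recursive} estimates rather than empirical ones. The difficulty is that the errors in $\est{F}$ and $\est{H}$ are themselves random and accumulated through prior residue operations, so the clean VC-based concentration used in \citet{scott2015} must be replaced by an argument that the inflation terms $\gamma(\sD(\cdot))$ correctly absorb the accumulated recursive error with high probability. I expect this to require careful bookkeeping of the dependency sets $\sD$ and a union bound over the finitely many estimates that appear, showing the event that all relevant one-sided bounds hold simultaneously has probability $\to 1$.
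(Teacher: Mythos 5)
Your plan follows the same route as the paper's proof: induction on the recursive structure of the estimator (the paper organizes this as a directed acyclic graph in Lemma \ref{main_est_ineq}), a base case from the VC inequality, a rate-of-convergence step for $\est{\kappa}$ adapted from \citet{scott2015} (the paper's Lemmas \ref{upper_bound} and \ref{rate}), and an algebraic decomposition to push the error through the ResidueHat formula (the paper's Lemma \ref{vc_type_ineq}). However, there are two gaps. First, your stated inductive hypothesis --- that $\est{F}$ and $\est{H}$ merely converge uniformly in probability --- is too weak to close the induction. The corrections in $\est{\kappa}$ absorb the estimation error in $\est{F}$ and $\est{H}$ only if that error is of order $\gamma(\sD(\cdot))$, not merely $o(1)$: if the error decayed more slowly than $\gamma$, then on sets $S$ with $H(S)$ comparable to that error the ratio $\frac{\est{F}(S)+\gamma(\sD(\est{F}))}{(\est{H}(S)-\gamma(\sD(\est{H})))_+}$ can be driven far below $\kappa^*(F \, | \, H)$, and the infimum defining $\est{\kappa}$ collapses. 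This is exactly why the paper's induction carries the quantitative property \textbf{(UDI)} (deviations bounded by $A_1\gamma(\sD(\cdot))$ with failure probability $A_2\sum_{i\in\sD(\cdot)}\frac{1}{n_i}$, the constants staying finite because the recursion depth is finite) rather than qualitative consistency. Your last paragraph gestures at this bookkeeping, but the rate-matched statement must \emph{be} the induction hypothesis; as written, your inductive step does not have the inputs it needs.

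Second, you misidentify where \textbf{(A$''$)} enters. It is not what keeps $\kappa^*(F \, | \, H)<1$ or the factor $1-\est{\kappa}$ bounded away from zero: that follows from $F \neq H$ via Proposition \ref{label_noise_bin_case}, and $F \neq H$ is built into the definition of a ResidueHat estimator. The actual role of \textbf{(A$''$)}, through the paper's Lemma \ref{sup_cond}, is to guarantee the support condition: $F$ can be written as a mixture of $H$ and a distribution $G$ with $\supp(H) \not\subseteq \supp(G)$, which yields an open set --- and hence, because $\sS$ contains a base of the topology, a set $S\in\sS$ --- with $H(S)>0$ and $G(S)=0$, on which $\frac{F(S)}{H(S)}$ attains $\kappa^*(F \, | \, H)$. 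This witness set is what drives the upper-bound half of the rate, $\est{\kappa} \leq \kappa^* + O(\gamma)$, in the argument adapted from \citet{scott2015}. Without it, the infimum over the VC class $\sS$ need not approach the infimum over all measurable sets, $\est{\kappa}$ could converge to a strictly larger limit, and the ResidueHat estimate would then converge to the wrong distribution. With your reading of \textbf{(A$''$)}, this step of the proof has no justification.
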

\begin{proof}[Sketch]
Let $\est{F}_1, \ldots, \est{F}_K$ denote the ResidueHat estimators in terms of which $\widehat{G}$ is defined and $F_1, \ldots, F_K$ the distributions which they estimate. We show that if $\est{F}_i$ and $\est{F}_j$ satisfy uniform deviation inequalities, $F_i \neq F_j$, and $\est{F}_l \longleftarrow \text{ResidueHat}(\est{F}_i \, | \, \est{F}_j)$, then there exists constants $A_1, A_2 > 0$ such that for large enough $\min(n_i : i \in \sD(\est{F}_i) \cup \sD(\est{F}_j))$, with probability at least $1 - A_2 \sum_{i \in \sD(\est{F}_i) \cup \sD(\est{F}_j)} \frac{1}{n_i}$ and for all $S \in \sS$,
\begin{align*}
|\est{\kappa}(\est{F}_i \, | \, \est{F}_j) - \kappa^*(F_i \, | \, F_j)| & \leq A_1 \gamma(\sD(\est{F}_i) \cup \sD(\est{F}_j)) \\
|\est{F}_l(S) - F_l(S) | & < A_1 \gamma(\sD(\est{F}_l))
\end{align*}
Since $F_i \neq F_j$, $\est{F}_i$ and $\est{F}_j$ satisfy uniform deviation inequalites by assumption, and $P_1, \ldots, P_L$ satisfy \textbf{(A$''$)}, we show that by Lemma \ref{rate}, the first inequality (a rate of convergence) holds. Then, by the first inequality, $F_i \neq F_j$, and the assumption that $\est{F}_i$ and $\est{F}_j$ satisfy uniform deviation inequalites, we have that Lemma \ref{vc_type_ineq} implies the second inequality. Then, reasoning inductively, we obtain the result.
\end{proof}
Based on the ResidueHat estimators, we introduce an empirical version of the Demix algorithm, namely, DemixHat (see Algorithm \ref{demix_hat_alg}). The only substantial difference is that we replace the Residue function with the ResidueHat function. See the Appendix for details.

We now state our main estimation result. 
\begin{thm}
\label{demix_estim}
Let $\epsilon > 0$. Suppose that $P_1, \ldots, P_L$ satisfy \textbf{(A$''$)} and $\vec{\Pi}$ satisfies \textbf{(B$'$)}. Then, with probability tending to $1$ as $\vec{n} \longrightarrow \infty$, DemixHat($\ed{\tilde{P}}_1, \ldots, \ed{\tilde{P}}_L$) returns $(\est{Q}_1,\ldots, \est{Q}_L)$ and there exists a permutation $\sigma: [L] \longrightarrow [L]$ such that for every $i \in [L]$,
\begin{equation*}
\sup_{S \in \sS} |\est{Q}_i(S) - P_{\sigma(i)}(S)| < \epsilon.
\end{equation*}
\end{thm}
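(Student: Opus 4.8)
The plan is to show that \textbf{DemixHat} shadows the population algorithm \textbf{Demix} run-for-run, by coupling the two executions to share the same internal randomness and then transferring the population guarantee of Theorem \ref{demix_identification} to the empirical run via the uniform consistency of \textbf{ResidueHat} estimators (Proposition \ref{uniform_convergence}). Each call to \textbf{Demix} draws a uniformly random point of a convex hull, which I read as drawing a random weight vector $\vec\theta$ on the simplex and setting $Q = \sum_i \theta_i S_i$; I would couple the empirical run to use the \emph{same} weight vectors at every level of the recursion, so that the empirical random point is $\est{Q} = \sum_i \theta_i \est{S}_i$. Since $\est{S}_i \to S_i$ uniformly on $\sS$, convexity gives $\sup_{S \in \sS}|\est{Q}(S) - Q(S)| \le \sum_i \theta_i \sup_{S \in \sS}|\est{S}_i(S) - S_i(S)| \to 0$, and the same bookkeeping of deviation terms $\gamma(\cdot)$ used in Proposition \ref{uniform_convergence} shows that the averages $\frac1n \est{S}_i + \frac{n-1}{n}\est Q$ and $\frac1K\sum_i \est{S}_i$ inherit uniform deviation inequalities. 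Thus every distribution manipulated by \textbf{DemixHat} is an empirical distribution, a \textbf{ResidueHat} estimator, or a convex combination of such, and hence is uniformly consistent for the corresponding population object.

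I would argue by induction on the number of arguments $K$. Here \textbf{(A$''$)} implies separability and hence \textbf{(A$'$)}, so that with \textbf{(B$'$)} (full rank of $\vec\Pi$, equivalently linear independence of $\vec\pi_1,\ldots,\vec\pi_L$ since $M=L$) the hypotheses of Theorem \ref{demix_identification} hold. For the base case $K=2$, \textbf{DemixHat} returns $\text{ResidueHat}(\ed{\tilde{P}}_i \mid \ed{\tilde{P}}_j)$ for the two orderings; the two inputs are distinct, so Proposition \ref{uniform_convergence} applies and the outputs converge uniformly to $\text{Residue}(\p{i}\mid\p{j})$, which by Theorem \ref{demix_identification} is a permutation of the two base distributions. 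For the inductive step, fix the coupled weight vectors. By the analysis underlying Theorem \ref{demix_identification} (and Proposition \ref{equiv_opt}, which justifies the geometry at the level of mixture proportions), for a.e.\ choice of weights the residue of $Q$ with respect to $S_1$ lands in the interior of a single $(K-1)$-face; hence there is a finite threshold $n^*$ such that the population residues $R_2,\ldots,R_K$ lie strictly on a common face (so \textbf{FaceTest} succeeds) for every $n \ge n^*$ and fail to do so for $n < n^*$. Because each such success and failure is governed by strict inequalities in the $\kappa^*$-based quantities \textbf{FaceTest} compares, and because $\est{R}_i \to R_i$ uniformly for each of the finitely many indices $n \le n^*$, with probability tending to $1$ the empirical \textbf{FaceTest} also first succeeds exactly at $n = n^*$ and hands $\est{R}_2,\ldots,\est{R}_K$ converging uniformly to $R_2,\ldots,R_K$ to the recursive call.

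The inductive hypothesis applied to \textbf{DemixHat}$(\est{R}_2,\ldots,\est{R}_K)$ then produces $\est Q_1,\ldots,\est Q_{K-1}$ that, after the data-independent permutation matched in the population run, converge uniformly to a permutation of $K-1$ of the base distributions. Finally, the closing loop initializes $\est Q_K = \frac1K\sum_i \est S_i$ and iterates $\est Q_K \leftarrow \text{ResidueHat}(\est Q_K \mid \est Q_i)$; since the corresponding population quantities are pairwise distinct along this loop (guaranteed by Theorem \ref{demix_identification}), Proposition \ref{uniform_convergence} applies at each of the $K-1$ steps and $\est Q_K$ converges uniformly to the remaining base distribution. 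Assembling the levels, \textbf{DemixHat} returns $(\est Q_1,\ldots,\est Q_L)$ converging uniformly, up to a single permutation $\sigma$, to $(P_1,\ldots,P_L)$, which for large enough $\vec n$ gives $\sup_{S\in\sS}|\est Q_i(S) - P_{\sigma(i)}(S)| < \epsilon$ with probability tending to $1$.

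The step I expect to be the main obstacle is controlling the \emph{while} loop, whose number of iterations is random and a priori unbounded. My remedy is to condition on the finite collection of coupled weight vectors drawn across all recursion levels: for a.e.\ realization this fixes every threshold $n^*$ to a finite value and makes the population $\kappa^*$-comparisons in \textbf{FaceTest} bounded away from their decision boundaries at each of the finitely many relevant indices. A union bound over this now-finite set of \textbf{ResidueHat} estimators, together with the uniform deviation inequalities from the proof of Proposition \ref{uniform_convergence}, makes every empirical comparison correct with probability tending to $1$ conditionally on the weights; a bounded-convergence argument over the weight distribution (whose bad set has measure zero) then removes the conditioning and yields the unconditional statement. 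The remaining care is that error must not compound across the $O(L)$ recursion levels, but since both the number of levels and the number of estimators per level are finite and fixed, composing finitely many uniformly consistent operations preserves uniform consistency.
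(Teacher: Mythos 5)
Your overall architecture is the same as the paper's: couple the empirical run with the population run of Demix, invoke Theorem \ref{demix_identification} for the population trajectory, and propagate uniform consistency of ResidueHat estimators (Proposition \ref{uniform_convergence}, i.e.\ Lemmas \ref{main_est_ineq}, \ref{rate}, \ref{vc_type_ineq}) through the finitely many estimators of the recursion; your observation that \textbf{(A$''$)} implies \textbf{(A$'$)} so that Theorem \ref{demix_identification} applies is also the right (and needed) step. The genuine gap is in your treatment of the while loop, specifically the direction that prevents early stopping. You assert that ``each such success and failure is governed by strict inequalities in the $\kappa^*$-based quantities FaceTest compares,'' so that uniform convergence of $\est{R}_i$ to $R_i$ at the finitely many indices up to $n^*$ forces the empirical test to agree with the population test. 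But the failure case is an \emph{equality}, not a strict inequality: when the population residues are not all on the interior of a common face, statement 3 of Lemma \ref{facts} gives $R_{i,j} = Q_i$ exactly, hence $\kappa^*(Q_i \mid R_{i,j}) = 1$. In that case the two distributions fed to $\est{\kappa}$ coincide, condition \textbf{(SC)} and the hypothesis $F \neq H$ of Lemma \ref{rate} fail, and no two-sided consistency of $\est{\kappa}(\est{Q}_i \mid \est{R}_{i,j})$ toward $1$ is available from the paper's results (nor from uniform convergence of the inputs alone: a plug-in ratio without corrections can dip far below $1-\epsilon$ on sets of small probability). What actually closes this direction is the one-sided bound of Lemma \ref{upper_bound}, which needs only the \textbf{(UDI)}s and exploits the deliberate $+\gamma$/$-\gamma$ corrections in the definition of $\est{\kappa}$, giving $\est{\kappa} \geq \kappa^* - A\gamma = 1 - A\gamma \geq 1-\epsilon$ w.h.p., so FaceTestHat returns $0$. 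This is precisely the paper's Lemma \ref{face_test_hat}, and your argument has no substitute for it.

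A second, related flaw: you claim the empirical run ``first succeeds exactly at $n = n^*$.'' That is false in general, because DemixHat's threshold $\epsilon = 2^{-(n+1)}$ at $n = n^*$ may exceed the margin $\delta = \min_{i,j}(1-\kappa^*_{i,j})/2$ of Lemma \ref{face_test_hat_term}, in which case FaceTestHat can return $0$ even with perfect estimates and the algorithm legitimately runs past $n^*$. Your proposal never engages with the shrinking $\epsilon$ schedule, yet it is exactly the mechanism that reconciles soundness with termination: the paper shows only that w.h.p.\ the algorithm never stops at an index where the population residues are off a common face (Lemma \ref{face_test_hat} plus a union bound over the finitely many such indices), and that it does stop at \emph{some} index once $n \geq N$ and $\epsilon$ has fallen below $\delta$ (Lemma \ref{face_test_hat_term}); which index that is need not, and cannot, be pinned down. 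Once you replace ``stops exactly at $n^*$'' by ``stops only at indices where the population residues lie on the interior of a common face,'' your conditioning and union-bound strategy over the finitely many relevant estimators goes through and the remainder of your induction coincides with the paper's proof.
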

In the preceding, we have assumed a fixed VC class to simplify the presentation. However, these results easily extend to the setting where $\sS = \sS_k$ and $k \longrightarrow \infty$ at a suitable rate depending on the growth of the VC dimensions $V_k$. This allows for the $P_i$s to be estimated uniformly on arbitrarily complex events, e.g., $\sS_k$ is the set of unions of $k$ open balls.

\section{Discussion}

In the Appendix, we present a finite sample algorithm for the decontamination of a partial label model (see Algorithm \ref{partial_label_hat}). This algorithm is based on a different approach from the PartialLabel Algorithm \ref{partial_label_alg}: it combines DemixHat with an empirical version of the VertexTest algorithm (see Algorithm \ref{vertex_hat_test}). The reason for this hinges in the advantages and disadvantages associated with the two-sample $\kappa^*$ operator and multi-sample $\kappa^*$ operator, respectively. Algorithms that only use the two-sample $\kappa^*$ operator have the following two advantages: (i) the geometry of the 2-sample $\kappa^*$ operator is simpler than the geometry of the multi-sample $\kappa^*$ operator and, as such, can be more tractable. Indeed, in recent years, several practical algorithms for estimating the two-sample $\kappa*$ have been developed (see \citet{jain2016} and references therein). (ii) We have estimators with established rates of convergence for the two-sample $\kappa^*$ operator, but not for the multi-sample $\kappa^*$ operator. On the other hand, algorithms that use the multi-sample $\kappa^*$ operator can be simpler and have the potential to be more practical since they can reduce the number of estimation steps.

\pagebreak
\section{Appendix}

\begin{figure}[h]
\centering
\setlength{\textfloatsep}{5pt}
\vspace{-2em}
\includegraphics[scale = .45]{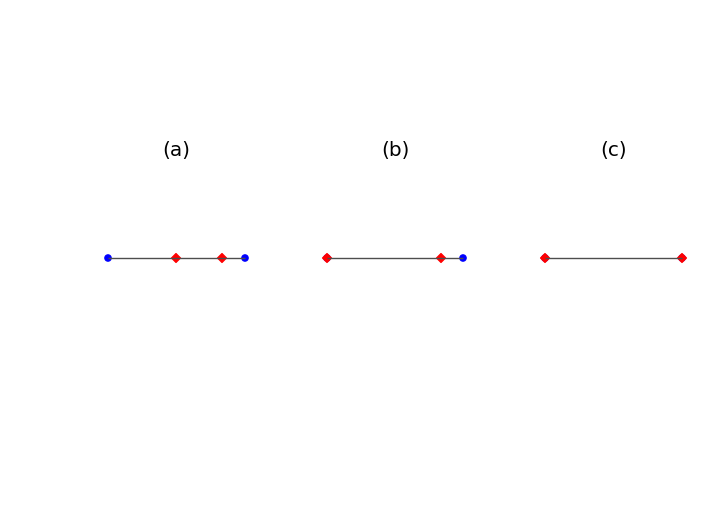}
\vspace{-6em}
\caption{In (a), we are given a demixing problem where $L=2$. The diamonds represent the mixture proportions of $\p{1}$ and $\p{2}$. The circles represent the base distributions. In (b), the residue of a contaminated distribution wrt the other contaminated distribution is computed (line 3), yielding a base distribution. In (c), the residue is computed again switching the roles of the contaminated distributions (line 4); this yields the remaining base distribution.}
\label{2_case}
\end{figure}
\begin{figure}[h]
\centering
\setlength{\textfloatsep}{5pt}
\includegraphics[scale = .45]{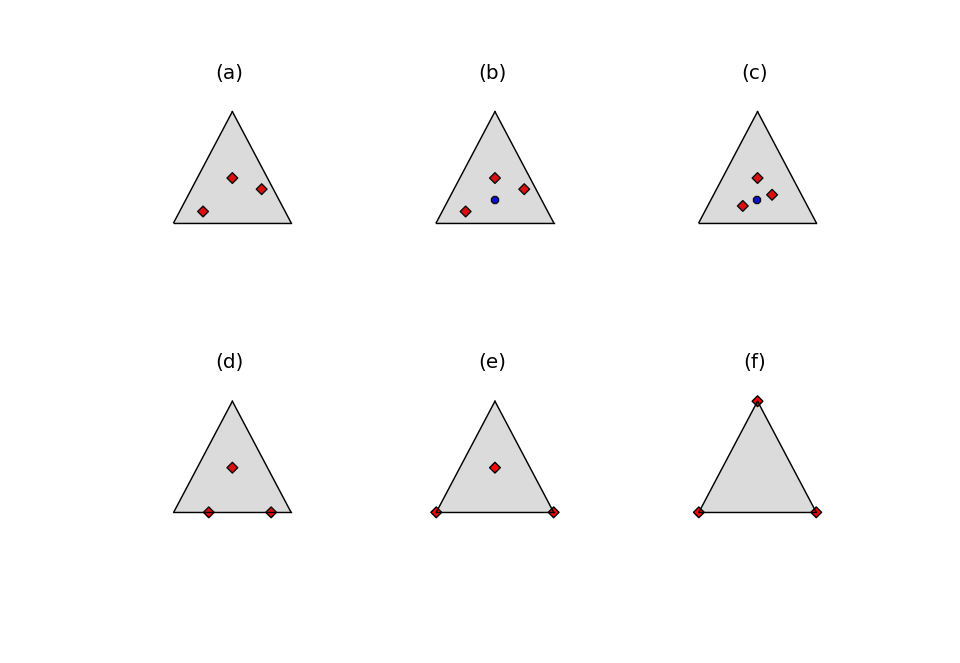}
\vspace{-4em}
\caption{In (a), we are given a demixing problem with $M=L=3$. The diamonds represent the mixture proportions of $\p{1},\p{2}$ and $\p{3}$. In (b), the blue circle is a random distribution chosen in the convex hull of two of the distributions (line 7). In (c), two of the distributions are resampled so that their residues wrt the other distribution are on the same face of the simplex (lines 12-15). In (d), these particular residues are computed (lines 12-15). In (e), two of the distributions are demixed (lines 3-5). In (f), the residue of the final distribution wrt the final two demixed distributions is computed to obtain the final demixing (line 18-21).}
\label{3_case}
\end{figure}

\begin{figure}[h]
\centering
\includegraphics[scale = .5]{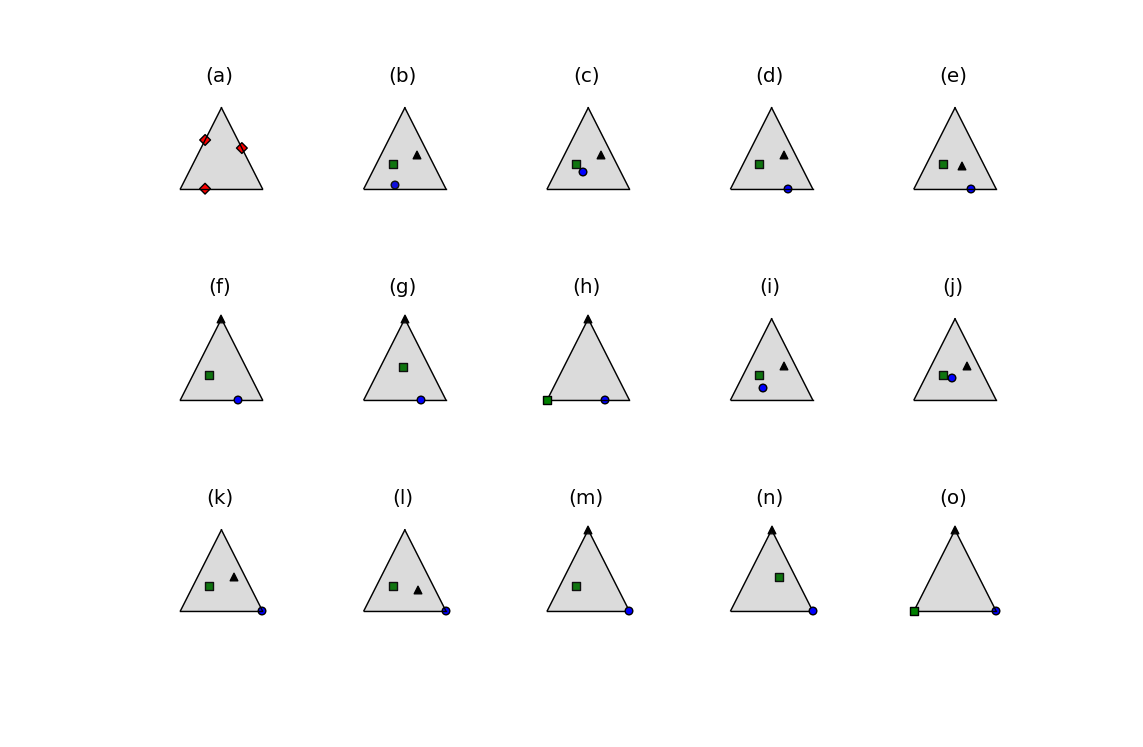}
\caption{We are given an instance of a partial label problem where $M=L=3$ and the partial labels each have two $1$s and a single $0$. In (a), the red diamonds represent the mixture proportions of the distributions $\p{1}, \p{2}, \p{3}$. In (b), three distributions $Q_1, Q_2, Q_3$ are sampled uniformly randomly from the convex hull of $\p{1}, \p{2}, \p{3}$; the green square, black triangle, and blue circle represent their mixture proportions. Figures (c)-(h) show how the algorithm generates a set of candidate distributions $(W^{(2)}_1,W^{(2)}_2,W^{(2)}_3)^T$ with $k=2$. In (h), PartialLabel runs VertexTest on $(W^{(2)}_1,W^{(2)}_2,W^{(2)}_3)^T$ and determines that $(W^{(2)}_1,W^{(2)}_2,W^{(2)}_3)^T$ is not a permutation of $(P_1, P_2,P_3)^T$. In (i)-(o), PartialLabel begins again with $Q_1, Q_2, Q_3$ and executes the same series of steps with $k=3$, generating $(W^{(3)}_1,W^{(3)}_2,W^{(3)}_3)^T$. In (o), it runs VertexTest on $(W^{(3)}_1,W^{(3)}_2,W^{(3)}_3)^T$ and determines that $(W^{(3)}_1,W^{(3)}_2,W^{(3)}_3)^T$ is a permutation of $(P_1, P_2, P_3)^T$.}
\label{partial_label_figure}
\end{figure}

\subsection{Notation}
Let $A$ be a set. $A^\circ$ denotes the relative interior of $A$, i.e., $A^\circ = \set{x \in A | B(x,r) \cap \aff A \subseteq A \text{ for some } r > 0}$. Then, $\partial A$ denotes the relative boundary of $A$, i.e., $\partial A = A \setminus A^\circ$. In addition, let $\norm{\cdot}$ denote an arbitrary norm. For two vectors, $\vec{x}, \vec{y} \in \bbR^K$, define
\begin{align*}
\min(\vec{x}^T, \vec{y}^T) & = (\min(x_1, y_1), \ldots, \min(x_K, y_K))
\end{align*}

We use the following affine mapping throughout the paper: $m_\nu(\vec{x}, \vec{y}) = (1-\nu) \vec{x} + \nu\vec{y}$ where $\vec{x}, \vec{y} \in \bbR^L$ and $\nu \in [0,1]$. Overloading notation, when $Q_1$ and $Q_2$ are distributions, we define $m_\nu(Q_1, Q_2) = (1-\nu)Q_1 + \nu Q_2$. We call $\nu$ the \emph{resampling proportion}. Note that if $\vec{\eta}_1, \vec{\eta}_2 \in \si_L$ and $Q_1 = \vec{\eta}_1^T \vec{P}$ and $Q_2 = \vec{\eta}_2^T \vec{P}$, then $m_\nu(\vec{\eta}_1, \vec{\eta}_2)$ is the mixture proportion for the distribution $m_\nu(Q_1, Q_2)$.

\subsection{Necessary Conditions}
\subsubsection{Mutual Contamination Models}
\begin{proof}[Theorem \ref{nec_cond}]
\begin{description}
\item[(A)] We prove the contrapositive. Let $\phi$ be a demixing operator and $\phi(\tilde{\vec{P}}) = (\vec{\Pi}, \vec{P})$. Suppose that there is some $P_i$ and $Q \in \co(\set{P_j : j \neq i})$ with $Q = \sum_{j \neq i} \beta_j P_j$ such that $P_i$ is not irreducible wrt $Q$. Then, there is some distribution $G$ and $\gamma \in (0,1]$ such that $P_i = \gamma Q + (1-\gamma) G$. 

Suppose $\gamma = 1$. Then, $P_i = Q \in \co(\set{P_k : k \neq i})$. But, then $\p{1}, \ldots, \p{M} \in \co(\set{P_j : j \neq i} \cup \set{R})$ for any distribution $R \not \in \co(P_1, \ldots, P_L)$. This shows that $\phi$ does not satisfy \textbf{(M)}.  

Therefore, assume that $\gamma \in (0,1)$. Either $G \in \co(P_1, \ldots, P_L)$ or $G \not \in \co(P_1, \ldots, P_L)$. Suppose that $G \in \co(P_1, \ldots, P_L)$. Then, there exist $\alpha_1, \ldots, \alpha_L$ all nonnegative and summing to $1$ such that 
\begin{align*}
P_i & = \gamma Q + (1-\gamma)(\alpha_1 P_1 + \ldots + \alpha_L P_L)
\end{align*}
\noindent Therefore, $P_i \in \co(\set{P_k : k \neq i})$. But then $\p{1}, \ldots, \p{M} \in \co(\set{P_j : j \neq i} \cup \set{R})$ for any distribution $R \not \in \co(P_1, \ldots, P_L)$. This shows that that $\phi$ does not satisfy \textbf{(M)}.

Now, suppose that $G \not \in \co(P_1, \ldots, P_L)$. Since $P_i \in \co(G, Q)$ and $Q \in \co(\set{P_j : j \neq i})$, we have that $\co(\set{P_j : j \neq i} \cup \set{G}) \supset \co(P_1, \ldots, P_L)$. Then, $\p{1}, \ldots, \p{M} \in \co(\set{P_j : j \neq i} \cup \set{G})$. This shows that $\phi$ does not satisfy \textbf{(M)}. The result follows.

\item[(B)] Let $\phi(\tilde{\vec{P}}) = (\vec{\Pi}, \vec{P})$. Clearly, $\dim \spa (\p{1}, \ldots , \p{M}) \leq \dim \spa(P_1, \ldots, P_L)$ since $\p{i} = \vec{\pi}_i^T \vec{P}$ for all $i \in [M]$. Since $\phi$ satisfies \textbf{(L)}, $\spa (P_1, \ldots, P_L) \subset \spa (\p{1}, \ldots, \p{L})$, which implies that $\dim \spa(P_1, \ldots, P_L) \leq \dim \spa (\p{1}, \ldots , \p{M})$. Therefore, $\dim \spa (\p{1}, \ldots , \p{M}) = \dim \spa(P_1, \ldots, P_L)$. Then, since $\p{1}, \ldots, \p{M} \in \range(\vec{\Pi})$, $\dim \spa (P_1, \ldots , P_L) \leq \dim \range \vec{\Pi}$. By Result 3.117 of \citet{axler}, $\rank (\vec{\Pi}) = \dim \text{range} \, \vec{\Pi} \geq \dim \spa(P_1, \ldots, P_L)$.
\end{description}
\end{proof}
\begin{proof}[Proposition \ref{max_ji_consistency}]
Suppose that $\phi(\tilde{\vec{P}}) = (\vec{\Pi}, \vec{P})$. We first show that $\phi$ satisfies \textbf{(L)}. By hypothesis, $P_1, \ldots, P_L$ are jointly irreducible. By Lemma B.1 of \citet{blanchard2014}, $P_1, \ldots, P_L$ are linearly independent. Since by hypothesis $\vec{\Pi}$ has full rank, there exist $L$ rows in $\vec{\Pi}$, $\vec{\pi}_{i_1}, \ldots, \vec{\pi}_{i_L}$, that are linearly independent. By Lemma B.1 of \citet{blanchard2014}, $\p{{i_1}}, \ldots, \p{{i_L}}$ are linearly independent. Since $\vec{\Pi} \vec{P} = \tilde{\vec{P}}$, $\spa(\p{1}, \ldots, \p{M}) \subseteq \spa(P_1, \ldots, P_L) $. But, then since $\dim \spa(\p{1}, \ldots, \p{M}) \geq L$, we have $\spa(\p{1}, \ldots, \p{M}) = \spa(P_1, \ldots, P_L)$. Therefore, $\phi$ satisfies \textbf{(L)}.

Now, we show that $\phi$ satisfies \textbf{(M)}. Suppose that there is another solution $(\vec{\Pi}^\prime, \vec{P}^\prime)$ with $\vec{P}^\prime = (P_1, \ldots, P_L)^T$ such that $\vec{\Pi}^\prime \vec{P}^\prime = \tilde{\vec{P}}$ and with some $P^\prime_i$ such that $P_i^\prime \not \in \co(P_1, \ldots, P_L)$. We claim that $P_i^\prime \not \in \spa(P_1, \ldots, P_L)$. If $P_i^\prime \in \spa(P_1, \ldots, P_L)$, then we must have that $P_i^\prime = \sum_{i=1}^L a_i P_i$ where at least one of the $a_i$ is negative. But, by joint irreducibility of $P_1, \ldots, P_L$, $P_i^\prime$ is not a distribution, which is a contradiction. So, the claim follows. But, then, since $\spa(P_1, \ldots, P_L) = \spa(\p{1}, \ldots, \p{M})$, we must have that $\spa(\p{1}, \ldots, \p{M}) \subseteq \spa(P_1^\prime, \ldots, P_{i-1}^\prime, P_{i+1}^\prime, \ldots, P_L)$, which is impossible since $\dim \spa(\p{1}, \ldots, \p{M}) = L$.
\end{proof}
\subsubsection{Partial Label Model}

\begin{defn}
A decontamination operator $\psi$ satisfies Maximality \textbf{(M$'$)} iff $\psi(\tilde{\vec{P}}, \vec{S}) = (\vec{\Pi}, \vec{P})$ implies that any pair $(\vec{\Pi}^\prime, \vec{P}^\prime)$ with $\vec{\Pi}^\prime \in \Delta_L^M$ and $\vec{P}^\prime = (P^\prime_1,\ldots, P^\prime_L)^T \in \mathcal{P}^L$ that solves (\ref{model}) and is consistent with $\vec{S}$ is such that $\set{P^\prime_1,\ldots, P^\prime_L} \subseteq \co(P_1,\ldots, P_L)$.
\end{defn} 
The notion of linearity for a decontamination operator for the partial label model is identical to the notion of linearity for a decontamination operator for a mixed membership model.

\begin{defn}
A decontamination operator $\psi$ satisfies Linearity \textbf{(L$'$)} iff $\psi_2(\tilde{\vec{P}}, \vec{S}) = \vec{P}$ implies $\set{P_1, \ldots, P_L} \subseteq \spa(\p{1}, \ldots, \p{M})$.
\end{defn}

\begin{prop}
Let $\psi$ denote a decontamination operator for the partial label model and $\psi(\tilde{\vec{P}}, \vec{S}) = (\vec{\Pi}, \vec{P})$. (i) If $\psi$ satisfies \textbf{(L$'$)}, then $(\vec{\Pi}, \vec{P})$ need not satisfy \textbf{(A)}. (ii) If $\psi$ satisfies \textbf{(L$'$)}, then $(\vec{\Pi}, \vec{P})$ satisfies \textbf{(B)}.
\end{prop}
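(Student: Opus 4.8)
The plan is to dispatch the two parts separately, since they are of quite different character: part (ii) is a direct transcription of the argument for Theorem~\ref{nec_cond}\textbf{(B)}, whereas part (i) is established by an explicit counterexample. Throughout I use that, by the remark preceding the statement, \textbf{(L$'$)} is literally the same condition as \textbf{(L)}. I read the substantive content of (i) as the claim that \emph{even} the maximality requirement \textbf{(M$'$)} fails to force \textbf{(A)} in the partial label setting; the instance I build will in fact satisfy both \textbf{(M$'$)} and \textbf{(L$'$)} while violating \textbf{(A)}, so it covers the statement on either reading.

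For part (ii) I would argue exactly as in the proof of Theorem~\ref{nec_cond}\textbf{(B)}, since $\vec{S}$ plays no role. From $\tilde{\vec{P}} = \vec{\Pi}\vec{P}$ each $\p{i} = \vec{\pi}_i^T\vec{P}$ lies in $\spa(P_1,\ldots,P_L)$, so $\spa(\p{1},\ldots,\p{M}) \subseteq \spa(P_1,\ldots,P_L)$ and hence $\dim\spa(\p{1},\ldots,\p{M}) \le \dim\spa(P_1,\ldots,P_L)$. The hypothesis \textbf{(L$'$)} supplies the reverse inclusion $\spa(P_1,\ldots,P_L)\subseteq\spa(\p{1},\ldots,\p{M})$, forcing equality of the two dimensions. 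Finally, viewing the $\p{i}$ as the images of the rows $\vec{\pi}_i$ under the linear map $\vec{v}\mapsto \vec{v}^T\vec{P}$ gives $\dim\spa(\p{1},\ldots,\p{M})\le \rank(\vec{\Pi})$, and chaining the two facts yields $\dim\spa(P_1,\ldots,P_L)\le\rank(\vec{\Pi})$, which is \textbf{(B)}.

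For part (i) I would exhibit a single small instance. Take $L=M=2$ and choose distributions $G$ and $P_2$ with \emph{disjoint} supports; set $P_1 = \tfrac12 G + \tfrac12 P_2$, so that $P_1$ is reducible with respect to $P_2\in\co(\set{P_j:j\ne 1})$ and \textbf{(A)} already fails at $i=1$. Let
\[
\vec{\Pi} = \begin{pmatrix} 1 & 0 \\ \tfrac12 & \tfrac12 \end{pmatrix},\qquad
\vec{S} = \begin{pmatrix} 1 & 0 \\ 1 & 1 \end{pmatrix},
\]
so that $\p{1}=P_1$ and $\p{2}=\tfrac12 P_1+\tfrac12 P_2$, and let $\psi$ return this $(\vec{\Pi},\vec{P})$ on input $(\tilde{\vec{P}},\vec{S})$. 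To verify \textbf{(M$'$)} I would take an arbitrary consistent solution $(\vec{\Pi}',\vec{P}')$: consistency with the first row of $\vec{S}$ forces $\vec{\Pi}'_{1,:}=(1,0)$ and hence $P_1'=\p{1}=P_1$, while the second row gives $\p{2}=\Pi'_{2,1}P_1+\Pi'_{2,2}P_2'$ with both weights positive. Solving for $\Pi'_{2,2}P_2'$ and substituting $P_1=\tfrac12 G+\tfrac12 P_2$, the coefficient of $G$ in the resulting measure is proportional to $\tfrac12-\Pi'_{2,1}$; because $\supp(G)$ is disjoint from $\supp(P_2)$, nonnegativity of $P_2'$ on $\supp(G)$ forces $\Pi'_{2,1}\le\tfrac12$, and then $P_2'$ is a nonnegative combination of $P_1,P_2$, i.e. $P_2'\in\co(P_1,P_2)$. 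Thus every consistent solution keeps its base distributions inside $\co(P_1,P_2)$, so $\psi$ satisfies \textbf{(M$'$)}; one also checks $\spa(\p{1},\p{2})=\spa(P_1,P_2)$, so \textbf{(L$'$)} holds as well, yet \textbf{(A)} fails.

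I expect the only delicate point to be the verification of \textbf{(M$'$)} in part (i) — specifically, ruling out ``exotic'' second-row solutions in which $P_2'$ escapes $\co(P_1,P_2)$ (or even $\spa(P_1,P_2)$). The disjoint-support choice for $G$ and $P_2$ is exactly what makes the nonnegativity constraint bite and pin $\Pi'_{2,1}$ to the interval $(0,\tfrac12]$; without it a negative coefficient on $P_1$ could be absorbed and a base distribution pushed outside the hull, so this is the step I would write out most carefully.
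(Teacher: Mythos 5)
Your proof is correct; part (ii) is the paper's own argument verbatim (the paper literally says the proof is identical to that of Theorem \ref{nec_cond}, part \textbf{(B)}), but your part (i) takes a genuinely different route. The paper's counterexample makes maximality hold vacuously: it takes $\vec{S} = \vec{I}_2$, $\p{i} = P_i$ with $P_1 = \frac{2}{3}Q_1 + \frac{1}{3}Q_2$, $P_2 = \frac{1}{3}Q_1 + \frac{2}{3}Q_2$ for overlapping uniforms $Q_1 \sim \unif(0,2)$, $Q_2 \sim \unif(1,3)$; since each row of a consistent $\vec{\Pi}'$ is stochastic with only one permitted nonzero entry, the only consistent solution is $(\vec{I}_2, \vec{P})$ itself, so \textbf{(M$'$)} is immediate, while $P_1 = \frac{1}{2}P_2 + \frac{1}{2}Q_1$ kills \textbf{(A)}. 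Your example instead uses a non-identity $\vec{S}$ whose second row permits both labels, so \textbf{(M$'$)} is not automatic and your disjoint-support argument pinning $\Pi'_{2,1} \le \frac{1}{2}$ is doing real work; I checked it and it is sound (both you and the paper read ``consistent with $\vec{S}$'' as the support pattern of $\vec{\Pi}'$ equaling $\vec{S}$, which is the natural reading given the paper's definition of the partial label matrix). The trade-off: the paper's construction is shorter because the partial label matrix already forces uniqueness of the solution, whereas yours is a stronger illustration --- it shows \textbf{(A)} can fail even when the consistency constraint leaves genuine freedom, so maximality is exercised rather than trivially satisfied. Your observation that the example satisfies both \textbf{(M$'$)} and \textbf{(L$'$)} (covering the apparent typo in the proposition's wording, which says \textbf{(L$'$)} in part (i) where the main text says \textbf{(M$'$)}) applies equally to the paper's example, but you are the one who makes it explicit.
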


\begin{proof}
\begin{description}

\item[(i)] Let $Q_1 \sim \unif(0,2)$ and $Q_2 \sim \unif(1,3)$. Let $P_1 = \frac{2}{3} Q_1 + \frac{1}{3} Q_2$, $P_2 = \frac{1}{3} Q_1 + \frac{2}{3} Q_2$, $\p{1} = P_1$, and $\p{2} = P_2$. Then, $\vec{S} = \vec{I}_2$, identity matrix. Then, any $(\vec{\Pi}^\prime, \vec{P}^\prime)$ that satisfies (\ref{model}) and is consistent with $\vec{S}$ must be such that $(P_1, P_2)^T = \vec{P}^\prime$. Therefore, \textbf{(M$'$)} is satisfied. But, clearly, \textbf{(A)} is not satisfied.

\item[(ii)] The proof is identical to the proof that \textbf{(L)} implies \textbf{(B)} for mixed membership models.
\end{description}
\end{proof}

\begin{proof}[Proposition \ref{partial_necessity}]
We give a proof by contraposition. Suppose that there exists $i \neq j$ such that $\vec{S}_{:, i} = \vec{S}_{:,j}$. Without loss of generality, let $i = 1$ and $j= 2$. Suppose that $(\vec{\Pi}, \vec{P})$ is consistent with $\vec{S}$ and solves $\tilde{\vec{P}} = \vec{\Pi} \vec{P}$. Then, the pair $(\vec{\Pi}^\prime, \vec{P}^\prime)$
\begin{align*}
\vec{\Pi}^\prime = & \begin{pmatrix}
\vec{\Pi}_{:,2}  & \vec{\Pi}_{:,1} & \vec{\Pi}_{:,3} & \ldots & \vec{\Pi}_{:,L}
\end{pmatrix} \\
\vec{P}^\prime & = \begin{pmatrix}
P_2 \\
P_1 \\
P_3 \\
\vdots \\
P_L \\
\end{pmatrix}
\end{align*}
solves $\tilde{\vec{P}} = \vec{\Pi}^\prime \vec{P}^\prime$ and is consistent with $\vec{S}$. 
\end{proof}

\begin{prop}
If a decontamination operator $\psi$ is such that $\psi((\tilde{\vec{P}}, \vec{S})) = (\vec{\Pi}, \vec{P})$ only if $(\vec{\Pi}, \vec{P})$ satisfy \textbf{(A$'$)} and \textbf{(B$'$)}, then $\phi$ satisfies \textbf{(L)}. 
\end{prop}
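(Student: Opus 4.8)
The plan is to recognize that this is the partial-label counterpart of the linearity half of Proposition \ref{max_ji_consistency}, and that the argument transfers essentially verbatim. The crucial structural observation is that the linearity condition for the partial label model, \textbf{(L$'$)}, imposes exactly the same requirement on $\vec{P}$ as \textbf{(L)} does for the mixed membership model, namely $\set{P_1, \ldots, P_L} \subseteq \spa(\p{1}, \ldots, \p{M})$, and makes no reference whatsoever to the partial label matrix $\vec{S}$. Since the hypotheses \textbf{(A$'$)} and \textbf{(B$'$)} are the identical conditions on $(\vec{\Pi}, \vec{P})$ in both settings, $\vec{S}$ is inert, and establishing $\psi$ satisfies \textbf{(L$'$)} reduces to the \textbf{(L)} argument already used there.

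Concretely, I would assume $\psi(\tilde{\vec{P}}, \vec{S}) = (\vec{\Pi}, \vec{P})$ with $(\vec{\Pi}, \vec{P})$ satisfying \textbf{(A$'$)} and \textbf{(B$'$)}. By \textbf{(A$'$)} and Lemma B.1 of \citet{blanchard2014}, the distributions $P_1, \ldots, P_L$ are linearly independent, so $\dim \spa(P_1, \ldots, P_L) = L$. By \textbf{(B$'$)}, $\vec{\Pi}$ has full rank, and hence contains $L$ rows $\vec{\pi}_{i_1}, \ldots, \vec{\pi}_{i_L}$ that are linearly independent. Invoking Lemma B.1 of \citet{blanchard2014} a second time, linear independence of these mixture proportions together with joint irreducibility of the $P_j$ forces $\p{{i_1}}, \ldots, \p{{i_L}}$ to be linearly independent, so that $\dim \spa(\p{1}, \ldots, \p{M}) \geq L$.

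To finish, I would combine this lower bound with the reverse containment supplied by the model. Because $\tilde{\vec{P}} = \vec{\Pi} \vec{P}$, every $\p{i} = \vec{\pi}_i^T \vec{P}$ lies in $\spa(P_1, \ldots, P_L)$, giving $\spa(\p{1}, \ldots, \p{M}) \subseteq \spa(P_1, \ldots, P_L)$ and hence $\dim \spa(\p{1}, \ldots, \p{M}) \leq L$. The two dimension bounds force equality, and with the containment this yields $\spa(\p{1}, \ldots, \p{M}) = \spa(P_1, \ldots, P_L)$. In particular $\set{P_1, \ldots, P_L} \subseteq \spa(P_1, \ldots, P_L) = \spa(\p{1}, \ldots, \p{M})$, which is precisely \textbf{(L$'$)}.

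I do not anticipate a genuine obstacle here; the only point requiring care is the double application of Lemma B.1 of \citet{blanchard2014}, once to pass from joint irreducibility to linear independence of the base distributions, and once to transfer linear independence from the mixture proportions $\vec{\pi}_{i_j}$ to the mixtures $\p{{i_j}}$. The remaining content is the dimension-counting sandwich, together with the observation that $\vec{S}$ never enters the argument, which is exactly what makes \textbf{(L$'$)} collapse onto the mixed membership linearity statement.
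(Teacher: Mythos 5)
Your proposal is correct and follows essentially the same route as the paper: the paper's proof simply declares the argument identical to the first paragraph of the proof of Proposition \ref{max_ji_consistency}, which is exactly the argument you spell out --- two applications of Lemma B.1 of \citet{blanchard2014} (joint irreducibility implies linear independence of the $P_i$, then linear independence of the rows $\vec{\pi}_{i_1}, \ldots, \vec{\pi}_{i_L}$ transfers to $\p{{i_1}}, \ldots, \p{{i_L}}$), combined with the containment $\spa(\p{1}, \ldots, \p{M}) \subseteq \spa(P_1, \ldots, P_L)$ from $\tilde{\vec{P}} = \vec{\Pi}\vec{P}$ to force equality of the spans. Your observation that $\vec{S}$ plays no role and that the linearity condition is the same in both models is precisely why the paper can cite its earlier proof verbatim.
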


\begin{proof}
The proof is identical to the first paragraph of the proof of Proposition \ref{max_ji_consistency}.
\end{proof}

\subsection{Residue Algorithms}

\begin{algorithm}
\caption{Residue($F_0 \, | \, F_1$)}
\begin{algorithmic}[1]
\label{two_residue}
\STATE $\kappa \longleftarrow \kappa^*(F_0 \, | \, F_1)$
\RETURN $\frac{F_0 - \kappa(1 - F_1)}{1 - \kappa}$
\end{algorithmic}
\end{algorithm}

\begin{algorithm}
\caption{Residue($F_0 \, | \, \set{F_1, \ldots, F_K}$)}
\begin{algorithmic}[1]
\label{multi_residue}
\STATE 	$(\nu_1, \ldots, \nu_K) \longleftarrow (\nu^\prime_1, \ldots, \nu^\prime_K) \text{ achieving the maximum in } \kappa^*(F_0 \, | \, F_1, \ldots, F_K)$\\
\RETURN $\frac{F_0 - \sum_{i=1}^K \nu_i F_i}{1 - \sum_{i=1}^K \nu_i}$
\end{algorithmic}
\end{algorithm}

\subsection{General Lemmas}
\begin{proof}[Prop \ref{equiv_opt}]
Without loss of generality, suppose $l =1$ and let $A = [L] \setminus \set{1}$; the other cases follow by a similar argument. Suppose $G$ is such that 
\begin{align*}
Q_1 & = (1 - \sum_{j \geq 2} \mu_j) G + \sum_{j \geq 2} \mu_j Q_j
\end{align*}
\sloppy Note that since $\vec{\eta}_1, \ldots, \vec{\eta}_L$ are linearly independent and $P_1, \ldots, P_L$ are jointly irreducible, $Q_1, \ldots, Q_L$ are linearly independent by Lemma B.1 from \cite{blanchard2014}. Therefore, $\sum_{j \geq 2} \mu_j < 1$ because, if not, $Q_1 = \sum_{j \geq 2} \mu_j Q_j$. Therefore, $\kappa^*(Q_1 \, | \, \set{Q_j : j \neq 1}) < 1$.

Further, any $G$ that satisfies the above equation has the form $\sum_{i=1}^L \gamma_i P_i$. The $\gamma_i$ must sum to one, and we have that they are nonnegative by joint irreducibility. That is, $\vec{\gamma} \equiv \begin{pmatrix}
\gamma_1, \ldots, \gamma_L
\end{pmatrix}^T$
is a discrete distribution. Then, the above equation is equivalent to 
\begin{align*}
\vec{\eta}^T_1 \vec{P} & = (1- \sum_{j \geq 2} \mu_j) \vec{\gamma}^T \vec{P} + \sum_{j \geq 2} \mu_j \vec{\eta}_j^T \vec{P}
\end{align*}
Since $P_1, \ldots, P_L$ are jointly irreducible, $P_1, \ldots, P_L$ are linearly independent by Lemma B.1 \cite{blanchard2014}. By linear independence of $P_1, \ldots, P_L$, we obtain 
\begin{align*}
\vec{\eta}_1 & = (1- \sum_{j \geq 2} \mu_j) \vec{\gamma} + \sum_{j \geq 2} \mu_j \vec{\eta}_j
\end{align*}
\noindent Consequently, $\kappa^*(Q_1 \, | \, \set{Q_j : j \neq 1}) = \kappa^*(\vec{\eta}_1 \, | \,\set{\vec{\eta}_j : j \neq 1}) < 1$ and there is a one-to-one correspondence between the feasible solution in $\kappa^*(Q_1 \, | \, \set{Q_j : j \neq 1})$ and the feasible solution in $\kappa^*(\vec{\eta}_1 \, | \, \set{\vec{\eta}_j : j \neq 1})$. The one-to-one correspondence is given by $G = \vec{\gamma}^T \vec{P}$. 

There is at least one point attaining the maximum in the optimization problem $\kappa^*(\vec{\eta}_1 \, | \, \set{\vec{\eta}_j : j \neq 1})$ by Lemma A.1 in \citet{blanchard2014}.

We see that the maximizing $\mu_j$ are unique as follows. Suppose
\begin{align*}
Q_1 = (1- \kappa^*) G + \sum_{j \geq 2} \mu_j Q_j  = (1- \kappa^*) G + \sum_{j \geq 2} \mu_j^\prime Q_j
\end{align*}
\noindent The linear independence of $Q_1, \ldots, Q_L$ implies that $\mu_j = \mu_j^\prime$. 
\end{proof}
Lemma \ref{facts} gives us some useful properties of the two-sample $\kappa^*$ that we exploit in the PartialLabel and Demix algorithms. Statement \emph{1} gives an alternative form of $\kappa^*$. Statement \emph{2} gives the intuitive result that the residues lie on the boundary of the simplex. Statement \emph{3} gives a useful relation for determining whether two mixture proportions are on the same face; we use this relation extensively in our algorithms.
\begin{lemma}
\label{facts}
Let $F_1, \ldots, F_K$ be jointly irreducible distributions with $\vec{F} = (F_1, \ldots, F_K)^T$, $Q_1, Q_2$ be two distributions such that $Q_i = \vec{\eta}_i^T \vec{F}$ where $\vec{\eta}_i \in \si_K$ for $i =1,2$ and $\vec{\eta}_1 \neq \vec{\eta}_2$. Let $R$ be the residue of $Q_1$ wrt $Q_2$ and $R = \vec{\mu}^T \vec{F}$.
\begin{enumerate}
\item We have the following equivalence of optimization problems:
\begin{align*}
\kappa^*(Q_1 \, | \, Q_2) & = \max(\alpha \geq 1 |  \exists \text{ a distribution } G \text{ s.t } G = Q_2 + \alpha(Q_1 - Q_2))
\end{align*}
\item $ \vec{\mu} \in \partial \si_K$. 

\item $\sS(\vec{\eta}_2) \not \subseteq \sS(\vec{\eta}_1)$ if and only if $R=Q_1$ if and only if $\kappa^*(Q_1 \, | \, Q_2) = 0$.
\end{enumerate}
\end{lemma}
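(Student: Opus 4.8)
The plan is to reduce all three claims to elementary facts about the mixture proportions $\vec{\eta}_1, \vec{\eta}_2$, and this reduction is the one step I expect to require real care. Because $F_1, \ldots, F_K$ are jointly irreducible they are linearly independent (Lemma B.1 of \citet{blanchard2014}), so every distribution lying in $\spa(\vec{F})$ has a unique mixture proportion, which by condition (b) of joint irreducibility automatically lies in $\si_K$. Arguing as in the proof of Proposition \ref{equiv_opt}, but specialized to the two distributions $Q_1, Q_2$: in any relation $Q_1 = (1-\kappa)G + \kappa Q_2$ with $G$ a distribution, $(1-\kappa)G = Q_1 - \kappa Q_2 \in \spa(\vec{F})$ forces $G = \vec{\gamma}^T \vec{F} \in \spa(\vec{F})$, and $G$ is a distribution if and only if $\vec{\gamma} \in \si_K$. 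Hence $\kappa^*(Q_1 \, | \, Q_2) = \kappa^*(\vec{\eta}_1 \, | \, \vec{\eta}_2)$, the residue has mixture proportion $\vec{\mu} = (\vec{\eta}_1 - \kappa^*\vec{\eta}_2)/(1-\kappa^*)$, and $\kappa^* < 1$ since $\vec{\eta}_1 \neq \vec{\eta}_2$ together with linear independence of $\vec{F}$ gives $Q_1 \neq Q_2$ (so Proposition \ref{label_noise_bin_case} applies). The delicate points are exactly that $G$ is forced into $\spa(\vec{F})$ and that ``being a distribution'' translates to ``$\vec{\gamma} \in \si_K$''; both rest on joint irreducibility, and everything below is bookkeeping with the explicit formula for $\vec{\mu}$.

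For Statement 1 I would change variables. Solving $Q_1 = (1-\kappa)G + \kappa Q_2$ for $G$ gives $G = Q_2 + \alpha(Q_1 - Q_2)$ with $\alpha = 1/(1-\kappa)$. The map $\kappa \mapsto 1/(1-\kappa)$ is an increasing bijection from $[0,1)$ onto $[1,\infty)$ carrying feasible pairs of the $\kappa$-problem to feasible pairs of the $\alpha$-problem while fixing $G$; since $\kappa$ increases with $\alpha$, the two maxima coincide.

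For Statement 2 I would combine the reduction with maximality. The coordinates $\gamma_i(\kappa) = (\eta_{1,i} - \kappa\eta_{2,i})/(1-\kappa)$ are continuous on $[0,1)$ and $\vec{\mu} = \vec{\gamma}(\kappa^*)$. If $\vec{\mu}$ lay in the relative interior $\si_K^\circ$, i.e.\ all its coordinates were strictly positive, continuity would keep $\vec{\gamma}(\kappa) \in \si_K$ for some $\kappa > \kappa^*$, contradicting the maximality defining $\kappa^* = \kappa^*(\vec{\eta}_1 \, | \, \vec{\eta}_2)$. Thus some coordinate of $\vec{\mu}$ vanishes and $\vec{\mu} \in \partial\si_K$.

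For Statement 3, the equivalence $R = Q_1 \iff \kappa^* = 0$ follows at once from $R = (Q_1 - \kappa^*Q_2)/(1-\kappa^*)$ and $Q_1 \neq Q_2$. For the remaining equivalence I read $\vec{\eta}_1 = (1-\kappa)\vec{\gamma} + \kappa\vec{\eta}_2$ coordinatewise: if some $i_0 \in \sS(\vec{\eta}_2)\setminus\sS(\vec{\eta}_1)$ exists, any $\kappa > 0$ forces $0 = \eta_{1,i_0} \geq \kappa\eta_{2,i_0} > 0$, impossible, so only $\kappa = 0$ is feasible and $\kappa^* = 0$; conversely, if $\sS(\vec{\eta}_2) \subseteq \sS(\vec{\eta}_1)$ then $\kappa \equiv \min_{i \in \sS(\vec{\eta}_2)} \eta_{1,i}/\eta_{2,i} > 0$, one checks $\kappa < 1$ (otherwise $\vec{\eta}_1 \geq \vec{\eta}_2$ coordinatewise with equal sums forces $\vec{\eta}_1 = \vec{\eta}_2$) and that $\vec{\gamma} = (\vec{\eta}_1 - \kappa\vec{\eta}_2)/(1-\kappa) \in \si_K$, exhibiting a feasible $\kappa > 0$ and hence $\kappa^* > 0$.
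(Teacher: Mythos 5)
Your proof is correct and takes essentially the same route as the paper's: it reduces everything to mixture proportions via (the two-sample case of) Proposition~\ref{equiv_opt}, uses the change of variables $\alpha = 1/(1-\kappa)$ for statement 1, a maximality/perturbation argument for statement 2, and a coordinatewise support analysis for statement 3. The only cosmetic differences are that you re-derive the reduction to $\si_K$ inline rather than citing Proposition~\ref{equiv_opt}, and you phrase statements 2 and 3 in the $\kappa$-parametrization (with an explicit feasible $\kappa = \min_{i \in \sS(\vec{\eta}_2)} \eta_{1,i}/\eta_{2,i}$) where the paper works with $\alpha$.
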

\begin{proof}[Lemma \ref{facts}]
\begin{enumerate}
\item Consider the linear relation: $Q_1 = (1-\kappa) G + \kappa Q_2$ where $\kappa \in [0,1]$. Since $F_1, \ldots, F_K$ are jointly irreducible and $\vec{\eta}_1$ and $\vec{\eta}_2$ are linearly independent, $Q_1$ and $Q_2$ are linearly independent by Lemma B.1 of \citet{blanchard2014}. Therefore, $\kappa < 1$. We can rewrite the relation as follows:
\begin{align*}
G = \frac{1}{1-\kappa}Q_1 - \frac{\kappa}{1-\kappa} Q_2 = \alpha Q_1 + (1-\alpha)Q_2
\end{align*}
\noindent where $\alpha = \frac{1}{1-\kappa}$. The equivalence follows.

\item Since $R$ is the residue of $Q_1$ wrt $Q_2$, by Proposition \ref{equiv_opt}, $\vec{\mu}$ is the residue of $\vec{\eta}_1$ wrt $\vec{\eta}_2$ and $\vec{\mu} \in \si_K$. Therefore, by statement \emph{1} in Lemma \ref{facts}, $\vec{\mu}$ is such that $\alpha^*$ is maximized subject to the following constraints:
\begin{align*}
\vec{\mu} & = (1- \alpha^*)\vec{\eta}_2 + \alpha^*\vec{\eta}_1 \\
\alpha^* & \geq 1 \\
\vec{\mu} & \in \si_K
\end{align*}
\noindent Suppose that $\min_{i} \mu_i > 0$. Then, there is some $\epsilon > 0$ such that 
\begin{align*}
\vec{\mu}^\prime & = (1- \alpha^* - \epsilon)\vec{\eta}_2 + (\alpha^* + \epsilon)\vec{\eta}_1 \\
\alpha^* + \epsilon & \geq 1 \\
\vec{\mu}^\prime & \in \si_K
\end{align*}
\noindent But, this contradicts the definition of $\alpha^*$ and $\vec{\mu}$. Therefore, $\min_{i} \mu_i = 0$. Consequently, $\vec{\mu} \in \partial \si_K$. 

\item By definition of $\kappa^*$, it is clear that $R=Q_1$ if and only if $\kappa^*(Q_1 \, | \, Q_2) = 0$. Therefore, it suffices to show that $\sS(\vec{\eta}_2) \not \subseteq \sS(\vec{\eta}_1)$ if and only if $\kappa^*(Q_1 \, | \, Q_2) = 0$. Suppose $\sS(\vec{\eta}_2) \not \subseteq \sS(\vec{\eta}_1)$. Then, there must be $i \in [K]$ such that $\eta_{2,i} > 0$ and $\eta_{1,i} = 0$. Then, for $\alpha > 1$, 
\begin{align*}
\min_{i \in [K]} (1-\alpha) \eta_{2,i} + \alpha \eta_{1,i} < 0
\end{align*}
\noindent But, this violates the constraint of the optimization problem. Therefore, $\alpha = 1$. By statement \emph{1} in Lemma \ref{facts}, $\kappa^*(\vec{\eta}_1 \, | \, \vec{\eta}_2) = 0$. By Proposition \ref{equiv_opt}, $\kappa^*(Q_1 \, | \, Q_2) = \kappa^*(\vec{\eta}_1 \, | \, \vec{\eta}_2) = 0$.

Now, suppose $\sS(\vec{\eta}_2) \subseteq \sS(\vec{\eta}_1)$. Then, for any $i \in [K]$, if $\eta_{2,i} > 0$, then $\eta_{1,i} > 0$. Then, there is $\alpha > 1$ sufficiently close to $1$ such that
\begin{align*}
\min_{i \in [K]} \eta_{2,i} + \alpha (\eta_{1,i} - \eta_{2,i}) \geq 0
\end{align*}
\noindent By statement \emph{1} in this Lemma, $\kappa^*(\vec{\eta}_1 \, | \, \vec{\eta}_2) > 0$. By proposition \ref{equiv_opt}, $\kappa^*(Q_1 \, | \, Q_2) = \kappa^*( \vec{\eta}_1 \, | \, \vec{\eta}_2) > 0$.
\end{enumerate}
\end{proof}

\subsection{Demixing Mixed Membership Models}
\subsubsection{Lemmas}
Lemma \ref{continuity} establishes an intuitive continuity property of the two-sample version of $\kappa^*$ and the residue.
\begin{lemma}
\label{continuity}
Let $\vec{\eta}_1, \vec{\eta}_2, \vec{\eta}_3 \in \si_L$ be distinct vectors and let $\vec{\mu}$ be the residue of $\vec{\eta}_2$ wrt $\vec{\eta}_1$. Let $\alpha_n$ be a sequence such that $\alpha_n \longrightarrow 1$ as $n \longrightarrow \infty$ and let $\vec{\tau}_n$ be the residue of $\alpha_n \vec{\eta}_2 + (1-\alpha_n) \vec{\eta}_3$ wrt $\vec{\eta}_1$. Then,
\begin{enumerate}
\item $\lim_{n \longrightarrow \infty} \kappa^*( \alpha_n \vec{\eta}_2 + (1-\alpha_n) \vec{\eta}_3 \, | \, \vec{\eta}_1) = \kappa^*(\vec{\eta}_2 \, | \, \vec{\eta}_1)$

\item $\lim_{n \longrightarrow \infty} \norm{\vec{\tau}_n - \vec{\mu}} = 0$
\end{enumerate}
\end{lemma}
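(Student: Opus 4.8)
The plan is to reduce both statements to a single continuity claim about a parametric linear program on the simplex, using the ray-extension description of the two-sample $\kappa^*$ from statement \emph{1} of Lemma \ref{facts}. Write $\vec{\zeta}_n = \alpha_n \vec{\eta}_2 + (1-\alpha_n)\vec{\eta}_3$ and note that $\vec{\zeta}_n \in \si_L$ and $\vec{\zeta}_n \to \vec{\eta}_2$ as $\alpha_n \to 1$, so $\vec{\tau}_n$ is the residue of $\vec{\zeta}_n$ wrt $\vec{\eta}_1$ and $\vec{\mu}$ is the residue of $\vec{\eta}_2$ wrt $\vec{\eta}_1$. The same algebraic rewriting used to prove statement \emph{1} of Lemma \ref{facts} applies verbatim to the $\kappa^*$ operator on vectors (the vertices $\vec{e}_i$ playing the role of jointly irreducible distributions), so for every $\vec{\zeta} \in \si_L$ with $\vec{\zeta} \neq \vec{\eta}_1$,
\begin{equation*}
\alpha^*(\vec{\zeta}) \equiv \max\{\alpha \geq 1 : \vec{\eta}_1 + \alpha(\vec{\zeta} - \vec{\eta}_1) \in \si_L\}, \qquad \kappa^*(\vec{\zeta} \, | \, \vec{\eta}_1) = 1 - \tfrac{1}{\alpha^*(\vec{\zeta})},
\end{equation*}
and the residue of $\vec{\zeta}$ wrt $\vec{\eta}_1$ equals $\vec{\eta}_1 + \alpha^*(\vec{\zeta})(\vec{\zeta} - \vec{\eta}_1)$. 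Hence it suffices to prove the single limit $\alpha^*(\vec{\zeta}_n) \to \alpha^*(\vec{\eta}_2)$: statement \emph{1} then follows from continuity of $\alpha \mapsto 1 - 1/\alpha$ on $[1,\infty)$, and statement \emph{2} follows from continuity of the affine map $(\vec{\zeta},\alpha) \mapsto \vec{\eta}_1 + \alpha(\vec{\zeta}-\vec{\eta}_1)$ together with $\vec{\zeta}_n \to \vec{\eta}_2$.

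To analyze $\alpha^*$, observe that since $\vec{\eta}_1,\vec{\zeta}$ both sum to one, the membership $\vec{\eta}_1 + \alpha(\vec{\zeta}-\vec{\eta}_1)\in\si_L$ is equivalent to the coordinatewise conditions $\eta_{1,i} + \alpha(\zeta_i - \eta_{1,i}) \geq 0$, and only coordinates of strictly decreasing direction constrain $\alpha$ from above. This yields the closed form
\begin{equation*}
\alpha^*(\vec{\zeta}) = \min_{i \in J(\vec{\zeta})} \frac{\eta_{1,i}}{\eta_{1,i} - \zeta_i}, \qquad J(\vec{\zeta}) = \{ i : \zeta_i < \eta_{1,i} \},
\end{equation*}
a minimum over a nonempty finite set ($J(\vec{\zeta})\neq\emptyset$ because $\vec{\zeta}\neq\vec{\eta}_1$ and both sum to one). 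I would then establish the limit by splitting over coordinates relative to $I^- \equiv J(\vec{\eta}_2)$. For $i \in I^-$ the ratio $\eta_{1,i}/(\eta_{1,i}-\zeta_i)$ is continuous in $\vec{\zeta}$ on a neighborhood of $\vec{\eta}_2$ on which coordinate $i$ stays strictly below $\eta_{1,i}$, so these ratios at $\vec{\zeta}_n$ converge to their values at $\vec{\eta}_2$. For $i \notin I^-$ (that is $\eta_{2,i} \geq \eta_{1,i}$) I would show the corresponding term either is absent from $J(\vec{\zeta}_n)$ or diverges to $+\infty$, so it cannot be the active minimizer for large $n$; combining gives $\alpha^*(\vec{\zeta}_n) \to \min_{i\in I^-}\eta_{1,i}/(\eta_{1,i}-\eta_{2,i}) = \alpha^*(\vec{\eta}_2)$.

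The main obstacle, and the only delicate point, is the bookkeeping of the active set $J(\vec{\zeta})$, which can jump as the query point is perturbed; the critical case is a \emph{vanishing-direction} coordinate with $\eta_{2,i} = \eta_{1,i}$, since such a coordinate may enter $J(\vec{\zeta}_n)$ for finite $n$ with an arbitrarily small denominator. The resolution is a two-case check: if $\eta_{1,i} > 0$, then any $\vec{\zeta}_n$ with $i \in J(\vec{\zeta}_n)$ has $\eta_{1,i} - \zeta_{n,i} \to 0^+$, forcing the ratio to $+\infty$; if $\eta_{1,i} = 0$, then $\zeta_{n,i} \geq 0 = \eta_{1,i}$ always, so $i \notin J(\vec{\zeta}_n)$ and the coordinate never contributes. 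Likewise, coordinates with $\eta_{2,i} > \eta_{1,i}$ are strictly increasing, hence drop out of $J(\vec{\zeta}_n)$ for large $n$. This verifies that the eventual minimizer lies in $I^-$, where continuity holds, and that no excluded coordinate spuriously lowers the minimum, delivering the desired upper- and lower-semicontinuity of $\alpha^*$ along the sequence and thus both claims.
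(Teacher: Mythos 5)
Your proof is correct, but it reaches the key limit by a genuinely different route than the paper. The paper proves statement \emph{1} by viewing $\vec{\eta}_1, \vec{\eta}_2, \vec{\eta}_3$ as discrete distributions and invoking the event-wise characterization $\kappa^*(\vec{\eta}_2 \, | \, \vec{\eta}_1) = \inf_{C :\, \vec{\eta}_1(C) > 0} \vec{\eta}_2(C)/\vec{\eta}_1(C)$: since the collection of admissible events is finite, $\inf_{C :\, \vec{\eta}_1(C)>0} \vec{\eta}_1(C) > \epsilon$ for some $\epsilon > 0$, and linearity of $C \mapsto \alpha_n \vec{\eta}_2(C) + (1-\alpha_n)\vec{\eta}_3(C)$ gives upper and lower bounds on $\kappa^*_n$ that both converge to $\kappa^*(\vec{\eta}_2\, | \,\vec{\eta}_1)$; the sandwich principle finishes, and statement \emph{2} is then deduced (as you also do) by writing both residues as affine combinations whose coefficients depend continuously on the $\kappa$'s and applying the triangle inequality. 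You instead work with the ray-extension quantity $\alpha^* = 1/(1-\kappa^*)$, derive the closed form $\alpha^*(\vec{\zeta}) = \min_{i \in J(\vec{\zeta})} \eta_{1,i}/(\eta_{1,i}-\zeta_i)$ over the active set $J(\vec{\zeta})$, and prove $\alpha^*(\vec{\zeta}_n) \to \alpha^*(\vec{\eta}_2)$ by a case analysis on how coordinates enter and leave the active set, correctly isolating the delicate vanishing-direction case $\eta_{2,i} = \eta_{1,i}$ and splitting it according to $\eta_{1,i} > 0$ (ratio diverges, so never the minimizer) versus $\eta_{1,i} = 0$ (coordinate never enters $J(\vec{\zeta}_n)$). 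The paper's sandwich buys brevity: the degenerate coordinates never need to be identified because the two bounds absorb them silently. Your argument buys explicitness: it produces the exact value of $\alpha^*$ and of the residue, makes the simplex geometry and the active constraint visible, and derives both statements from a single scalar limit rather than treating statement \emph{2} as a separate corollary. Both proofs rest on the same implicit standing assumption, needed even for the lemma to parse, that $\alpha_n \in [0,1]$ so that $\alpha_n\vec{\eta}_2 + (1-\alpha_n)\vec{\eta}_3$ lies in $\si_L$ and its residue wrt $\vec{\eta}_1$ is defined.
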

\begin{proof}[Lemma \ref{continuity}]

\begin{enumerate}
\item \sloppy In order to apply the residue operator $\kappa^*$ to $\vec{\eta}_1, \vec{\eta}_2, \vec{\eta}_3$, we think of $\vec{\eta}_1,\vec{\eta}_2, \vec{\eta}_3$ as discrete probability distributions. Let $( \mathcal{X}, \sC)$ be the measurable space on which $\vec{\eta}_1, \vec{\eta}_2, \vec{\eta}_3$ are defined. Note that $\mathcal{X}$ is finite and let $\mathcal{C} = 2^\mathcal{X}$ (i.e., the power set of $\mathcal{X}$), which is also finite. By Proposition 1 of \citet{blanchard2014}, 
\begin{align*}
\kappa^*(\vec{\eta}_2 \, | \, \vec{\eta}_1) & = \inf_{C \in \mathcal{C}, \vec{\eta}_1(C) > 0} \frac{\vec{\eta}_2(C)}{\vec{\eta}_1(C)}
\end{align*}
Since $\mathcal{C}$ is finite, there is $\epsilon > 0$ such that $\inf_{C \in \mathcal{C}, \vec{\eta}_1(C) > 0} \vec{\eta}_1(C) > \epsilon$. Then,
\begin{align*}
\kappa^*(\alpha_n \vec{\eta}_2 + (1-\alpha_n) \vec{\eta}_3 \, | \, \vec{\eta}_1) & = \inf_{C \in \mathcal{C}, \vec{\eta}_1(C) > 0} \frac{\alpha_n \vec{\eta}_2(C) + (1-\alpha_n) \vec{\eta}_3(C)}{\vec{\eta}_1(C)}  \\
& \leq \alpha_n \inf_{C \in \mathcal{C}, \vec{\eta}_1(C) > 0} \frac{ \vec{\eta}_2(C)}{\vec{\eta}_1(C)} +  \frac{1-\alpha_n}{\epsilon} \\
& \longrightarrow \inf_{C \in \mathcal{C}, \vec{\eta}_1(C) > 0} \frac{ \vec{\eta}_2(C)}{\vec{\eta}_1(C)}
\end{align*}
as $n \longrightarrow \infty$. Further, since $\frac{ \vec{\eta}_2(\cdot)}{\vec{\eta}_1(\cdot)}$ and $\frac{  \vec{\eta}_3(\cdot)}{\vec{\eta}_1(\cdot)}$ are bounded over $\set{C \in \sC : \vec{\eta}_1(C) > 0}$,
\begin{align*}
\kappa^*(\alpha_n \vec{\eta}_2 + (1-\alpha_n) \vec{\eta}_3 \, | \, \vec{\eta}_1) & = \inf_{C \in \mathcal{C}, \vec{\eta}_1(C) > 0} \frac{\alpha_n \vec{\eta}_2(C) + (1-\alpha_n) \vec{\eta}_3(C)}{\vec{\eta}_1(C)} \\ 
& \geq \alpha_n \inf_{C \in \mathcal{C}, \vec{\eta}_1(C) > 0} \frac{ \vec{\eta}_2(C)}{\vec{\eta}_1(C)} + (1-\alpha_n) \inf_{C \in \mathcal{C}, \vec{\eta}_1(C) > 0}\frac{  \vec{\eta}_3(C)}{\vec{\eta}_1(C)} \\
& \longrightarrow \inf_{C \in \mathcal{C}, \vec{\eta}_1(C) > 0} \frac{ \vec{\eta}_2(C)}{\vec{\eta}_1(C)}
\end{align*}
as $n \longrightarrow \infty$. By the sandwich principle of limits, the result follows.

\item \sloppy We write $\vec{\mu} = \kappa \vec{\eta}_2 + (1-\kappa) \vec{\eta}_1$ and $\vec{\tau}_n = \kappa_n (\alpha_n \vec{\eta}_2 + (1-\alpha_n) \vec{\eta}_3) + (1-\kappa_n) \vec{\eta}_1$ where $\kappa = \kappa^*(\vec{\eta}_2 \, | \, \vec{\eta}_1)$ and $\kappa_n = \kappa^*(\alpha_n \vec{\eta}_2 + (1-\alpha_n) \vec{\eta}_3 \, | \, \vec{\eta}_1)$. Then, by the triangle inequality,
\begin{align*}
\norm{\vec{\mu} - \vec{\tau}_n} & = \norm{(\kappa_n - \kappa) \vec{\eta}_1 + (\kappa - \kappa_n \alpha_n) \vec{\eta}_2 - (1-\alpha_n) \vec{\eta}_3} \\
& \leq |\kappa_n - \kappa|\norm{\vec{\eta}_1} + |\kappa - \kappa_n \alpha_n| \norm{ \vec{\eta}_2} + |1-\alpha_n| \norm{\vec{\eta}_3} \\
& \longrightarrow 0
\end{align*}
as $n \longrightarrow \infty$ since $\alpha_n \longrightarrow 1$ and $\kappa_n \longrightarrow \kappa$ by statement 1 in Lemma \ref{continuity}.
\end{enumerate}
\end{proof}
Lemma \ref{pres_lin_ind} guarantees that certain operations in the Demix algorithm preserve linear independence of the mixture proportions.
\begin{lemma}
\label{pres_lin_ind}
Let $\vec{\tau}_1, \ldots, \vec{\tau}_K \in \si_L$ be linearly independent and $P_1,\ldots, P_L$ be jointly irreducible. Let $Q_i = \vec{\tau}_i^T \vec{P}$ for $i \in [K]$. Then for any $i,j \in [K]$ such that $i \neq j$, 
\begin{enumerate}
\item If $\vec{\eta} = \sum_{k=1}^K a_k \vec{\tau}_k$ with $a_j \neq 0$, then $\vec{\tau}_1,\ldots, \vec{\tau}_{j-1}, \vec{\eta}, \vec{\tau}_{j+1}, \ldots, \vec{\tau}_K$ are linearly independent.

\item Let $R_k$ be the residue of $Q_k$ with respect to $Q_j$ for all $k \in [K] \setminus \set{j}$. Then, $R_k = \vec{\eta}^T_k \vec{P}$ where $\vec{\eta}_k \in \si_K$ and $\vec{\eta}_1,\ldots,\vec{\eta}_{j-1},\vec{\tau}_j,\vec{\eta}_{j+1},\ldots,\vec{\eta}_K$ are linearly independent.

\item Let $\vec{\tau}^* \in \co (\vec{\tau}_1,\ldots, \vec{\tau}_k)^\circ$ and $\vec{\eta}_i \in \co (\vec{\tau}_i, \vec{\tau}^*)^\circ$ for $i \in [k]$ where $k \leq K$. Then, 
\begin{align*}
\vec{\eta}_1, \vec{\eta}_2, \ldots, \vec{\eta}_k, \vec{\tau}_{k+1},\ldots, \vec{\tau}_K
\end{align*}
are linearly independent. 

\end{enumerate}

\end{lemma}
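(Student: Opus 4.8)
The plan is to reduce all three claims to the invertibility of an explicit coefficient matrix (or, equivalently, to showing that a vanishing linear combination of the new vectors forces all coefficients to vanish), since in each case every new vector lies in $\spa(\vec{\tau}_1, \ldots, \vec{\tau}_K)$ and can be written out in the basis $\vec{\tau}_1, \ldots, \vec{\tau}_K$. The first claim is the engine for the second, and the third is the only one requiring a genuine determinant computation.

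For statement 1, I would argue by a standard exchange. Suppose $\sum_{k \neq j} b_k \vec{\tau}_k + c\,\vec{\eta} = 0$ and substitute $\vec{\eta} = \sum_k a_k \vec{\tau}_k$; collecting terms gives $\sum_{k \neq j}(b_k + c a_k)\vec{\tau}_k + c a_j \vec{\tau}_j = 0$. Linear independence of $\vec{\tau}_1, \ldots, \vec{\tau}_K$ forces $c a_j = 0$, and since $a_j \neq 0$ we get $c = 0$ and then each $b_k = 0$. (Equivalently, $\vec{\tau}_j$ is recoverable from the new set because $a_j \neq 0$, so the two sets have the same $K$-dimensional span and the new set of $K$ vectors is again a basis.)

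For statement 2, I would first use Proposition \ref{equiv_opt} together with the two-sample residue formula of Algorithm \ref{two_residue}: since $\vec{\tau}_k \neq \vec{\tau}_j$ (linearly independent vectors are distinct), the residue $\vec{\eta}_k$ of $\vec{\tau}_k$ wrt $\vec{\tau}_j$ satisfies $\kappa_k := \kappa^*(\vec{\tau}_k \mid \vec{\tau}_j) < 1$ and
\[
\vec{\eta}_k = \tfrac{1}{1-\kappa_k}\vec{\tau}_k - \tfrac{\kappa_k}{1-\kappa_k}\vec{\tau}_j .
\]
In particular $\vec{\eta}_k$ is a combination of $\vec{\tau}_k$ and $\vec{\tau}_j$ whose coefficient on $\vec{\tau}_k$ is $\tfrac{1}{1-\kappa_k} \neq 0$. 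I would then replace $\vec{\tau}_k$ by $\vec{\eta}_k$ one index $k \neq j$ at a time; at each step both $\vec{\tau}_k$ and $\vec{\tau}_j$ still belong to the current set, so statement 1 (with the replaced vector $\vec{\tau}_k$ playing the role of index $j$) applies and preserves linear independence. After processing all $k \neq j$ we obtain exactly $\vec{\eta}_1, \ldots, \vec{\eta}_{j-1}, \vec{\tau}_j, \vec{\eta}_{j+1}, \ldots, \vec{\eta}_K$.

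The hard part is statement 3, where I would compute a determinant. Writing $\vec{\tau}^* = \sum_{l=1}^k \lambda_l \vec{\tau}_l$ with all $\lambda_l > 0$ (relative interior, using that the linearly independent $\vec{\tau}_1, \ldots, \vec{\tau}_k$ are affinely independent) and $\vec{\eta}_i = (1-\beta_i)\vec{\tau}_i + \beta_i \vec{\tau}^*$ with $\beta_i \in (0,1)$ (open segment), each $\vec{\eta}_i$ lies in $\spa(\vec{\tau}_1, \ldots, \vec{\tau}_k)$. Because this subspace meets $\spa(\vec{\tau}_{k+1}, \ldots, \vec{\tau}_K)$ only in $\set{0}$ (the full family is independent), any dependence among $\vec{\eta}_1, \ldots, \vec{\eta}_k, \vec{\tau}_{k+1}, \ldots, \vec{\tau}_K$ splits, so it suffices to prove $\vec{\eta}_1, \ldots, \vec{\eta}_k$ are independent. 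Their coefficient matrix in the basis $\vec{\tau}_1, \ldots, \vec{\tau}_k$ is $B = D + \vec{\lambda}\vec{\beta}^T$, where $D = \mathrm{diag}(1-\beta_1, \ldots, 1-\beta_k)$ is invertible and $\vec{\beta} = (\beta_1, \ldots, \beta_k)^T$; the matrix determinant lemma gives $\det B = \det(D)\,(1 + \vec{\beta}^T D^{-1}\vec{\lambda}) = \prod_{i}(1-\beta_i)\big(1 + \sum_i \tfrac{\beta_i\lambda_i}{1-\beta_i}\big) > 0$, since every factor is strictly positive. (Equivalently, avoiding the lemma, from $\sum_i c_i \vec{\eta}_i = 0$ one reads off $c_l(1-\beta_l) = -s\lambda_l$ with $s = \sum_i c_i\beta_i$, whence $s\big(1 + \sum_l \tfrac{\lambda_l\beta_l}{1-\beta_l}\big)=0$ forces $s=0$ and then all $c_l = 0$.) This is the step where the geometric hypotheses bite: positivity of the $\lambda_l$ and $\beta_i \in (0,1)$ are precisely what keeps the determinant away from zero, and the claim would fail if $\vec{\tau}^*$ were allowed on the boundary of $\co(\vec{\tau}_1, \ldots, \vec{\tau}_k)$ or $\vec{\eta}_i$ at an endpoint of its segment.
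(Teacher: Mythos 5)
Your proofs of statements 1 and 2 are correct and essentially the paper's own: the same exchange argument for statement 1, and for statement 2 the same one-index-at-a-time replacement, using Proposition \ref{equiv_opt} to pass to mixture proportions and the segment form of the residue (the paper writes $\vec{\eta}_k = (1-\alpha_k)\vec{\tau}_j + \alpha_k \vec{\tau}_k$ with $\alpha_k \geq 1$, which is exactly your $\alpha_k = \tfrac{1}{1-\kappa_k}$) so that statement 1 applies at each step. For statement 3, however, you take a genuinely different route. The paper also proceeds by replacing $\vec{\tau}_i$ with $\vec{\eta}_i$ one at a time, but at each step it expands $\vec{\tau}^*$ in the current partially-replaced basis and runs a separate inner induction, based on convexity ($\vec{\tau}^*$ must remain in the convex hull of the original $\vec{\tau}_i$'s), to show the coefficient $\alpha_k$ on the vector about to be replaced cannot be negative, whence $\beta\alpha_k + (1-\beta) \neq 0$ and statement 1 applies again. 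You instead split off $\spa(\vec{\tau}_{k+1},\ldots,\vec{\tau}_K)$ (valid, since the full family is independent) and compute the determinant of the $k \times k$ coefficient matrix directly as a rank-one update of a diagonal matrix, obtaining $\prod_i (1-\beta_i)\bigl(1 + \sum_i \tfrac{\beta_i \lambda_i}{1-\beta_i}\bigr) > 0$. What your approach buys: it is non-inductive, yields an explicit quantitative lower bound on the determinant (potentially useful for perturbation or finite-sample arguments), and isolates precisely where the interiority hypotheses enter ($\lambda_l > 0$ and $\beta_i \in (0,1)$) — a point the paper's argument leaves implicit. What the paper's approach buys: it reuses statement 1 as the only linear-algebraic tool and stays entirely within convex-geometric reasoning, matching the style of the surrounding lemmas. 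One cosmetic slip on your side: with rows indexed by $i$ and columns by $l$, the coefficient matrix is $D + \vec{\beta}\vec{\lambda}^T$ rather than $D + \vec{\lambda}\vec{\beta}^T$; since these are transposes of one another the determinant identity you invoke is unaffected.
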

\begin{proof}[Lemma \ref{pres_lin_ind}]
\begin{enumerate}
\item Let $\vec{\eta} = \sum_{k=1}^K a_k \vec{\tau}_k$ with $a_j \neq 0$ for some $j \in [K]$.  Let $b_1,\ldots, b_K \in \bbR$ such that 
\begin{align*}
0 & = b_1 \vec{\tau}_1 + \ldots + b_{j-1} \vec{\tau}_{j-1} + b_j \vec{\eta} + b_{j+1} \vec{\tau}_{j+1} + \ldots + b_K \vec{\tau}_K \\
& = (b_1 + b_j a_1) \vec{\tau}_1 + \ldots + (b_{j-1} + b_j a_{j-1}) \vec{\tau}_{j-1} + b_j a_j \vec{\tau}_j + \\
& (b_{j+1} + b_j a_{j+1}) \vec{\tau}_{j+1} + \ldots + (b_K + b_j a_K) \vec{\tau}_K
\end{align*}
Since $\vec{\tau}_1,\ldots,\vec{\tau}_K$ are linearly independent, $b_j a_j = 0$. Since $a_j \neq 0$ by hypothesis, $b_j = 0$. Therefore, the previous equation reduces to
\begin{align*}
b_1 \vec{\tau}_1 + \ldots + b_{j-1} \vec{\tau}_{j-1}  + b_{j+1}  \vec{\tau}_{j+1} + \ldots + b_K \vec{\tau}_K & = 0
\end{align*}
By the linear independence of $\vec{\tau}_1,\ldots,\vec{\tau}_K$, $b_1 = \ldots = b_K = 0$. The result follows.

\item Let $k \in [K] \setminus \set{j}$. By Proposition \ref{equiv_opt}, $\vec{\eta}_k$ is the residue of $\vec{\tau}_k$ with respect to $\vec{\tau}_j$ and $\vec{\eta}_k \in \si_L$. By statement 1 of Lemma \ref{facts}, there is $\alpha_k \geq 1$ such that $\vec{\eta}_k = (1- \alpha_k) \vec{\tau}_j + \alpha \vec{\tau}_k$. By statement 1 of Lemma \ref{pres_lin_ind}, $\vec{\eta}_1, \vec{\tau}_2, \ldots, \vec{\tau}_k$ are linearly independent. Using induction and statement 1 of \ref{pres_lin_ind}, $\vec{\eta}_1, \ldots, \vec{\eta}_{j-1}, \vec{\tau}_j, \vec{\eta}_{j+1}, \ldots, \vec{\eta}_k$ are linearly independent.

\item We show the result inductively. By statement \emph{1} of Lemma \ref{pres_lin_ind}, it follows immediately that $\vec{\eta}_1, \vec{\tau}_2,\ldots, \vec{\tau}_K$ are linearly independent. Now, suppose $\vec{\eta}_1,\ldots, \vec{\eta}_{k-1}, \vec{\tau}_k, \ldots, \vec{\tau}_K$ are linearly independent. Then, there exist unique $\alpha_1, \ldots, \alpha_K$ such that $\vec{\tau}^* = \sum_{i=1}^{k-1} \alpha_i \vec{\eta}_i + \sum_{i=k}^K \alpha_i \vec{\tau}_i$. Further, since $\vec{\eta}_k \in \co(\vec{\tau}_j, \vec{\tau}^*)^\circ$, there exists $\beta \in (0,1)$ such that $\vec{\eta}_k = \beta (\sum_{i=1}^{k-1} \alpha_i \vec{\eta}_i + \sum_{i=k}^K \alpha_i \vec{\tau}_i) + (1- \beta) \vec{\tau}_k$. By statement 1 of Lemma \ref{pres_lin_ind}, it suffices to show that $\beta \alpha_k + (1-\beta) \neq 0$. Suppose to the contrary that $\beta \alpha_k + (1-\beta) = 0$. Then, $\alpha_k = 1- \frac{1}{\beta} < 0$. So, it is enough to show that $\vec{\tau}^* \in \co(\vec{\eta}_1, \ldots, \vec{\eta}_k, \vec{\tau}_{k+1}, \ldots, \vec{\tau}_K)$.

We prove the claim inductively. By Statement 1 of \ref{pres_lin_ind}, $\vec{\eta}_1, \vec{\tau}_2, \ldots, \vec{\tau}_K$ are linearly independent, so there exists unique $c_1, \ldots, c_K$ such that $\vec{\tau}^* = c_1 \vec{\eta}_1 + \sum_{i=2}^K c_i \vec{\tau}_i$. Since $\vec{\eta}_1 \in \co(\vec{\tau}_1, \vec{\tau}^*)$, there exists $\gamma \in (0,1)$ such that $c_1 (\gamma \vec{\tau}_1 + (1- \gamma) \vec{\tau}^*) + \sum_{i=2}^K c_i \vec{\tau}_i = \vec{\tau}^*$. If $c_1 < 0$, then $\vec{\tau}^* \not \in \co(\vec{\tau}_1, \ldots, \vec{\tau}_K)$, which is a contradiction. The claim follows inductively.

Having established the claim, the result follows.
\end{enumerate}
\end{proof}
Lemma \ref{mod_ident_lemma} establishes that the operation performed in the last for loop at the end of the Demix algorithm gives the desired distribution. 
\begin{lemma}
\label{mod_ident_lemma}
Let $P_1, \ldots,P_L$ be jointly irreducible, $(Q_1, \ldots, Q_{K-1})$ be a permutation of $(P_{i_1}, \ldots, P_{i_{K-1}})$ and $Q^1_K \in \co(P_{i_1}, \ldots, P_{i_K})^\circ$. Define the sequence 
\begin{align*}
Q_K^i & \longleftarrow \text{Residue}(Q_K^{i-1} \, | \, Q_{i-1})
\end{align*}
\noindent Then, $Q_K^K = P_{i_K}$.
\end{lemma}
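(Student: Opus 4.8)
The plan is to push the entire computation out of distribution space and into the simplex of mixture proportions via Proposition \ref{equiv_opt}, where the residue-against-a-vertex operation becomes completely transparent. First I would note that $P_{i_1}, \ldots, P_{i_K}$, being a subset of the jointly irreducible family $P_1, \ldots, P_L$, are themselves jointly irreducible: if $\sum_{l=1}^K \gamma_l P_{i_l}$ is a distribution, then padding with zero coefficients exhibits it as an element of $\spa(P_1, \ldots, P_L)$, so characterization \emph{(b)} forces every $\gamma_l \geq 0$. Hence they are linearly independent by Lemma B.1 of \citet{blanchard2014}, and every $Q \in \co(P_{i_1}, \ldots, P_{i_K})$ has a unique mixture proportion $\vec{\eta} \in \si_K$ under the identification $P_{i_l} \leftrightarrow \vec{e}_l \in \bbR^K$. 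In these coordinates, $(Q_1, \ldots, Q_{K-1})$ being a permutation of $(P_{i_1}, \ldots, P_{i_{K-1}})$ means $Q_j$ corresponds to $\vec{e}_{\sigma(j)}$ for some permutation $\sigma$ of $[K-1]$, and $Q_K^1 \in \co(P_{i_1}, \ldots, P_{i_K})^\circ$ corresponds to some $\vec{\mu}^1 \in \si_K$ with all coordinates strictly positive.

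The decisive computation is the residue of a point against a \emph{vertex} of $\si_K$. Treating a mixture proportion $\vec{\mu} \in \si_K$ as a discrete distribution on $[K]$ and $\vec{e}_j$ as the point mass at $j$ (exactly as in the proof of Lemma \ref{continuity}), the infimum characterization in Proposition \ref{label_noise_bin_case} gives $\kappa^*(\vec{\mu} \, | \, \vec{e}_j) = \inf_{C \ni j} \vec{\mu}(C) = \mu_j$, attained at $C = \set{j}$. The residue is then the distribution $\vec{\gamma}$ satisfying $\vec{\mu} = (1-\mu_j)\vec{\gamma} + \mu_j \vec{e}_j$, i.e. $\vec{\gamma} = (\vec{\mu} - \mu_j \vec{e}_j)/(1-\mu_j)$, whose $j$th coordinate is $0$ and whose remaining coordinates are $\mu_l/(1-\mu_j)$. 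Thus residue-against-$\vec{e}_j$ zeroes coordinate $j$ and rescales the rest by a positive factor; it sends positive coordinates (other than $j$) to positive coordinates and deletes exactly the index $j$ from the support set.

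Equipped with this, the proof is a short induction. Using Proposition \ref{equiv_opt} to pass between $Q_K^{i-1}$ and its mixture proportion $\vec{\mu}^{i-1}$, I would maintain the invariant that $\vec{\mu}^{i} \in \si_K$ has support set exactly $[K] \setminus \set{\sigma(1), \ldots, \sigma(i-1)}$. The base case $i=1$ holds since $\vec{\mu}^1$ has full support. For the step, because $\sigma$ takes values in $[K-1]$ the $K$th coordinate is never among the deleted indices, so $\mu^{i-1}_K > 0$; consequently $\vec{\mu}^{i-1} \neq \vec{e}_{\sigma(i-1)}$, the residue $\text{Residue}(Q_K^{i-1} \, | \, Q_{i-1})$ is well defined with $\kappa^* = \mu^{i-1}_{\sigma(i-1)} < 1$, and by the vertex computation its support loses precisely $\sigma(i-1)$. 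After the final iteration $i=K$ the deleted indices are $\set{\sigma(1), \ldots, \sigma(K-1)} = [K-1]$, leaving $\vec{\mu}^K = \vec{e}_K$, which corresponds under Proposition \ref{equiv_opt} to $Q_K^K = P_{i_K}$, as claimed.

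The main obstacle is expository rather than deep: one must legitimately invoke Proposition \ref{equiv_opt} at \emph{every} iteration, which requires confirming that each intermediate $Q_K^{i-1}$ genuinely stays inside $\co(P_{i_1}, \ldots, P_{i_K})$ (so a mixture proportion in $\si_K$ exists) and that the two distributions compared are distinct (so the residue is well defined and $\kappa^* < 1$). Both facts are delivered by the single observation threaded through the induction — that the $K$th coordinate remains strictly positive — since this simultaneously certifies membership in the simplex and distinctness from the vertex $\vec{e}_{\sigma(i-1)}$. Once the vertex-residue identity is in place, no further delicacy arises.
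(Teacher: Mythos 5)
Your proof is correct, and while it shares the paper's overall strategy---transferring the computation to mixture-proportion space via Proposition \ref{equiv_opt} and inducting along the residue sequence---the engine of your induction is genuinely different. The paper maintains two inductive claims proved through its general-purpose machinery: linear independence of $\vec{\mu}_i, \vec{e}_1, \ldots, \vec{e}_{K-1}$ (via statement 1 of Lemma \ref{pres_lin_ind} and statement 1 of Lemma \ref{facts}), and the geometric claim that $\vec{\mu}_n$ lies in the \emph{interior} of the face $\co(\vec{e}_n, \ldots, \vec{e}_K)$, the latter established by a contradiction argument using the boundary property (statement 2 of Lemma \ref{facts}). You instead exploit the special structure of a residue taken against a \emph{vertex}: treating proportions as discrete distributions, $\kappa^*(\vec{\mu} \, | \, \vec{e}_j) = \mu_j$ and the residue is $(\vec{\mu} - \mu_j \vec{e}_j)/(1-\mu_j)$, so the operation exactly deletes index $j$ from the support and positively rescales the remaining coordinates. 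This closed form makes the support-set bookkeeping trivial and eliminates the paper's interiority-by-contradiction step entirely. Two remarks: first, your invariant (the $K$th coordinate stays strictly positive) is in fact equivalent to the paper's linear-independence invariant, since $\vec{\mu}, \vec{e}_1, \ldots, \vec{e}_{K-1}$ are linearly independent precisely when $\mu_K > 0$; this is worth stating explicitly if you want to invoke Proposition \ref{equiv_opt} exactly as written, because its hypothesis asks for a full linearly independent family rather than a distinct pair (though the paper itself repeatedly applies it to pairs, e.g.\ in Lemma \ref{facts}). Second, your preliminary observation that the subfamily $P_{i_1}, \ldots, P_{i_K}$ inherits joint irreducibility by zero-padding is both needed and correct; the paper sidesteps it by working in the full simplex. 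What your route buys is transparency and brevity for this lemma; what the paper's buys is reuse, since the lemmas it leans on also serve the proof of Theorem \ref{demix_identification}.
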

\begin{proof}[Lemma \ref{mod_ident_lemma}]
Relabel the distributions so that $Q_j = P_j$. Let $\vec{\mu}_1$ denote the mixture proportion of $Q_K^1$ and $\vec{e}_j$ the mixture proportion of $P_j$. Consider the sequence given by
\begin{align*}
\vec{\mu}_i & \longleftarrow \text{Residue}(\vec{\mu}_{i-1} \, | \, \vec{e}_{i-1})
\end{align*}
We claim that $\vec{\mu}_i, \vec{e}_1, \ldots, \vec{e}_{K-1}$ are linearly independent and $\vec{\mu}_i$ is the mixture proportion of $\vec{Q}_K^i$. We prove it inductively. The base case is clear by the hypothesis. Suppose that $\vec{\mu}_n, \vec{e}_1, \ldots, \vec{e}_{K-1}$ are linearly independent and $\vec{\mu}_n$ is the mixture proportion of $Q_K^n$. Since $P_1, \ldots, P_L$ are jointly irreducible, by Proposition \ref{equiv_opt}, $\text{Residue}(\vec{\mu}_{n} \, | \, \vec{e}_{n})$ gives the mixture proportion of $Q_2^{n+1}$. By statement \emph{1} of Lemma \ref{pres_lin_ind}, for any $\alpha \geq 1$ $\alpha \vec{\mu}_n + (1-\alpha) \vec{e}_n, \vec{e}_1, \ldots, \vec{e}_{K-1}$ is linearly independent. Therefore, in particular, by statement $1$ of Lemma \ref{facts}, $\text{Residue}(\vec{\mu}_{n} \, | \, \vec{e}_n), \vec{e}_1, \ldots, \vec{e}_{K-1}$ are linearly independent. The claim follows inductively.

It is enough to show that $\vec{\mu}_K = \vec{e}_K$. We claim that for $n \leq K-1$, $\vec{\mu}_{n} \in \co(\vec{e}_{n}, \ldots, \vec{e}_K)^\circ$. Consider the base case. $\vec{\mu}_2 \in \partial \co(\vec{e}_1, \ldots, \vec{e}_K)$ by Statement \emph{2} of Lemma \ref{facts}. Since $\vec{\mu}_2 = \alpha \vec{\mu}_1 + (1-\alpha) \vec{e}_1$ for some $\alpha \geq 1$, $\vec{\mu}_2 \in \co(\vec{e}_2, \ldots, \vec{e}_K)$. Suppose that $\vec{\mu}_2 \in \partial \co(\vec{e}_2, \ldots, \vec{e}_K)$. Without loss of generality, suppose that $\vec{\mu}_2 \in \co(\vec{e}_2, \ldots, \vec{e}_{K-1})$. Then, since $\vec{\mu}_2 = \alpha \vec{\mu}_1 + (1-\alpha) \vec{e}_1$ for some $\alpha \geq 1$, $\vec{\mu}_1 \in \co(\vec{e}_1, \ldots, \vec{e}_{K-1})$. This implies that $\vec{\mu}_1 \in \partial \co(\vec{e}_1, \ldots, \vec{e}_K)$, which is a contradiction. Hence, $\vec{\mu}_2 \in \co(\vec{e}_2, \ldots, \vec{e}_K)^\circ$. Inductively, repeating the same argument gives the claim. Therefore, $\vec{\mu}_{K-1} \in \co(\vec{e}_{K-1}, \vec{e}_K)^\circ$. By considering statement \emph{1} of Lemma \ref{facts}, it is easy to see that $\vec{\mu}_K = \vec{e}_K$. This completes the proof.
\end{proof}
\subsubsection{The FaceTest Algorithm}
We turn to a key subroutine in the Demix algorithm: the FaceTest algorithm (see Algorithm \ref{face_test_alg}). This algorithm tests whether a set of distributions $Q_1,\ldots, Q_K$ are on the interior of the same face. It computes $R_{i,j} \longleftarrow \text{Residue}(Q_i \, | \, Q_j)$ for every pair; it then computes $\kappa^*(Q_i \, | \, R_{i,j})$ determining whether the two distributions are equal. If there is a single pair for which the two distributions are equal, then $Q_1,\ldots, Q_K$ are not on the interior of the same face. Otherwise, they are on the interior of the same face.
\begin{algorithm}
\caption{FaceTest($Q_1,\ldots, Q_K$)}
\begin{algorithmic}[1]
\label{face_test_alg}
\FOR{$i=1,\ldots, K$}
\FOR{$j=1,\ldots, K$}
\IF{$i \neq j$}
\STATE $R_{i,j} \longleftarrow \text{Residue}(Q_i \, | \, Q_j)$
\IF{$\kappa^*(Q_i \, | \, R_{i,j}) = 1$}
\RETURN $0$
\ENDIF
\ENDIF
\ENDFOR
\ENDFOR
\RETURN $1$
\end{algorithmic}
\end{algorithm}

\begin{prop}
\label{face_test}
Let $P_1,\ldots, P_K$ be jointly irreducible and $Q_1, \ldots, Q_K \in \co(P_1, \ldots, P_K)$ be distinct. FaceTest($Q_1, \ldots, Q_K$) returns $1$ if and only if $Q_1,\ldots, Q_K$ lie on the interior of the same face of $\si_K$. 
\end{prop}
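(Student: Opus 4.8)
The plan is to reduce the behaviour of FaceTest to a purely combinatorial statement about the support sets of the mixture proportions, and then to identify ``all support sets equal'' with ``common face interior.'' Since $P_1, \ldots, P_K$ are jointly irreducible, they are linearly independent by Lemma B.1 of \citet{blanchard2014}, so each $Q_i \in \co(P_1, \ldots, P_K)$ has a unique mixture proportion $\vec{\eta}_i \in \si_K$ with $Q_i = \vec{\eta}_i^T \vec{P}$, and distinctness of the $Q_i$ yields distinctness of the $\vec{\eta}_i$. This is what guarantees that every call $R_{i,j} \longleftarrow \text{Residue}(Q_i \, | \, Q_j)$ in the algorithm is well defined.

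First I would characterize the inner test. By Proposition \ref{label_noise_bin_case}, $\kappa^*(F_0 \, | \, F_1) = 1$ if and only if $F_0 = F_1$, so $\kappa^*(Q_i \, | \, R_{i,j}) = 1$ holds exactly when $R_{i,j} = Q_i$, i.e.\ when the residue of $Q_i$ with respect to $Q_j$ equals $Q_i$ itself. Applying statement \emph{3} of Lemma \ref{facts} with the identifications $Q_1 \mapsto Q_i$ and $Q_2 \mapsto Q_j$, this occurs if and only if $\sS(\vec{\eta}_j) \not\subseteq \sS(\vec{\eta}_i)$. Hence FaceTest returns $0$ iff some ordered pair $i \neq j$ satisfies $\sS(\vec{\eta}_j) \not\subseteq \sS(\vec{\eta}_i)$, and returns $1$ iff $\sS(\vec{\eta}_j) \subseteq \sS(\vec{\eta}_i)$ for every ordered pair. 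Because the loops range over all ordered pairs, applying the containment to both $(i,j)$ and $(j,i)$ upgrades it to equality, so FaceTest returns $1$ iff $\sS(\vec{\eta}_1) = \cdots = \sS(\vec{\eta}_K)$.

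It then remains to match ``all support sets equal'' with ``interior of the same face.'' Writing $A$ for the common support set, a point $\vec{\eta} \in \si_K$ lies in the relative interior of the face $\co(\set{\vec{e}_j : j \in A})$ precisely when $\eta_j > 0$ for $j \in A$ and $\eta_j = 0$ otherwise, i.e.\ precisely when $\sS(\vec{\eta}) = A$. Thus $\vec{\eta}_1, \ldots, \vec{\eta}_K$ lie on the interior of a common face iff they share a common support set, which, by the paper's definition of ``on the same face'' stated directly in terms of mixture proportions, is exactly the condition reached in the previous step. Combining the two reductions gives the claimed equivalence.

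The step I expect to be most delicate is pinning down the orientation of the residue in Lemma \ref{facts}: the test detects whether $Q_i$ loses no mass when its $Q_j$-component is removed, which corresponds to $\sS(\vec{\eta}_j) \not\subseteq \sS(\vec{\eta}_i)$ rather than the reverse containment, and it is the symmetrization over \emph{ordered} pairs that converts these one-directional containments into equalities. I would also flag the minor boundary convention that when the shared support set is all of $[K]$ the relevant face is the full simplex $\si_K$, so that proportions lying in $\si_K^\circ$ are still correctly reported as sharing a face.
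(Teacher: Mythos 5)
Your proposal is correct and takes essentially the same route as the paper's proof: both reduce the inner test $\kappa^*(Q_i \,|\, R_{i,j}) = 1$ to the support-set condition $\sS(\vec{\eta}_j) \not\subseteq \sS(\vec{\eta}_i)$ via statement \emph{3} of Lemma \ref{facts} together with Proposition \ref{label_noise_bin_case}, and then identify equality of all support sets with lying on the interior of a common face. If anything, your explicit handling of the ordered-pair symmetrization and of the convention that the common face may be a lower-dimensional face or the full simplex is slightly more careful than the paper's two-directional argument, which passes over these points silently.
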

\begin{proof}[Proposition \ref{face_test}]
Let $Q_j = \vec{\eta}^T_j \vec{P}$ for $\vec{\eta}_j \in \si_K$ and all $j \in [K]$. Suppose that $Q_1,\ldots, Q_K$ lie on the interior of the same face, i.e., there exists $i \in [K]$ such that $\vec{\eta}_1,\ldots, \vec{\eta}_K \in \co (\set{\vec{e}_j : j \neq i})^\circ$. Then, $\sS(Q_1) = \ldots = \sS(Q_K)$. By statement 3 of Lemma \ref{facts}, $R_{i,j} \neq Q_i$ for all $i,j \in [K]$ such that $i \neq j$. Then, by Proposition \ref{label_noise_bin_case}, $\kappa^*(Q_i \, | \,  R_{i,j}) < 1$. Hence, FaceTest($ Q_1, \ldots, Q_K$) returns $1$.

Suppose that $Q_1,\ldots,Q_K$ do not all lie on the interior of the same face. Then, there exists $Q_i, Q_j$ that do not lie on the interior of the same face. Without loss of generality, suppose that $\sS(Q_j) \not \subseteq S(Q_i)$. Then, by statement 3 of Lemma \ref{facts}, $R_{i,j} = Q_i$. By Proposition \ref{label_noise_bin_case}, $\kappa^*(R_{i,j} \, | \, Q_i) = 1$. Hence, FaceTest($Q_1, \ldots, Q_K$) returns $0$.
\end{proof}
\subsubsection{The Demix Algorithm}
\begin{proof}[Theorem \ref{demix_identification}]
We use the notation from the description of the algorithm only replacing $K$ with $L$ and $S_i$ with $\p{i}$. We prove the result by induction. Suppose $L=2$. Then, $\vec{\pi}_i \in \co(P_1, P_2)$ and $\vec{\pi}_2 \in \co(P_1, P_2)$ and $\vec{\pi}_1 \neq \vec{\pi}_2$ by linear independence of $\vec{\pi}_1$ and $\vec{\pi}_2$. By Lemma 2 of \citet{blanchard2014}, it follows that
\begin{align*}
\set{\vec{e}_1, \vec{e}_2} & = \set{ \vec{\mu} : \vec{\mu} \text{ is the residue of } \vec{\pi}_i \text{ with respect to } \vec{\pi}_j, i,j \in [2], i \neq j}
\end{align*}
\noindent This fact and Proposition \ref{equiv_opt} imply that if $Q_1$ is the residue of $\tilde{P}_1$ with respect to $\p{2}$ and $Q_2$ is the residue of $\p{2}$ with respect to $\p{1}$, then $(Q_1,Q_2)$ is a permutation of $(P_1, P_2)$, completing the base case.

Suppose $L > 2$. With probability $1$, $Q \in \co (\p{2},\ldots, \p{L})^\circ$. We can write $Q = \vec{\eta}^T \vec{P}$ where clearly we have that $\vec{\eta}$ is a uniformly distributed random vector in $\co(\vec{\pi}_2,\ldots,\vec{\pi}_L)$. Let $R$ be the residue of $Q$ with respect to $\p{1}$. By Proposition \ref{equiv_opt}, we can write $R = \vec{\lambda}^T \vec{P}$ where $\vec{\lambda}$ is the residue of $\vec{\eta}$ with respect to $\vec{\pi}_1$. By statement \emph{2} of Lemma \ref{facts}, $\vec{\lambda} \in \partial \si_L$. 
\begin{description}
\item[Step 1:] We claim that with probability $1$, there is $l \in [L]$ such that $\vec{\lambda} \in \co (\set{\vec{e}_j : j \neq l})^\circ$. Let $B_{i,j} = \co (\set{\vec{\pi}_1} \cup \set{\vec{e}_k :k \neq i,j})$ where $i,j \in [L]$ and $i \neq j$ and let $C = \co(\vec{\pi}_2,\ldots, \vec{\pi}_L)$. First, we argue that $C \cap B_{i,j}$ has affine dimension at most $L-3$.\footnote{Note that if $\vec{v}_1,\ldots,\vec{v}_n \in \bbR^L$ are linearly independent and $n \leq L$, then $\aff(\vec{v}_1,\ldots,\vec{v}_n)$ has affine dimension $n-1$.} Since $\vec{\pi}_2,\ldots,\vec{\pi}_L$ are linearly independent, $C$ has affine dimension $L-2$. Since $\set{\vec{e}_k :k \neq i,j}$ are linearly independent, $B_{i,j}$ has affine dimension $L-2$ or $L-3$. If $B_{i,j}$ has affine dimension $L-3$, then $C \cap B_{i,j}$ has affine dimension at most $L-3$. So, suppose that $B_{i,j}$ has affine dimension $L-2$. If $C \cap B_{i,j}$ has affine dimension $L-2$, then $\aff C = \aff B_{i,j}$. Then, in particular, $\vec{\pi}_1 \in \aff C$. But, this contradicts the linear independence of $\vec{\pi}_1,\ldots, \vec{\pi}_L$. Therefore, $C \cap B_{i,j}$ has affine dimension at most $L-3$.

Because $C$ has affine dimension $L-2$ and $\vec{\eta}$ is a uniformly distributed random vector in $C$, with probability $1$, $\vec{\eta} \not \in \cup_{i,j \in [L], i \neq j} B_{i,j}$. Since $\vec{\pi}_1 \in B_{i,j}$ for all $i,j$ and $\vec{\eta} \in \co(\vec{\lambda}, \vec{\pi}_1)$ by definition, the convexity of $B_{i,j}$ implies that $\vec{\lambda} \not \in \cup_{i\neq j} B_{i,j}$. Since $\vec{\lambda} \in \partial \si_L$, the claim follows.

\item[Step 2:] Let $R^{(n)}_i$ be the residue of $m_\frac{n-1}{n}(\p{i}, Q)$ with respect to $\p{1}$. We claim that there is some finite integer $N \geq 2$ such that for all $n \geq N$, 
\begin{align*}
\text{FaceTest}(R^{(n)}_2, \ldots, R^{(n)}_L)
\end{align*}
\noindent returns $1$. Let $m_\frac{n-1}{n}(\p{i}, Q) = \vec{\tau}^{(i)^T}_n \vec{P}$ for $i \in [L] \setminus \set{1}$; note that $\vec{\tau}_{n}^{(i)} = \frac{1}{n} \vec{\pi}_i + \frac{n-1}{n}\vec{\eta}$ and, consequently, $\vec{\tau}^{(i)}_n \in \si_L$. Since $\vec{\eta} \in \co(\vec{\pi}_2,\ldots,\vec{\pi}_L)^\circ$ with probability $1$ and $\vec{\tau}_n^{(i)} \in \co(\vec{\vec{\pi}_i}, \vec{\eta})^\circ$ for $i \in [L] \setminus \set{1}$ for all $n \in \mathbb{N}$ and $\vec{\pi}_1,\ldots, \vec{\pi}_L$ are linearly independent, it follows that for all $n \in \mathbb{N}$ with probability $1$, $\vec{\pi}_1, \vec{\tau}_n^{(2)}, \ldots, \vec{\tau}_n^{(L)}$ are linearly independent by statement \emph{3} in Lemma \ref{pres_lin_ind}. Fix $i \in [L] \setminus \set{1}$. It suffices to show that there is large enough $N$ such that for $n \geq N$, $\text{Residue}(m_\frac{n-1}{n}(\p{i}, Q) \, | \, \p{1})$ is on the same face as $R$ (the other cases are similar). Let $R_i^{(n)} = \vec{\mu}_i^{(n)^T} \vec{P}$; by Proposition \ref{equiv_opt}, $\vec{\mu}_i^{(n)} \in \si_L$ and $\vec{\mu}_i^{(n)}$ is the residue of $\vec{\tau}_n^{(n)}$ with respect to $\vec{\pi}_1$. By Proposition \ref{equiv_opt}, it suffices to show that $\sS( \vec{\mu}_i^{(n)}) = \sS(\vec{\lambda})$. As $n \longrightarrow \infty$, $\vec{\tau}^{(i)}_n = (1 - \frac{n -1}{n}) \vec{\pi}_i + \frac{n-1}{n} \vec{\eta} \longrightarrow \vec{\eta}$, hence by statement \emph{2} in Lemma \ref{continuity}, $\norm{\vec{\mu}_i^{(n)} - \vec{\lambda}} \longrightarrow 0$. Since with probability $1$, $\vec{\lambda} \in \co (\set{\vec{e}_j : j \neq l})^\circ$, it follows that for some large enough $n$, $\vec{\mu}_i^{(n)} \in \co (\set{\vec{e}_j : j \neq l})^\circ$. 

\item[Step 3:] \sloppy The algorithm recurses on $R_2^{(n)},\ldots, R_L^{(n)}$. Since $\vec{\pi}_1, \vec{\tau}_n^{(2)}, \ldots, \vec{\tau}_n^{(L)}$ are linearly independent, it follows by statement \emph{2} in Lemma \ref{pres_lin_ind} that $\vec{\pi}_1, \vec{\mu}_2^{(n)}, \ldots, \vec{\mu}_L^{(n)}$ are linearly independent. Therefore, by the inductive hypothesis, if $(Q_1, \ldots, Q_{L-1}) \longleftarrow \text{Demix}(R^{(n)}_2,\ldots,R^{(n)}_L)$, then $|\set{P_1,\ldots, P_L} \setminus \set{Q_1, \ldots, Q_{L-1}}| = 1$. Suppose without loss of generality that $(Q_1, \ldots, Q_{L-1})$ is a permutation of $(P_1, \ldots, P_{L-1})$. Note that $\frac{1}{L} \sum_{i=1}^L \p{i} \in \co(P_1, \ldots, P_L)^\circ$ since $\vec{\Pi}$ has full rank. Since lines \ref{last_loop_start_demix}-\ref{last_loop_end_demix} generates the same sequence of distributions as the sequence given by Lemma \ref{mod_ident_lemma} and the conditions of Lemma \ref{mod_ident_lemma} are satisfied, at the end of line the given for-loop, $Q_L = P_L$ by Lemma \ref{mod_ident_lemma}. The result follows. 
\end{description}
\end{proof}
\begin{rem}
We could replace lines (18-22) of the Demix algorithm with a single application of the multi-sample version of $\kappa^*$ from \citet{blanchard2014}.
\end{rem}
\subsubsection{The Non-Square Demix Algorithm}
Now, we examine the non-square case of the demixing problem ($M > L$). Note that knowledge of $L$ is essential to this approach since one must resample exactly $L$ distributions in order to run the square Demix algorithm.
\begin{algorithm}
\caption{NonSquareDemix($\p{1}, \ldots, \p{M}$)}
\label{non_square_demix_alg}
\begin{algorithmic}[1]
\STATE $R_1, \ldots, R_L \longleftarrow \text{ independently uniformly distributed elements in} \co (\p{1}, \ldots, \p{M})$
\STATE $(Q_1,\ldots, Q_L) \longleftarrow$ Demix($R_1,\ldots,R_L$)\\
\RETURN $(Q_1,\ldots, Q_L)$
\end{algorithmic}
\end{algorithm}
\begin{cor}
\label{non_square}
Suppose $M > L$. Let $P_1, \ldots, P_L$ be jointly irreducible and $\vec{\Pi}$ satisfy \textbf{(B$'$)}. Then, with probability $1$, NonSquareDemix$(\p{1}, \ldots, \p{M})$ terminates and returns $(Q_1,\ldots, Q_L)$ such that $(Q_1,\ldots, Q_L)$ is a permutation of $(P_1, \ldots, P_L)$.
\end{cor}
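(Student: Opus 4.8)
The plan is to reduce the non-square case directly to Theorem~\ref{demix_identification}. Concretely, I would show that with probability $1$ the $L$ resampled distributions satisfy $R_i = \vec{\rho}_i^T \vec{P}$ for mixture proportions $\vec{\rho}_1, \ldots, \vec{\rho}_L \in \si_L$ that are linearly independent. Once this is in hand, the call Demix$(R_1,\ldots,R_L)$ meets the hypotheses of Theorem~\ref{demix_identification} (joint irreducibility of the $P_i$ is assumed, and the mixture proportions are linearly independent), so that call terminates and returns a permutation of $(P_1,\ldots,P_L)$ with probability $1$. Thus the only real work is the almost-sure linear independence of the $\vec{\rho}_i$.

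First I would record that each $R_i$, being a convex combination $\sum_{j=1}^M w_{i,j}\, \p{j}$ with weights $\vec{w}_i$ drawn from an absolutely continuous law on $\si_M$, satisfies $R_i = \vec{\rho}_i^T \vec{P}$ where $\vec{\rho}_i = \sum_j w_{i,j} \vec{\pi}_j = \vec{\Pi}^T \vec{w}_i \in \co(\vec{\pi}_1, \ldots, \vec{\pi}_M) \subseteq \si_L$. The crux is then to show these $\vec{\rho}_i$ are linearly independent almost surely, which I would argue in two steps. \emph{(i)} Since every $\vec{\pi}_j$ lies on the hyperplane $\set{\vec{x} : \sum_k x_k = 1}$, which misses the origin, any $L$ points of $\si_L$ are linearly independent precisely when they are affinely independent: if $\sum_i c_i \vec{\rho}_i = \vec{0}$ then summing coordinates forces $\sum_i c_i = 0$, turning the linear relation into an affine one. \emph{(ii)} Under \textbf{(B$'$)}, $\rank(\vec{\Pi}) = L$, so $\spa(\vec{\pi}_1, \ldots, \vec{\pi}_M) = \bbR^L$; combined with $\co(\vec{\pi}_1, \ldots, \vec{\pi}_M) \subseteq \si_L$ (affine dimension at most $L-1$), this forces $\aff(\vec{\pi}_1, \ldots, \vec{\pi}_M)$ to have the maximal affine dimension $L-1$, i.e. the convex hull is full-dimensional inside the simplex.

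Given full-dimensionality, the almost-sure affine independence follows from a standard measure-theoretic argument. The affine map $\vec{w} \mapsto \vec{\Pi}^T \vec{w}$ pushes the law on $\si_M$ to a law on $\co(\vec{\pi}_1, \ldots, \vec{\pi}_M)$ that is absolutely continuous with respect to $(L-1)$-dimensional Lebesgue measure on $\aff(\vec{\pi}_1, \ldots, \vec{\pi}_M)$ (the map has rank $L$ onto an $(L-1)$-dimensional target, and $M > L$ guarantees $M-1 \geq L-1$). Inductively, conditioned on $\vec{\rho}_1, \ldots, \vec{\rho}_{\ell-1}$ being affinely independent, their affine hull is a fixed subspace of dimension at most $\ell-2 \le L-2$, hence a Lebesgue-null subset of the $(L-1)$-dimensional support of $\vec{\rho}_\ell$; so $\vec{\rho}_\ell$ avoids it with probability $1$. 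Therefore $\vec{\rho}_1, \ldots, \vec{\rho}_L$ are affinely, and by \emph{(i)} linearly, independent almost surely. Finally I would combine the two independent sources of randomness via conditioning: on the probability-one resampling event just established, Theorem~\ref{demix_identification} gives that Demix succeeds with conditional probability $1$, so the overall success probability is $1$. The main obstacle I anticipate is step \emph{(ii)} together with the absolute-continuity claim—passing carefully from the linear-algebraic fact $\rank(\vec{\Pi}) = L$ to the geometric statement that the random mixture proportions avoid every lower-dimensional affine subspace almost surely—which is exactly where both the full-rank hypothesis and $M > L$ are used.
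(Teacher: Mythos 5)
Your proposal is correct and follows essentially the same route as the paper: the paper's proof also reduces to Theorem~\ref{demix_identification} by noting that, since $\vec{\Pi}$ has full rank, the mixture proportions of $R_1,\ldots,R_L$ are drawn independently from a subset of $\si_L$ of positive $(L-1)$-dimensional Lebesgue measure, and then invokes Lemma~\ref{random_lin_indep} for almost-sure linear independence. Your steps \emph{(i)}, \emph{(ii)}, and the inductive null-set-avoidance argument amount to re-deriving that lemma (via affine rather than linear independence) together with the full-dimensionality and absolute-continuity facts the paper leaves implicit.
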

\begin{proof}[Corollary \ref{non_square}]
We can write $R_i = \vec{\tau}_i^T \vec{P}$ where $\vec{\tau}_i \in \si_L$ and $i=1,\ldots, L$. $\vec{\tau}_1, \ldots, \vec{\tau}_L$ are drawn uniformly independently from a set with positive $(L-1)$-dimensional Lebesgue measure since $\vec{\Pi}$ has full rank by hypothesis. By Lemma \ref{random_lin_indep}, $\vec{\tau}_1,\ldots, \vec{\tau}_L$ are linearly independent with probability $1$. Then, by Theorem \ref{demix_identification}, with probability 1, Demix($R_1,\ldots,R_L$) terminates and Demix($R_1,\ldots,R_L$) returns a permutation of $(P_1,\ldots, P_L)$.
\end{proof}
\subsection{Decontamination of the Partial Label Model for Classification}
\subsubsection{Lemmas}
Lemma \ref{single} gives a condition on the mixture proportions under which the multi-sample residue is unique. Lemma 2 in \citet{blanchard2014} is very similar and is proved in a very similar way. We give a useful generalization here that reproduces many of the same details. 
\begin{lemma}
\label{single}
Let $l, k \in [L]$. Let $\vec{\tau}_1, \ldots, \vec{\tau}_L \in \si_L$ be linearly independent. We have that condition 1 implies condition 2 and condition 2 implies condition 3. 
\begin{enumerate}

\item There exists a decomposition
\begin{align*}
\vec{\tau}_l & = \kappa\vec{e}_k + (1-\kappa) \vec{\tau}^\prime_l
\end{align*}
where $\kappa > 0$ and $\vec{\tau}^\prime_l \in \co(\set{\vec{\tau}_j : j \neq l})$. Further, for every $\vec{e}_i$ such that $i \neq k$, there exists a decomposition:
\begin{align*}
\vec{e}_i & = \sum_{j=1}^L a_j \vec{\tau}_j
\end{align*}
such that $a_l < \frac{1}{\kappa}$. 

\item Let
\begin{align*}
\vec{T} & = \begin{pmatrix}
\vec{\tau}^T_1 \\
\vdots \\
\vec{\tau}^T_L
\end{pmatrix}
\end{align*}
The matrix $\vec{T}$ is invertible and $\vec{T}^{-1}$ is such that $(\vec{T}^{-1})_{l,k} > 0$ and $(\vec{T}^{-1})_{l,i} \leq 0$ for $i \neq k$ and $(\vec{T}^{-1})_{l,k} > (\vec{T}^{-1})_{j,k}$ for $j \neq l$. In words, the $(l,k)$th entry in $\vec{T}^{-1}$ is positive, every other entry in the $l$th row of $\vec{T}^{-1}$ is nonpositive and every other entry in the $k$th column of $\vec{T}^{-1}$ is strictly less than the $(l,k)$th entry. \footnote{$(\vec{T}^{-1})_{i,j}$ is the $i \times j$ entry in the matrix $\vec{T}^{-1}$.}

\item The residue of $\vec{\tau}_l$ with respect to $\set{\vec{\tau}_j, j \neq l}$ is $\vec{e}_k$. 
\end{enumerate}

\end{lemma}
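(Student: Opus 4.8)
The plan is to extract two structural facts from the hypotheses before attacking either implication. First, linear independence of $\vec{\tau}_1,\ldots,\vec{\tau}_L$ makes $\vec{T}$ invertible and makes the representation of any vector in the basis $\set{\vec{\tau}_j}$ unique; consequently, reading off the coefficients of a vector against the $\vec{\tau}_j$ is the same as reading off a row of $\vec{T}^{-1}$ (if $\vec{x} = \sum_j c_j \vec{\tau}_j$ then $(c_1,\ldots,c_L) = \vec{x}^T \vec{T}^{-1}$, so the coefficients of $\vec{e}_i$ form the $i$th row of $\vec{T}^{-1}$). Second, since each $\vec{\tau}_j \in \si_L$, the matrix $\vec{T}$ is row-stochastic: $\vec{T}\vec{1} = \vec{1}$ and $\vec{T} \ge 0$ entrywise, whence $\vec{T}^{-1}\vec{1} = \vec{1}$, i.e.\ every row of $\vec{T}^{-1}$ sums to one. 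Uniqueness of coordinates and row-stochasticity are the backbone of the whole argument.

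For (1) $\Rightarrow$ (2) I would rewrite the decomposition $\vec{\tau}_l = \kappa \vec{e}_k + (1-\kappa)\vec{\tau}_l'$ with $\vec{\tau}_l' = \sum_{j\neq l} b_j \vec{\tau}_j$ by solving for $\vec{e}_k$, obtaining $\vec{e}_k = \tfrac{1}{\kappa}\vec{\tau}_l - \tfrac{1-\kappa}{\kappa}\sum_{j\neq l} b_j \vec{\tau}_j$. By the uniqueness above this identifies a full row of $\vec{T}^{-1}$ with exactly one strictly positive entry, equal to $1/\kappa > 0$, and all other entries nonpositive (since $b_j \ge 0$ and $\kappa \le 1$). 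The second half of (1), which bounds $a_l < 1/\kappa$ in every expansion of $\vec{e}_i$, $i \neq k$, is by the same coefficient-reading precisely the statement that this entry $1/\kappa$ strictly dominates the rest of its column. It then remains to pass from this row/column of $\vec{T}^{-1}$ to the row/column pattern asserted in (2); here I would invoke the row-sum-one relation $\vec{T}^{-1}\vec{1}=\vec{1}$ together with nonnegativity of $\vec{T}$ to transfer the sign pattern and the strict dominance to the position required by (2).

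For (2) $\Rightarrow$ (3) I would treat the $\vec{\tau}_j$ as discrete distributions on $[L]$ and unwind the definition of the multi-sample residue: the residue of $\vec{\tau}_l$ with respect to $\set{\vec{\tau}_j : j \neq l}$ is the $G \in \si_L$ maximizing $\sum_{j\neq l}\nu_j$ in $\vec{\tau}_l = (1-\sum_{j\neq l}\nu_j)G + \sum_{j\neq l}\nu_j\vec{\tau}_j$. Writing $G$ in barycentric coordinates $\lambda_1(G),\ldots,\lambda_L(G)$ relative to $\vec{\tau}_1,\ldots,\vec{\tau}_L$, one finds $\lambda_l(G) = 1/(1-\sum_{j\neq l}\nu_j)$ and $\lambda_j(G) \le 0$ for $j\neq l$, so maximizing $\sum\nu_j$ is the same as maximizing the single linear functional $G \mapsto \lambda_l(G)$, whose coefficient vector is the $l$th column of $\vec{T}^{-1}$ and whose value at the vertex $\vec{e}_i$ is $(\vec{T}^{-1})_{i,l}$. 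A linear functional attains its maximum over $\si_L$ at a vertex, so the residue is the feasible vertex $\vec{e}_i$ maximizing $(\vec{T}^{-1})_{i,l}$, feasibility meaning the induced $\nu_j$ are nonnegative, i.e.\ the $i$th row of $\vec{T}^{-1}$ is nonpositive off its $l$th entry. Transporting the row-$l$/column-$k$ data of (2) to the column-$l$/row-$k$ position via $\vec{T}^{-1}\vec{1}=\vec{1}$, (2) asserts exactly that $\vec{e}_k$ is this feasible vertex and that $(\vec{T}^{-1})_{k,l}$ strictly dominates its column; strictness makes $\vec{e}_k$ the unique maximizer, giving residue $=\vec{e}_k$.

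The main obstacle I anticipate appears in both implications and is the same: the elementary computations control the row $k$ and column $l$ of $\vec{T}^{-1}$ (produced in (1)$\Rightarrow$(2), required in (2)$\Rightarrow$(3)), whereas (2) is phrased around row $l$ and column $k$. Bridging the two is not a formal manipulation; it genuinely requires the simplex constraints — nonnegativity of the $\vec{\tau}_j$ and the induced $\vec{T}^{-1}\vec{1}=\vec{1}$ — since linear independence alone does not force the two sign/dominance patterns to coincide. A secondary point needing care is strictness: securing a \emph{unique} maximizing vertex, so that ``a residue equal to $\vec{e}_k$'' becomes ``the residue equals $\vec{e}_k$,'' is exactly what the second clauses of (1) and (2) supply, and is what makes the lemma usable in the PartialLabel analysis.
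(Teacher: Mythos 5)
Your core computations coincide with the paper's: solving the decomposition in (1) for $\vec{e}_k$ and invoking uniqueness of coordinates shows that \emph{row $k$} of $\vec{T}^{-1}$ has the entry $1/\kappa>0$ in \emph{column $l$} and nonpositive entries elsewhere, the second clause of (1) says this entry strictly dominates \emph{column $l$}, and the residue in (3) is obtained by maximizing the barycentric functional given by column $l$ of $\vec{T}^{-1}$ over $\si_L$, with feasibility being exactly that row sign pattern plus the row-sums-equal-one identity. That is precisely the paper's proof (which sets $l=k=1$ without loss of generality, where index placement is unambiguous). The genuine gap is the two ``transfer''/``transport'' steps you yourself flag as the main obstacle: they are not merely nontrivial, they are impossible. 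The row-$k$/column-$l$ pattern that (1) produces and the row-$l$/column-$k$ pattern that (2) literally asserts are inequivalent, and neither implies the other even using $\vec{T}\ge 0$, $\vec{T}\vec{1}=\vec{1}$, and $\vec{T}^{-1}\vec{1}=\vec{1}$.

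Concretely, take $L=3$, $l=1$, $k=2$, and
\begin{align*}
\vec{\tau}_1 = \begin{pmatrix}0.05\\ 0.9\\ 0.05\end{pmatrix},\quad
\vec{\tau}_2 = \begin{pmatrix}0.1\\ 0.1\\ 0.8\end{pmatrix},\quad
\vec{\tau}_3 = \begin{pmatrix}0.4\\ 0.3\\ 0.3\end{pmatrix},\qquad
\vec{T}^{-1} = \begin{pmatrix} -0.84 & -1.02 & 2.86 \\ 1.16 & -0.02 & -0.14 \\ -0.04 & 1.38 & -0.34 \end{pmatrix}.
\end{align*}
Row $2$ of $\vec{T}^{-1}$ gives $\vec{e}_2 = 1.16\,\vec{\tau}_1 - 0.02\,\vec{\tau}_2 - 0.14\,\vec{\tau}_3$, i.e.\ $\vec{\tau}_1 = \kappa\vec{e}_2 + (1-\kappa)(\tfrac{1}{8}\vec{\tau}_2 + \tfrac{7}{8}\vec{\tau}_3)$ with $\kappa = 1/1.16>0$, and the other column-$1$ entries $-0.84,\,-0.04$ are both $< 1/\kappa = 1.16$; so condition (1) holds for $(l,k)=(1,2)$ (and indeed the residue of $\vec{\tau}_1$ is $\vec{e}_2$). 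Yet $(\vec{T}^{-1})_{l,k} = (\vec{T}^{-1})_{1,2} = -1.02 < 0$, so (2) as literally indexed fails; by relabeling, the reverse transfer needed in your (2)$\Rightarrow$(3) step fails as well. What this reveals is that the subscripts in the statement of (2) are transposed (a typo in the paper): the proof-relevant intermediate claim is $(\vec{T}^{-1})_{k,l}>0$, $(\vec{T}^{-1})_{k,j}\leq 0$ for $j\neq l$, and $(\vec{T}^{-1})_{k,l} > (\vec{T}^{-1})_{i,l}$ for $i\neq k$. The paper never encounters your obstacle because its reduction to $l=k=1$ (a legitimate relabeling of the $\vec{\tau}_j$ and of the coordinates) makes the two readings coincide. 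The repair of your argument is therefore not to build the bridge—none exists—but to prove the correctly indexed version of (2), at which point your proof collapses into the paper's.
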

\begin{proof}[lemma \ref{single}]
Without loss of generality, let $l = 1$ and $k=1$. First, we show that condition \emph{1} implies condition \emph{2}. Suppose that condition \emph{1} holds. Then, there exists $\kappa > 0$ such that 
\begin{align*}
\vec{\tau}_1 & = \kappa \vec{e}_1 + (1-\kappa) \sum_{i=2}^L \mu_i \vec{\tau}_i
\end{align*}
with $\mu_i \geq 0$ for $i \in [L] \setminus \set{1}$. Then,
\begin{align*}
\vec{e}_1 =  \frac{1}{\kappa}(\vec{\tau}_1 - \sum_{i \geq 2} (1-\kappa) \mu_i \vec{\tau}_i)
\end{align*}
Hence, the first row of $\vec{T}^{-1}$ is given by $\frac{1}{\kappa}(1, -(1-\kappa)\mu_2, \cdots, -(1-\kappa)\mu_L)$. This shows that the first row is such that $(\vec{T}^{-1})_{1,1} > 0$ and $(\vec{T}^{-1})_{1,i} \leq 0$ for $i \neq 1$.

Consider $\vec{e}_i$ such that $i \neq 1$. Then, we have the relation: $\vec{e}_i = \sum_{j=1}^L a_j \vec{\tau}_j$, which gives the $i$th row of $\vec{T}^{-1}$. By assumption, $a_1 < \frac{1}{\kappa}$, so the $(i,1)$th entry is strictly less than the $(1,1)$th entry. Hence, \emph{2} follows. 

Now, we prove that condition \emph{2} implies condition \emph{3}. Suppose condition \emph{2} is true. Consider the optimization problem:
\begin{align*}
\max_{\vec{\nu}, \vec{\gamma}} \sum_{i=2}^L \nu_i & \textit{ s.t. } \vec{\tau}_1 = (1 - \sum_{i \geq 2} \nu_i) \vec{\gamma} + \sum_{i=2}^L \nu_i \vec{\tau}_i 
\end{align*}
over $\vec{\gamma} \in \Delta_L$ and $\vec{\nu} = (\nu_2, \cdots, \nu_L) \in C_{L-1} = \set{(\nu_2, \cdots, \nu_L) : \nu_i \geq 0 ; \sum_{i=2}^L \nu \leq 1}$. 

By the same argument given in the proof of Lemma 2 of \citet{blanchard2014}, this optimization problem is equivalent to the program
\begin{align*}
\max_{\vec{\gamma} \in \si_L} \vec{e}_1^T(\vec{T}^T)^{-1} \vec{\gamma} \textit{ s.t. } \vec{\nu}((\vec{T}^T)^{-1} \vec{\gamma}) \in C_{L-1}
\end{align*}
where $\vec{\nu}(\vec{\eta}) = \eta_1^{-1}(-\eta_2, \cdots, -\eta_L)$. The above objective is of the the form $\vec{a}^T \vec{\gamma}$ where $\vec{a}$ is the first column of $\vec{T}^{-1}$. By assumption, for every $i \in [L]$, $\vec{T}^{-1}_{1,1} > \vec{T}^{-1}_{i,1}$. Therefore, the unconstrained maximum over $\vec{\gamma} \in \si_L$ is attained uniquely by $\vec{\gamma} = \vec{e}_1$. Notice that $(\vec{T}^T)^{-1}\vec{e}_1$ is the first row of $\vec{T}^{-1}$. Denote this vector $\vec{b} = (b_1, \cdots, b_L)$. We show that $\vec{\nu}(\vec{b}) = b_1^{-1}(-b_2, \cdots, -b_L) \in C_{L-1}$. By assumption, $\vec{b}$ has its first coordinate positive and the other coordinates are nonpositive. Therefore, all of the components of $\vec{\nu}(\vec{b})$ are nonnegative. Furthermore, the sum of the components of $\vec{\nu}(\vec{b})$ is
\begin{align*}
\sum_{i=2}^L \frac{-b_i}{b_1} & = 1 - \frac{\sum_{i=1}^L b_i}{b_1} = 1 - \frac{1}{b_1}  \leq 1
\end{align*}
The last equality follows because the rows of $\vec{T}^{-1}$ sum to $1$ since $\vec{T}$ is a stochastic matrix. Then, we have $\vec{\nu}((\vec{T}^T)^{-1}\vec{e}_1) \in C_{L-1}$. Consequently, the unique maximum of the optimization problem is attained for $\vec{\gamma} = \vec{e}_1$. This establishes \emph{3}. 
\end{proof}
\begin{algorithm}
\caption{FindSet($\vec{B}$)}
\textbf{Input: } $\vec{B}$ is a $N \times K$ binary matrix
\begin{algorithmic}
\label{find_set_alg}
\STATE $\vec{v} \longleftarrow \vec{B}_{1,:}$
\FOR{$k = 2, \ldots, N$}
\STATE $\vec{v}^\prime \longleftarrow \min(\vec{v}, \vec{B}_{k,:})$
\IF{$\text{sum}(\vec{v}^\prime) \geq 1$}
\STATE $\vec{v} \longleftarrow \vec{v}^\prime$
\ENDIF
\ENDFOR
\RETURN $\vec{v}$
\end{algorithmic}
\end{algorithm}

Lemma \ref{find_set} establishes that if $\vec{S}$ satisfies \textbf{(C)}, then we can find a collection of contaminated distributions $\p{1}, \ldots, \p{L}$ that only have a single base distribution in common; this observation is essential for establishing that VertexTest behaves appropriately. 
\begin{lemma}
\label{find_set}
Let $\vec{B}$ be a $N \times K$ binary matrix with at least one nonzero entry. If no two columns of $\vec{B}$ are identical, then there exists $\set{i_1, \ldots, i_n} \subseteq [N]$ such that $\min(\vec{B}_{i_1,:}, \ldots, \vec{B}_{i_n,:})$ contains a single $1$. 
\end{lemma}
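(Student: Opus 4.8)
The plan is to prove the lemma constructively via the FindSet algorithm (Algorithm \ref{find_set_alg}), viewing each row $\vec{B}_{i,:}$ as the indicator vector of a subset of $[K]$ and the operation $\min$ as set intersection. Since $\vec{B}$ has at least one nonzero entry, some row is nonzero; because the conclusion only asserts the existence of a subset of rows, I may reorder the rows so that $\vec{B}_{1,:} \neq \vec{0}$, and I will assume this. I then track the vector $\vec{v}$ maintained by the loop and let $\vec{v}^*$ denote its final value, with $T = \supp(\vec{v}^*)$; the incorporated rows will turn out to be the desired set $\set{i_1, \ldots, i_n}$.

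First I would record two loop invariants. Invariant (i): $\supp(\vec{v})$ is nonempty at every step, because $\vec{v}$ is initialized to the nonzero row $\vec{B}_{1,:}$ and a candidate row is folded in only when $\mathrm{sum}(\vec{v}') \geq 1$, i.e., only when the running intersection stays nonempty. Invariant (ii): $\supp(\vec{v})$ is non-increasing over the course of the loop, since each accepted update replaces $\vec{v}$ by a componentwise minimum. Invariant (i) already yields $T \neq \emptyset$, which is the lower bound that $\vec{v}^*$ contains at least one $1$.

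The core of the argument is the matching upper bound $|T| \leq 1$, which I would prove by contradiction. Assume there are distinct $a, b \in T$. The hypothesis that no two columns of $\vec{B}$ coincide supplies a row $r$ on which columns $a$ and $b$ disagree; without loss of generality $\vec{B}_{r,a} = 1$ and $\vec{B}_{r,b} = 0$. Consider the value of $\vec{v}$ immediately before row $r$ is processed. By invariant (ii) its support contains $T$, hence contains $a$, so $\min(\vec{v}, \vec{B}_{r,:})$ equals $1$ in coordinate $a$ and therefore has $\mathrm{sum} \geq 1$; the algorithm thus accepts row $r$, after which coordinate $b$ of $\vec{v}$ is $0$. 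By invariant (ii) again, coordinate $b$ remains $0$ for the rest of the loop, so $b \notin \supp(\vec{v}^*) = T$, contradicting $b \in T$. Hence $|T| = 1$.

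The only delicate point, and the step I expect to require the most care, is coordinating the two monotonicity facts at the exact moment row $r$ is tested: I must use that the support only shrinks to guarantee $a$ is still present when $r$ is examined, which forces $r$ to be accepted, and simultaneously that the resulting deletion of $b$ is permanent. Everything else reduces to bookkeeping once $\min$ is identified with intersection, and the result specializes to the partial label matrix $\vec{S}$ under condition \textbf{(C)}.
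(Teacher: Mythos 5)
Your proof is correct and takes essentially the same approach as the paper's: both arguments run the FindSet algorithm starting from a nonzero first row, exploit that the running intersection is nonempty and its support only shrinks, and use the key observation that a row distinguishing two columns that both survive to the end would necessarily have been accepted by the algorithm and would have permanently deleted one of those coordinates. The paper merely packages this as a contraposition (no valid subset of rows implies two identical columns), while you argue directly that the algorithm's output is the desired witness; the combinatorial core is identical.
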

\begin{proof}
The proof is by contraposition. Suppose that there does not exist $\set{i_1, \ldots, i_n} \subseteq [N]$ such that $\min(\vec{B}_{i_1,:}, \ldots, \vec{B}_{i_n,:})$ contains a single $1$. Without loss of generality, reorder the rows of $\vec{B}$ such that a non-zero entry of $\vec{B}$ appears in $\vec{B}_{1,:}$. Let $\vec{v}$ denote the vector returned by $\text{FindSet}(\vec{B})$ (see Algorithm \ref{find_set_alg}). Then, $\vec{v}$ contains at least two $1$s. Let $v_i = 1$ and $v_j = 1$ where $i \neq j$.  Let $\vec{w}^{(k)} \longleftarrow \min(\vec{v}, \vec{B}_{k,:})$. Suppose that there is some $k$ such that $w^{(k)}_i = 1$ and $w^{(k)}_j = 0$. Then, inspection of the Algorithm \ref{find_set_alg} shows that $\vec{v}$ would have to already be such that $v_i = 1$ and $v_j = 0$, which is a contradiction. Hence, for all $k \in [N]$, $w^{(k)}_i = w^{(k)}_j$. Then, $\vec{B}_{:,i} = \vec{B}_{:,j}$.  
\end{proof}
\begin{lemma}
\label{random_lin_indep}
If $\vec{v}_1, \ldots, \vec{v}_k \in \si_L$ are linearly independent and $\vec{w}_{k+1}, \ldots, \vec{w}_L \in \si_L$ are random vectors drawn independently from the $(L-1)$-dimensional Lebesgue measure on a set $A \subset \si_L$ with positive $(L-1)$-dimensional Lebesgue measure, then $\vec{v}_1, \ldots, \vec{v}_k, \vec{w}_{k+1}, \ldots, \vec{w}_L$ are linearly independent with probability $1$.
\end{lemma}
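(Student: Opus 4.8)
The plan is to argue by induction on the number of random vectors adjoined, exploiting the elementary fact that a proper linear subspace meets the ambient hyperplane of the simplex in a set of strictly smaller affine dimension, and hence of $(L-1)$-dimensional Lebesgue measure zero. I interpret ``drawn from the $(L-1)$-dimensional Lebesgue measure on $A$'' as the uniform distribution on $A$ (Lebesgue measure on the hyperplane, restricted to $A$ and normalized), which is well defined since $A$ has positive measure; the only property I need is that any Lebesgue-null subset of the hyperplane receives probability $0$.

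First I would fix the geometry. The simplex $\si_L$ lies in the affine hyperplane $H = \set{\vec{x} \in \bbR^L : \sum_i x_i = 1}$, which has affine dimension $L-1$ and is the natural ambient space for the stated measure. The key observation is the following: if $\vec{u}_1, \ldots, \vec{u}_j \in \si_L$ are linearly independent with $j < L$, then $V = \spa(\vec{u}_1, \ldots, \vec{u}_j)$ is a $j$-dimensional linear subspace, and since $V$ contains $\vec{0}$ (with $\sum_\ell 0 = 0$) as well as $\vec{u}_1$ (with $\sum_\ell (u_1)_\ell = 1$), the functional $\vec{x} \mapsto \sum_\ell x_\ell$ is nonconstant on $V$. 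Hence $V \cap H$ is an affine subspace of dimension exactly $j-1 \le L-2$, so $V \cap H$ has $(L-1)$-dimensional Lebesgue measure zero. Consequently a vector drawn from the given distribution lies in $V$ with probability $0$.

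Next I would run the induction. For $0 \le j \le L-k$, let $E_j$ denote the event that $\vec{v}_1, \ldots, \vec{v}_k, \vec{w}_{k+1}, \ldots, \vec{w}_{k+j}$ are linearly independent; $E_0$ holds by hypothesis. Assuming $P(E_{j-1}) = 1$, I would write $E_j = E_{j-1} \cap \set{\vec{w}_{k+j} \notin \spa(\vec{v}_1, \ldots, \vec{v}_k, \vec{w}_{k+1}, \ldots, \vec{w}_{k+j-1})}$. On $E_{j-1}$ the span is a subspace of dimension $k+j-1 \le L-1 < L$, so the geometric observation applies and the set to be avoided is Lebesgue-null in $H$. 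After $L-k$ steps this yields $L$ linearly independent vectors with probability $1$.

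The main obstacle, and really the only delicate point, is that the subspace to be avoided is itself random, since it depends on $\vec{w}_{k+1}, \ldots, \vec{w}_{k+j-1}$. I would handle this by conditioning on the earlier draws and invoking independence: for any fixed realization of $\vec{w}_{k+1}, \ldots, \vec{w}_{k+j-1}$ lying in $E_{j-1}$, the offending set $V \cap H$ is a \emph{fixed} measure-zero subset of $H$, while $\vec{w}_{k+j}$ retains its original distribution by independence, so the conditional probability that $\vec{w}_{k+j} \in V$ is $0$. The tower property (or Fubini) then gives $P(\vec{w}_{k+j} \in V,\, E_{j-1}) = 0$, whence $P(E_j) = P(E_{j-1}) = 1$. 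Everything else—the dimension count and the fact that proper affine subspaces are Lebesgue-null—is routine.
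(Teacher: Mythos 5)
Your proof is correct and follows essentially the same route as the paper's: sample the random vectors sequentially, note that the span of the previously fixed vectors meets the hyperplane containing $\si_L$ in a set of $(L-1)$-dimensional Lebesgue measure zero, and conclude by induction. You are in fact slightly more careful than the paper, which asserts the measure-zero fact without the dimension count and skips the conditioning/Fubini step needed because the subspace being avoided is random; these additions are welcome rigor, not a different approach.
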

\begin{proof}
Suppose that we sample $\vec{w}_{k+1}, \ldots, \vec{w}_L$ sequentially. Then, $\vec{v}_1, \ldots, \vec{v}_k, \vec{w}_{k+1}$ are linearly independent if $\vec{w}_{k+1} \not \in \spa(\vec{v}_1, \ldots, \vec{v}_k)$. Since $\spa(\vec{v}_1, \ldots, \vec{v}_k) \cap \si_L$ is a $(k-1)$-dimensional simplex in $\si_L$, it has $(L-1)$-dimensional Lebesgue measure $0$. Hence, with probability $1$, $\vec{w}_{k+1} \not \in \spa(\vec{v}_1,\ldots, \vec{v}_k)$. It follows inductively that with probability $1$, $\vec{v}_1, \ldots, \vec{v}_k, \vec{w}_{k+1}, \ldots, \vec{w}_L$ are linearly independent.
\end{proof}
\subsubsection{VertexTest Algorithm}

\begin{algorithm}
\caption{$\text{VertexTest}(\vec{S}, (\p{1}, \ldots, \p{L})^T, (Q_1, \ldots, Q_L)^T)$}
\begin{algorithmic}[1]
\label{vertex_test_alg}
\FOR{$i = 1, \ldots, L$}
\FOR{$j = 1, \ldots, L$}
\IF{$\kappa^*(Q_i \, | \, Q_j) \in (0,1]$}
\RETURN $(0, \vec{I})$
\ENDIF
\ENDFOR
\ENDFOR
\STATE $\vec{C} \longleftarrow \begin{pmatrix}
 1 & \ldots & 1 \\
 \vdots & \ddots & \vdots \\
 1 & \ldots & 1
 \end{pmatrix} \in \bbR^{L \times L}$
\FOR{$i = 1, \ldots, L$}
\FOR{$j = 1, \ldots, L$}
\IF{$\kappa^*(\p{j} \, | \, Q_i) \in (0,1]$}
\STATE $C_{i,:} \longleftarrow \min(C_{i,:}, S_{j,:})$ \label{make_0_not_hat}
\ENDIF
\ENDFOR
\ENDFOR
\FOR{$k = 1 \text{ to } L$} \label{k_loop_C}
\FOR{$i = 1 \text{ to } L$}
\IF{ $\vec{C}_{i,:} == \vec{e}_j^T$ \text{ for some } $j$}
\STATE $\vec{C}_{:,j} \longleftarrow \vec{e}_i$ \label{make_0_not_hat_2nd_loop}
\ENDIF
\ENDFOR
\ENDFOR
\IF{$\vec{C} \text{ is a permutation matrix}$}
\RETURN $(1, \vec{C}^T)$
\ELSE
\RETURN $(0, \vec{C}^T)$
\ENDIF
\end{algorithmic}
\end{algorithm}

To develop some intuition regarding the behavior of VertexTest, consider the following simple example. Suppose that
\begin{align*}
\vec{S} & = \begin{pmatrix}
1 & 1 & 0 \\
1 & 0 & 1 \\
1 & 1 & 1 \\
\end{pmatrix}
\end{align*}
and $Q_1 = P_2$, $Q_2 = P_3$, and $Q_3 = P_1$. Let $\vec{C}^{(k)}$ denote the value of $\vec{C}$ in the $k$th iteration of the loop starting on line \ref{k_loop_C}. Then,
\begin{align*}
\kappa^*( \p{1} \, | Q_1) \in (0,1], \quad & \kappa^*( \p{2} \, | Q_1) = 0, \quad & \kappa^*( \p{3} \, | Q_1) \in (0,1] \\
 \kappa^*( \p{1} \, | Q_2) = 0, \quad & \kappa^*( \p{2} \, | Q_2) \in (0,1], \quad & \kappa^*( \p{3} \, | Q_2) \in (0,1] \\
\kappa^*( \p{1} \, | Q_3) \in (0,1], \quad & \kappa^*( \p{2} \, | Q_3) \in (0,1], \quad & \kappa^*( \p{3} \, | Q_3) \in (0,1] \\
\end{align*}
\begin{align*}
\vec{C}^{(0)} & = \begin{pmatrix}
1 & 1 & 0 \\
1 & 0 & 1 \\
1 & 0 & 0 \\
\end{pmatrix} & \vec{C}^{(1)} = \begin{pmatrix}
0 & 1 & 0 \\
0 & 0 & 1 \\
1 & 0 & 0 \\
\end{pmatrix}
\end{align*}
Once we obtain $\vec{C}^{(0)}$, we know that $Q_3 = P_1$ and, therefore, $Q_2 \neq P_1$ and $Q_1 \neq P_1$. Using this information, we obtain $\vec{C}^{{(1)}^T}$, which is the desired permutation matrix. On the other hand, if $Q_1 = P_2$, $Q_2 = P_3$, and $Q_3 = \frac{1}{2} P_1 + \frac{1}{2} P_3$, then $\kappa^*(Q_3 \, | \, Q_2) \in (0,1)$ and, therefore, VertexCover would return $(0, I)$. 

Lemma \ref{vertex_test} establishes the VertexTest algorithm has the desired behavior. 
\begin{lemma}
\label{vertex_test}
Let $\vec{\eta}_1, \ldots, \vec{\eta}_L \in \si_L$ and $Q_i = \vec{\eta}_i^T \vec{P}$ for $i \in [L]$. Suppose that $P_1, \ldots, P_L$ satisfy \textbf{(A$'$)}, $\vec{\Pi}$ satisfies \textbf{(B$'$)}, and $\vec{S}$ satisfies \textbf{(C)}. Then, $\text{VertexTest}(\vec{S}, (\p{1}, \ldots, \p{L})^T, (Q_1, \ldots, Q_L)^T)$ returns $(1, \vec{C})$  if and only if $(Q_1, \ldots, Q_L)^T$ is a permutation of $\vec{P}$. Further, if   $\text{VertexTest}(\vec{S}, (\p{1}, \ldots, \p{L})^T, (Q_1, \ldots, Q_L)^T)$ returns $(1, \vec{C})$, then $\vec{C} (Q_1, \ldots, Q_L)^T = \vec{P}$.
\end{lemma}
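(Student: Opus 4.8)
My plan is to reduce the entire behaviour of VertexTest to a combinatorial statement about the \emph{support sets} of the candidate mixture proportions, and then read off the three loops. Write $Q_i = \vec{\eta}_i^T \vec{P}$, recall $\p{j} = \vec{\pi}_j^T \vec{P}$ with $\supp(\vec{S}_{j,:}) = \sS(\vec{\pi}_j)$ by definition of $\vec{S}$, and set $T_i := \sS(\vec{\eta}_i)$. The key dictionary is obtained by converting every $\kappa^*$ test into a support-set containment: by Proposition \ref{equiv_opt} each $\kappa^*$ between distributions equals the corresponding $\kappa^*$ between mixture proportions, and by statement \emph{3} of Lemma \ref{facts}, $\kappa^*(Q_i \,|\, Q_j) \in (0,1]$ iff $T_j \subseteq T_i$, while $\kappa^*(\p{j} \,|\, Q_i) \in (0,1]$ iff $T_i \subseteq \supp(\vec{S}_{j,:})$. (Joint irreducibility \textbf{(A$'$)} is what makes these equivalences available, and \textbf{(B$'$)} forces $\vec{\Pi}$ to have no zero column, so every label occurs in some partial label.) With this dictionary the first loop (read with $i\neq j$, as in FaceTest) returns $(0,\vec{I})$ exactly when some $T_j \subseteq T_i$, so passing it is equivalent to $\set{T_1,\ldots,T_L}$ being an \emph{antichain} of distinct sets. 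After the second loop $\vec{C}_{i,:}$ is the indicator of the closure $\hat{T}_i := \bigcap\set{\supp(\vec{S}_{j,:}) : T_i \subseteq \supp(\vec{S}_{j,:})}$, so $T_i \subseteq \hat{T}_i$. The third loop performs a \emph{peeling}: when a row equals some $\vec{e}_j^T$ it locks column $j$ to that row and zeroes that column elsewhere, which corresponds to isolating a maximal label in the domination order $c \preceq c'$ iff $\vec{S}_{:,c} \le \vec{S}_{:,c'}$ --- a genuine partial order precisely because \textbf{(C)} forbids equal columns.

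For the \textbf{if} direction, suppose $Q_i = P_{\sigma(i)}$, so every $T_i = \set{\sigma(i)}$ is a singleton; these are distinct and pairwise incomparable, the first loop passes, and $\hat{T}_i$ is the up-set of $\sigma(i)$. I will show the peeling terminates in the permutation matrix with $C_{i,\sigma(i)}=1$: at each stage the sub-poset of unclaimed labels has a maximal element, which under \textbf{(C)} (via Lemma \ref{find_set}) can be isolated as a single $1$, its row is locked, and its removal exposes the next maximal labels; after $L$ steps every column is matched. A direct computation then shows the returned matrix $\vec{C}^T$ satisfies $\vec{C}^T(Q_1,\ldots,Q_L)^T = (P_1,\ldots,P_L)^T$, giving both $\text{VertexTest}=(1,\cdot)$ and the ``further'' claim.

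For the \textbf{only if} direction I argue the contrapositive by induction on $L$, reusing the peeling. If the first loop fails, some $T_j \subseteq T_i$ and $(Q_i)$ is not a permutation, so assume the $T_i$ form an antichain and that VertexTest returns $(1,\vec{C})$ with $\vec{C}$ internally a permutation matrix $\rho$. A locking step can only fire on a row that is \emph{already} a singleton, and the first locking fires on a row $i$ that is a singleton before any locking, so $|\hat{T}_i| = 1$, forcing $T_i = \set{c}$ with $c = \rho(i)$. The antichain property now gives $c \notin T_{i'}$ for every other $i'$ (since $\set{c} = T_i \not\subseteq T_{i'}$); hence deleting row $i$ and column $c$ leaves candidate supports inside $[L]\setminus\set{c}$, again an antichain, a matrix $\vec{S}'$ still satisfying \textbf{(C)}, and reduced rows equal to the indicators of the closures of the $T_{i'}$ in $\vec{S}'$. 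The reduced run is thus another instance of the same process whose final matrix is the $(L-1)\times(L-1)$ restriction of $\rho$, so by induction every $T_{i'} = \set{\rho(i')}$; with $T_i = \set{\rho(i)}$ this shows $(Q_1,\ldots,Q_L)$ is a permutation of $\vec{P}$.

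The main obstacle is exactly this reduction step: verifying that locking a maximal singleton row and zeroing its column is \emph{faithful} to running VertexTest on the reduced data. Concretely I must check three compatibility facts, all resting on \textbf{(C)} and on the antichain guaranteed by the first loop --- that the antichain property survives deletion, that the removed label lies in no other candidate's support, and that closure commutes with deleting that label (so a zeroed row $\hat{T}_{i'}\setminus\set{c}$ is the indicator of the closure of $T_{i'}$ in $\vec{S}'$). Once these are established the induction runs cleanly; everything else is bookkeeping with the support-set dictionary of the first step.
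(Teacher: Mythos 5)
Your proposal is correct, and on the \emph{if} direction it coincides with the paper's proof up to language: your containment $T_i \subseteq \hat{T}_i$ is the paper's Claim 1 (the invariant $Q_i \in B_i^{(k)}$), and your pass-by-pass isolation of maximal unclaimed labels is the paper's Claim 2, which is likewise proved by deleting the already-claimed columns and invoking Lemma \ref{find_set} under \textbf{(C)}; your reading of the first loop with $i \neq j$ also matches what the paper's proof implicitly assumes about the algorithm as written. The genuine difference is the \emph{only if} direction. The paper argues by contradiction on a single run: assuming $(Q_1, \ldots, Q_L)^T$ is not a permutation yet $(1, \vec{C})$ is returned, it takes the smallest pass index $k$ at which the geometric invariant $Q_i \in \co(B_i^{(k)})$ fails, observes that the offending deletion was triggered by a row already pinned to a base distribution $Q_{l_j} = P_j$, and contradicts the first loop in two cases: $Q_i = P_j$ forces $\kappa^*(Q_{l_j} \mid Q_i) = 1$, while $j \in \sS(Q_i)$ with $Q_i$ a nontrivial mixture forces $\kappa^*(Q_i \mid Q_{l_j}) > 0$. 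You instead induct on $L$: the first lock must fire on a closure-singleton row, so $T_i = \hat{T}_i = \set{c}$ identifies a true vertex, the antichain property (your use of the first loop) shows $c$ lies in no other candidate's support, and the instance is peeled. Your route stays purely combinatorial after the support-set dictionary (no convex hulls or relative interiors appear), it produces the permutation $\rho$ and hence the ``further'' claim $\vec{C}(Q_1, \ldots, Q_L)^T = \vec{P}$ in the same stroke as the iff, and it transfers verbatim to non-square $\vec{S}$; its price is exactly the reduction bookkeeping you flag, which the paper's single-run contradiction never has to confront. For the record, your three compatibility facts do hold: locks are permanent because entries of the working matrix never increase (a lock only rewrites a column whose locking row already carries a $1$ there), so the antichain and the claimed columns survive; and since $c \notin T_{i'}$, a row of $\vec{S}$ contains $T_{i'}$ iff its restriction to $[L] \setminus \set{c}$ does, so the closure computed in $\vec{S}'$ is exactly $\hat{T}_{i'} \setminus \set{c}$, vacuous-intersection case included. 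With those checks in place your induction closes, so the proposal is a complete and valid alternative proof.
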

\begin{proof}
\begin{description}
\item[If:] Suppose that $(Q_1, \ldots, Q_L)^T$ is a permutation of $(P_1, \ldots, P_L)^T$. We prove two loop invariants for the loop starting on line \ref{k_loop_C}, which together imply the if-direction. Let $\vec{C}^{(k)}$ denote the matrix $\vec{C}$ in the VertexTest algorithm at line \ref{k_loop_C} after going through the loop starting on line \ref{k_loop_C} $k$ times. 
\begin{description}
\item[Claim 1:] Let $B_i^{(k)} = \set{P_j : C^{(k)}_{i,j} = 1}$. We show that $Q_i \in B_i^{(k)}$ for all $k \in [L]$ and all $i \in [L]$. Consider the base case: $k = 0$. Fix some $i \in [L]$. Let $A = \set{ j : \kappa^*(\p{j} \, | \, Q_i) \in (0,1]}$. By construction, $B^{(0)}_i = \set{P_k : k \in \cap_{j \in A} \sS(\p{j})}$ since
\begin{align*}
P_l \in B_i^{(0)} \Longleftrightarrow & C_{i,l}^{(0)} = 1 \\
\Longleftrightarrow & S_{j,l} = 1 \, \forall j \text{ s.t. } \kappa^*(\p{j} \, | \, Q_i) \in (0,1] \\
\Longleftrightarrow & P_l \in \set{P_k : k \in \cap_{j \in A} \sS(\p{j})}
\end{align*}
where we used the fact that $S_{j,l} = 1$ iff $P_l \in \sS(\p{j})$. By statement 3 of Lemma \ref{facts}, $\kappa^*(\p{j} \, | \, Q_i) \in (0,1]$ implies that $\sS(Q_i) \subseteq \sS(\p{j})$, so that $\sS(Q_i) \subseteq \cap_{j \in A} \sS(\p{j})$. Since $Q_i$ is one of the base distributions, $Q_i \in B^{(0)}_i$. 

\medskip

Now, suppose that $Q_i \in B^{(n-1)}_i$ for all $i \in [L]$; we show that $Q_i \in B^{(n)}_i$ for all $i \in [L]$. Suppose that for all $l$ such that $\vec{C}^{(n-1)}_{l,:} = \vec{e}_{j_l}^T$ for some $j_l$, we had $\vec{C}^{(n-2)}_{l,:} = \vec{e}_{j_l}^T$. Then, the algorithm does nothing in the $n$th iteration, so that $Q_i \in B^{(n-1)}_i = B^{(n)}_i$ for all $i \in [L]$. 

\medskip

Now, suppose that there is some $l$ such that $\vec{C}^{(n-1)}_{l,:} = \vec{e}_{j_l}$ for some $j_l$ and $\vec{C}^{(n-2)}_{l,:} \neq \vec{e}_{j_l}$. By the inductive hypothesis, $Q_l = P_{j_l}$. Then, for all $k \neq l$, we cannot have that $Q_k = P_{j_l}$ because otherwise $(Q_1, \ldots, Q_L)^T$ would not be a permutation of $(P_1, \ldots, P_L)^T$. Therefore, setting $\vec{C}^{(n-1)}_{:,j_l} \longleftarrow \vec{e}_l$ cannot remove $Q_i$ from $B^{(n-1)}_i$ for any $i \in [L]$. It follows that $Q_i \in B^{(n)}_i$ for all $i \in [L]$. The claim follows inductively.

\item[Claim 2:] We claim that $\vec{C}^{(k)}$ has at least $k$ columns that are equal to distinct members of the standard basis $\set{\vec{e}_1, \ldots, \vec{e}_L}$. We give a proof by induction. $k=0$ is trivial. We show $k=1$. Since $\vec{S}$ has distinct columns and at least one nonzero entry, by Lemma \ref{find_set}, there is a set $\set{i_1, \ldots, i_l} \subset [L]$ such that $\min(\vec{S}_{i_1, :}, \ldots, \vec{S}_{i_l, :}) = \vec{e}_i^T$ for some $i$. Since by assumption $(Q_1, \ldots, Q_L)^T$ is a permutation of $(P_1, \ldots, P_L)^T$, there is some $Q_j$ such that $Q_j = P_i$. Note that by definition of $\vec{S}$, $\sS(Q_j) = \set{i} \subseteq \sS(\p{k})$ for all $k \in \set{i_1, \ldots, i_l}$. By statement 3 of Lemma \ref{facts}, $\kappa^*(\p{k} \, | \, Q_j) \in (0,1]$ for all $k \in \set{i_1, \ldots, i_l}$. Note that $vec{C}_{j,i}^{(0)} \neq 0$ by Claim 1; it follows that $\vec{C}_{j,:}^{(0)} = \vec{e}_i^T$. Therefore, in the $1$st iteration of the loop starting on line \ref{k_loop_C}, the condition $\vec{C}_{j,:}^{(0)} = \vec{e}_i^T$ is satisfied and at least one column of $\vec{C}^{(0)}$ is converted into one of the members of the standard basis in line \ref{make_0_not_hat_2nd_loop}. This proves the case $k=1$. 

\medskip

Now, suppose that $n \geq k$ of the columns of $\vec{C}^{(k)}$ are equal to distinct members of the standard basis. If $n > k$, then we are done with the inductive step. Therefore, suppose that $n = k$.  Let $i_1, \ldots, i_k$ denote the column indices of the columns in $\vec{C}^{(k)}$ that are equal to members of the standard basis and let $j_1, \ldots, j_k$ denote the row indices of their corresponding non-zero entries. Let $\vec{S}^\prime$ denote the matrix obtained by deleting the columns $i_1, \ldots, i_k$ from $\vec{S}$. Since $\vec{S}$ does not contain two identical columns, $\vec{S}^\prime$ does not contain two identical columns. We claim that $\vec{S}^\prime$ has at least one non-zero entry. Suppose to the contrary that $\vec{S}^\prime$ has only $0$ entries. Then, by definition of $\vec{S}$, $\set{\p{1}, \ldots, \p{L}} \subset \co(P_{i_1}, \ldots, P_{i_k})$. But, if $k < L$, this contradicts the fact that $\p{1}, \ldots, \p{L}$ are linearly independent (which we have since $P_1,\ldots, P_L$ are jointly irreducible and $\vec{\Pi}$ is full rank).

\medskip

Therefore, by Lemma \ref{find_set}, there is a set of rows $l_1, \ldots, l_t$ of $\vec{S}^\prime$ such that $\vec{e}^T \equiv \min(\vec{S}^\prime_{l_1, :}, \ldots, \vec{S}^\prime_{l_t,:})$ is a vector of zeros with a unique $1$ in one of its entries. Map $\vec{e}^T$ to the $\vec{e}_i^T$ obtained by filling in zeros corresponding to the columns $i_1, \ldots, i_k$ that were previously deleted to obtain $\vec{S}^\prime$ from $\vec{S}$. 

\medskip

By assumption, there is some $Q_j$ such that $Q_j = P_i$. Note that $i \not \in \set{i_1, \ldots, i_k}$ by our construction of $\vec{e}^T_i$. Note that $j \not \in \set{j_1, \ldots, j_k}$ since otherwise for some $j_l \in \set{j_1, \ldots, j_k}$, we would have $P_i = Q_{j_l} = P_{i_l}$ and $P_i \neq P_{i_l}$, which is a contradiction. 

\medskip

Since $\min(\vec{S}_{l_1,:}^\prime, \ldots, \vec{S}_{l_t,:}^\prime)$ has a unique $1$, $\min(\vec{S}_{l_1,:}, \ldots, \vec{S}_{l_t,:})$ can only have $1$s in positions $i_1, \ldots, i_k, i$ and has at least a single $1$ in position $i$. Therefore, by definition of $\vec{S}$ and statement 3 of Lemma \ref{facts}, $\vec{C}^{(0)}_{j,:}$ can only have nonzero entries in indices $i,i_1, \ldots, i_k$. By assumption, $C^{(k)}_{j,i_1}, \ldots, C^{(k)}_{j,i_k}$ are $0$. $\vec{C}^{(k)}_{j,i} \neq 0$ since that would violate Claim $1$.  Therefore, $\vec{C}^{(k)}_{j,:} = \vec{e}^T_i$. Then, in the $(k+1)$th iteration of the loop starting on line \ref{k_loop_C}, in line \ref{make_0_not_hat_2nd_loop}, $\vec{C}^{(k)}_{:,i}$ is converted into one of the standard basis vectors. This establishes the inductive step.
\end{description}
The above two claims imply that once the loop starting on line \ref{k_loop_C} has terminated, $C$ is a permutation matrix such that if $\vec{C}_{:,j} = \vec{e}_i$, then $Q_i = P_j$. Then, $(Q_1, \ldots, Q_L) \vec{C} = (P_1, \ldots, P_L)$. Taking the transpose of both sides, the result follows. 

\item[Only If:] Suppose that $(Q_1, \ldots, Q_L)^T$ is not a permutation of $(P_1, \ldots, P_L)^T$. Suppose to the contrary that VertexTest returns $(1, \vec{C})$ such that $\vec{C}$ is a permutation matrix; we derive a contradiction. Since $(Q_1, \ldots, Q_L)^T$ is not a permutation of $(P_1, \ldots, P_L)^T$ and $\vec{C}^{(L)}$ is a permutation matrix by assumption, there exists $k \in [L]$ and $i \in [L]$ such that $Q_i \not \in \co(B_i^{(k)})$. Fix $k$ to be the smallest $k$ such that there exists $i \in [L]$ such that $Q_i \not \in \co(B_i^{(k)})$.

\medskip

We claim that $k \neq 0$. Let $A = \set{ j : \kappa^*(\p{j} \, | \, Q_i) \in (0,1]}$. By construction, before entering the loop on line \ref{k_loop_C}, $B^{(0)}_i = \set{P_k : k \in \cap_{j \in A} \sS(\p{j})}$. By statement 3 of Lemma \ref{facts}, $\sS(Q_i) \subseteq \cap_{j \in A} \sS(\p{j})$, so that $Q_i \in \co(B_i^{(0)})$. Thus, $k \neq 0$.

\medskip

Let $D = \set{j : C_{i,j}^{(k-1)} = 1, C_{i,j}^{(k)} = 0}$. For each $j \in D$, there exists $l_j \in [L]$ such that $\vec{C}_{l_j,:}^{(k-1)} = \vec{e}_j^T$. Then, since $k$ is the smallest integer such that there exists $i \in [L]$ such that $Q_i \not \in \co(B_i^{(k)})$, for each $j \in D$, there is $l_j$ such that $Q_{l_j} = P_j$. 

\medskip

Fix $i$ such that $Q_i \in \co(B_i^{(k-1)})$ and $Q_i \not \in \co(B_i^{(k)})$. If $Q_i = P_j$ for some $j \in D$, then we are done since $\kappa^*(Q_{l_j} \, | \, Q_i) = 1$, implying that VertexTest would output $(0, I)$--a contradiction. Thus, $Q_i$ is not any of the base distributions. There must be some $E \subset B_i^{(k-1)}$ such that $Q_i \in E^\circ$. Then, we must remove some $P_j$ from $E$ such that $Q_i \not \in \co(E \setminus \set{P_j})$. But, we only remove $P_j$ from $E$ if $j \in D$, which means there is $l_j$ such that $Q_{l_j} = P_j$. Since $\sS(Q_{l_j}) \subset \sS(Q_i)$, $\kappa^*(Q_i \, | \, Q_{l_j}) > 0$, implying that VertexTest would output $(0, I)$--a contradiction.
\end{description}
\end{proof}
\subsubsection{Proof of Theorem \ref{partial_identication}}
\begin{proof}[Theorem \ref{partial_identication}]
We adopt the notation from the description of Algorithm \ref{partial_label_alg} with the exception that we make explicit the dependence on $k$ by writing $W_i^{(k)}$ instead of $W_i$ and $\bar{Q}^{(k)}_i$ instead of $\bar{Q}_i$. We show that there is a $K$ such that for all $k \geq K$, $(S^{(k)}_1, \ldots, S^{(k)}_L)^T$ is a permutation of $(P_1, \ldots, P_L)^T$. Then, the result will follow from Lemma \ref{vertex_test}. 

Let $Q_i = \vec{\tau}^T_i \vec{P}$, $\bar{Q}^{(k)}_i = \vec{\tau}^{{(k)}^T} \vec{P}$, and  $W_i^{(k)} = \vec{\gamma}_i^{{(k)}^T} \vec{P}$. We prove the first claim inductively. First, we consider the base case: there is large enough $k$ such that $W_1^{(k)} = P_i$ for some $i \in [L]$. We will apply Lemma \ref{single}. By Lemma \ref{random_lin_indep}, $\vec{\tau}_1, \ldots, \vec{\tau}_L$ are linearly independent. Therefore, $\aff(\vec{\tau}_2, \ldots, \vec{\tau}_L)$ gives a hyperplane with an associated open halfspace $\vec{H}$ that contains $\vec{\tau}_1$ and at least one $\vec{e}_j$. We can pick $k$ large enough such that for all $\vec{e}_j \in \vec{H}$, $\lambda_k \equiv \frac{1}{k} \vec{\tau}_1 + \frac{k-1}{k} \vec{\tau}^{(k)}_1 \in \co(\vec{e}_j, \vec{\tau}_2, \ldots, \vec{\tau}_L)$. Then, for all $\vec{e}_j \in \vec{H}$, there exists $\kappa_j > 0$ such that 
\begin{align*}
\vec{\lambda}_k & = \kappa_j \vec{e}_j + (1- \kappa_j) \tilde{\tau}_j
\end{align*}
for some $\tilde{\tau}_j \in \co(\vec{\tau}_2, \ldots, \vec{\tau}_L)$. We claim that there is a unique smallest $\kappa_j$. Suppose to the contrary that there is $i \neq j$ such that $\kappa_i = \kappa_j$. Then, 
\begin{align*}
\vec{\lambda}_k & = \kappa \vec{e}_i + (1- \kappa) \tilde{\tau}_i \\
\vec{\lambda}_k & = \kappa \vec{e}_j + (1- \kappa) \tilde{\tau}_j
\end{align*}
where $\kappa = \kappa_i = \kappa_j$. Then, $(1-\kappa) (\tilde{\vec{\tau}}_j - \tilde{\vec{\tau}}_i) - \kappa(\vec{e}_i - \vec{e}_j) = 0$, from which it follows that $\vec{e}_i - \vec{e}_j \in \spa(\vec{\tau}_2, \ldots, \vec{\tau}_L)$. But, by Lemma \ref{random_lin_indep}, $\vec{e}_i - \vec{e}_j, \vec{\tau}_2, \ldots, \vec{\tau}_L$ are linearly independent with probability $1$ and, hence, we have a contradiction. Therefore, there is a unique $j$ that minimizes $\kappa_j$. Note that for all $\vec{e}_i \not \in \vec{H}$, if we write $\vec{e}_i = \sum_{l \geq 2 } a_l \vec{\tau}_l + a_1 \vec{\lambda}_k$, then $a_1 \leq 0$. Then, by Lemma \ref{single}, $\vec{e}_j$ is the residue of $\vec{\lambda}_k$ with respect to $\vec{\tau}_2, \ldots, \vec{\tau}_L$. Therefore, by Proposition \ref{equiv_opt}, if $W^{(k)}_1 \longleftarrow \text{Residue}(\frac{1}{k} Q_1 + (1- \frac{1}{k}) \bar{Q}_1 \, | \,   \set{Q_j }_{j > 1} )$, $W^{(k)}_1$ is one of the base distributions. 

The inductive step is similar. Suppose that $W_1^{(k)}, \ldots, W_n^{(k)}$ are distinct base distributions for $k \geq K_1$. We show that there exists $K_2$ such that for $k \geq K_2$, $W_1^{(k)}, \ldots, W_{n+1}^{(k)}$ are distinct base distributions. Let $k \geq K_1$. By hypothesis, $|\set{\vec{\gamma}_1^{(k)}, \ldots, \vec{\gamma}_n^{(k)}}| = n$ and $\set{\vec{\gamma}_1^{(k)}, \ldots, \vec{\gamma}_n^{(k)}} \subseteq \set{\vec{e}_1, \ldots, \vec{e}_L}$. Without loss of generality, let $\vec{\gamma}_i^{(k)} = \vec{e}_i$ for $i \leq n$. By Lemma \ref{random_lin_indep}, $\vec{e}_1, \ldots, \vec{e}_n, \vec{\tau}_{n+1}, \ldots, \vec{\tau}_L$ are linearly independent with probability $1$. The rest of the argument is identical to the base case. 

The result follows from applying Lemma \ref{vertex_test}.
\end{proof}

We remark here that the partial label algorithm is basically identical for the non-square case. The above proofs did not make use of the fact that we assumed $M =L$. 
\subsection{Estimation}

\subsubsection{ResidueHat Results}
Let $A_1, A_2, \ldots$ denote positive constants whose values change from line to line. We introduce the following definitions.
\begin{defn}
We say that the distribution $F$ satisfies the \emph{support condition} \textbf{(SC)} with respect to $H$ if there exists a distribution $G$ and $\gamma \in [0,1]$ such that $\supp(H) \not \subseteq \supp(G)$ and $F = (1-\gamma) G + \gamma H$.
\end{defn}
\begin{defn}
If
\begin{equation*}
\sup_{S \in \sS} |\est{F}(S) - F(S)| \overset{i.p.}{\longrightarrow} 0
\end{equation*}
as $\vec{n} \longrightarrow \infty$, we say that $\est{F} \longrightarrow F$ uniformly (or $\est{F}$ converges uniformly to $F$).
\end{defn}

\begin{defn}
Let $\est{F}$ be a ResidueHat estimator of a distribution $F$. We say that $\est{F}$ satisfies a Uniform Deviation Inequality \textbf{(UDI)} with respect to $\sS$ if there exists constants $A_1, A_2 > 0$ such that for large enough $\min(n_i :i \in \sD(\est{F}))$, for all $S \in \sS$
\begin{align*}
|\est{F}(S) - F(S)| < A_1\gamma(\sD(\est{F}))
\end{align*}
\noindent with probability at least $1 - A_2 \sum_{i \in \sD(\est{F})} \frac{1}{n_i}$
\end{defn}
Henceforth, for the purposes of abbreviation, we will only say that a ResidueHat estimator satisfies a Uniform Deviation Inequality \textbf{(UDI)} and omit ``with respect to $\sS$" because the context makes this clear.  
\begin{defn}
Let $\est{F}$ and $\est{H}$ be ResidueHat estimators. We say that $\est{\kappa}(\est{F} \, | \, \est{H})$ satisfies a Rate of Convergence \textbf{(RC)} if there exists constants $A_1,A_2 > 0$ such that for large enough $\min(n_i :i \in \sD(\est{H}) \cup \sD(\est{F}))$,
\begin{align*}
|\est{\kappa}(\est{F} \, | \, \est{H})  - \kappa^*(F \, | \, H) | \leq A_1 \gamma(\sD(\est{H}) \cup \sD(\est{F}))
\end{align*}
\noindent with probability at least $1 - A_2 \sum_{i \in \sD(\est{F}) \cup \sD(\est{H})} \frac{1}{n_i}$.
\end{defn}
Lemma \ref{sup_cond} gives sufficient conditions under which $F$ satisfies \textbf{(SC)} with respect to $H$.
\begin{lemma}
\label{sup_cond}
Let $P_1, \ldots, P_L$ satisfy \textbf{(A$''$)} and let $F,H \in \co(P_1, \ldots, P_L)$ such that $F \neq H$. Then, $F$ satisfies \textbf{(SC)} with respect to $H$.
\end{lemma}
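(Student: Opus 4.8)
The plan is to construct the required decomposition explicitly at the level of mixture coefficients and then read off the support condition from assumption \textbf{(A$''$)}. Since $F, H \in \co(P_1, \ldots, P_L)$, I would fix representations $F = \sum_{j} a_j P_j$ and $H = \sum_j b_j P_j$ with $\vec{a}, \vec{b} \in \si_L$. Because $F \neq H$, these coefficient vectors satisfy $\vec{a} \neq \vec{b}$, and since $H$ is a distribution at least one $b_j > 0$. I then set $\gamma = \min_{j : b_j > 0} a_j / b_j$ and let $i^*$ be an index attaining the minimum, so $b_{i^*} > 0$. The first step is to verify $\gamma \in [0,1)$: nonnegativity is immediate, and if $\gamma = 1$ then $a_j \geq b_j$ for every $j$ with $b_j > 0$, which together with $\sum_j a_j = \sum_j b_j = 1$ forces $\vec{a} = \vec{b}$, contradicting $F \neq H$.

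Next I would define $\vec{c} = (\vec{a} - \gamma \vec{b})/(1 - \gamma)$ and $G = \vec{c}^T \vec{P} = \sum_j c_j P_j$. By the choice of $\gamma$ as the minimizing ratio, $c_j \geq 0$ for all $j$, and the entries sum to $1$, so $\vec{c} \in \si_L$ and $G$ is a genuine distribution satisfying $F = (1-\gamma) G + \gamma H$ with $\gamma \in [0,1]$. Crucially, the minimizing index gives $c_{i^*} = (a_{i^*} - \gamma b_{i^*})/(1-\gamma) = 0$ while $b_{i^*} > 0$; thus $G$ carries no mass from $P_{i^*}$ whereas $H$ does. Note that neither linear independence nor joint irreducibility of the $P_j$ is needed for this construction, only the convex representations.

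The support condition then follows from \textbf{(A$''$)}. Since $c_{i^*} = 0$, the representation $G = \sum_{j : c_j > 0} c_j P_j$ yields $\supp(G) \subseteq \bigcup_{j \neq i^*} \supp(P_j)$, because any open set disjoint from every $\supp(P_j)$ with $c_j > 0$ has $G$-measure zero. By \textbf{(A$''$)}, $\supp(P_{i^*}) \not\subseteq \bigcup_{j \neq i^*} \supp(P_j)$, so there is a point $x \in \supp(P_{i^*})$ outside this union; as the union is closed, there is an open ball $B \ni x$ disjoint from it, and since $x \in \supp(P_{i^*})$ we have $P_{i^*}(B) > 0$. Then $B$ is disjoint from $\supp(G)$, while $H(B) \geq b_{i^*} P_{i^*}(B) > 0$ forces $B$ to meet $\supp(H)$. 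Hence some point of $\supp(H)$ lies outside $\supp(G)$, i.e.\ $\supp(H) \not\subseteq \supp(G)$, establishing \textbf{(SC)}.

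The hard part is purely measure-theoretic bookkeeping rather than any conceptual difficulty: I must justify that the finite union $\bigcup_{j \neq i^*}\supp(P_j)$ is closed (so its complement is open and admits the ball $B$), that the representation-based bound $\supp(G) \subseteq \bigcup_{j:c_j>0}\supp(P_j)$ holds, and that \textbf{(A$''$)} indeed produces a ball of positive $P_{i^*}$-measure avoiding the other supports. Since $\sX = \bbR^d$ with its Borel $\sigma$-algebra has well-behaved supports, these steps are routine and no essential obstacle arises.
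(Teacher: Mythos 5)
Your proof is correct, and it takes a genuinely different route from the paper's. The paper first passes to a minimal subset $A \subseteq \set{P_1, \ldots, P_L}$ with $F, H \in \co(A)$, splits into the cases $F \in \partial \co(A)$ and $F \in \co(A)^\circ$, and in the interior case obtains $G$ as $\text{Residue}(F \, | \, H)$, invoking Statement \emph{2} of Lemma \ref{facts} and Proposition \ref{equiv_opt} to place $G$ on the boundary of $\co(A)$; that machinery silently relies on joint irreducibility of the $P_j$ (which \textbf{(A$''$)} implies) and on the uniqueness of mixture proportions. You instead build the decomposition by hand at the coefficient level: $\gamma = \min_{j : b_j > 0} a_j / b_j$ is exactly the probability-vector analogue of the $\inf$-ratio characterization of $\kappa^*$ in Proposition \ref{label_noise_bin_case}, and it manifestly zeroes out the coordinate $i^*$, so $G$ puts no weight on $P_{i^*}$ while $H$ does. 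This buys three things: (i) no appeal to joint irreducibility, linear independence, or Lemma \ref{facts}/Proposition \ref{equiv_opt} at all, as you correctly note; (ii) the paper's two cases are handled uniformly, since $\gamma = 0$ (i.e.\ some $a_{i^*} = 0$ with $b_{i^*} > 0$) recovers the boundary case with $G = F$; and (iii) the final step --- that $c_{i^*} = 0$, $b_{i^*} > 0$, and \textbf{(A$''$)} together force $\supp(H) \not\subseteq \supp(G)$ --- is spelled out via the closed-union/open-ball argument, whereas the paper asserts this implication tersely. What the paper's route buys in exchange is consistency with its own $\kappa^*$ framework and the fact that its $G$ is the actual residue (the decomposition with maximal weight on $H$), though maximality is not needed for \textbf{(SC)}. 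One cosmetic remark: your case analysis for $\gamma \in [0,1)$ only treats $\gamma = 1$ explicitly, but the same monotonicity argument ($a_j \geq \gamma b_j \geq b_j$ for all $j$, with both vectors summing to one) rules out all of $\gamma \geq 1$, so nothing is lost.
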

\begin{proof}
Let $A = \argmin(|B| : B \subseteq \set{P_1, \ldots, P_L}, F,H \in \co(B))$. Without loss of generality, suppose that $A = \set{P_1, \ldots, P_K}$. $F$ either lies on the boundary of $\co(P_1, \ldots,P_K)$ or doesn't. If $F$ lies on the boundary of $\co(P_1, \ldots,P_K)$, then we pick $G = F$ and $\gamma = 0$ to obtain $F = (1-\gamma) F + \gamma H$. Since $P_1, \ldots, P_L$ satisfy \textbf{(A$''$)}, $\supp(H) \not \subseteq \supp(F)$. 

Now, suppose that $F \in \co(P_1, \ldots, P_K)^\circ$. Let $G \longleftarrow \text{Residue}(F \, | \, H)$; we can write $F = (1-\gamma)G + \gamma H$ for $\gamma \in [0,1)$ since $F \neq H$. Then, by Statement \emph{2} of Lemma \ref{facts} and Proposition \ref{equiv_opt}, $G$ is on the boundary of $\co(P_1, \ldots, P_K)$. 

Without loss of generality, suppose that $G \in \co(P_1, \ldots, P_{K-1})$. Since $F = (1- \gamma) G + \gamma H \not \in \co(P_1, \ldots, P_{K-1})$, $H \not \in \co(P_1, \ldots, P_{K-1})$. Since $P_1, \ldots, P_L$ satisfy \textbf{(A$''$)}, $\supp(H) \not \subseteq \supp(G)$. This completes the proof.
\end{proof}
Lemma \ref{vc_type_ineq} gives sufficient conditions under which an estimator $\est{G}$ satisfies a \textbf{(UDI)}.
\begin{lemma}
\label{vc_type_ineq}
Let
\begin{enumerate}
\item $F$ and $H$ be distributions such that $F \neq H$,

\item $G \longleftarrow \text{Residue}(F \, | \, H)$, and

\item $\est{G} \longleftarrow \text{ResidueHat}(\est{F} \, | \, \est{H})$.
\end{enumerate}
If $\est{\kappa}(\est{F} \, | \, \est{H})$ satisfies a \textbf{(RC)}, $\est{H}$ satisfies a \textbf{(UDI)}, and $\est{F}$ satisfies a \textbf{(UDI)}, then $\est{G}$ satisfies a \textbf{(UDI)}. 
\end{lemma}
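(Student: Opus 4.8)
The plan is to bound $\sup_{S \in \sS}|\est{G}(S) - G(S)|$ directly from the closed forms of the residue and its estimate, using the three hypotheses to control the numerator and a separate argument to keep the denominator bounded away from zero. First I would record that, since $\est{G} \longleftarrow \text{ResidueHat}(\est{F} \, | \, \est{H})$, we have $\sD(\est{G}) = \sD(\est{F}) \cup \sD(\est{H})$, so the target rate is expressed in terms of $\gamma(\sD(\est{G})) = \gamma(\sD(\est{F}) \cup \sD(\est{H}))$. I would also note that $\gamma$ is monotone under set inclusion, being a sum of nonnegative terms $\epsilon_i(\tfrac{1}{n_i}) \geq 0$, so both $\gamma(\sD(\est{F}))$ and $\gamma(\sD(\est{H}))$ are at most $\gamma(\sD(\est{G}))$.

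The key structural fact, and the main obstacle, is controlling the denominators. Since $F \neq H$, Proposition \ref{label_noise_bin_case} gives $\kappa^* := \kappa^*(F \, | \, H) < 1$, so $c := 1 - \kappa^* > 0$ is a fixed positive constant. Using the \textbf{(RC)} hypothesis, $|\est{\kappa} - \kappa^*| \leq A_1 \gamma(\sD(\est{G}))$ on a high-probability event; because $\gamma(\sD(\est{G})) \longrightarrow 0$ as $\vec{n} \longrightarrow \infty$, for $\min(n_i : i \in \sD(\est{G}))$ large enough we also have $1 - \est{\kappa} \geq c/2$ on that same event. Without this lower bound the estimate $\est{G}$ could blow up, so this step is the crux of the argument.

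Writing $G(S) = (F(S) - \kappa^* H(S))/(1-\kappa^*)$ and $\est{G}(S) = (\est{F}(S) - \est{\kappa}\,\est{H}(S))/(1-\est{\kappa})$, I would split the difference into a numerator-mismatch term over the common denominator $1 - \est{\kappa}$ plus a denominator-mismatch term, via $\est{G}(S) - G(S) = \frac{(\est{F}(S)-F(S)) - (\est{\kappa}\est{H}(S)-\kappa^* H(S))}{1-\est{\kappa}} + (F(S)-\kappa^* H(S))\big(\tfrac{1}{1-\est{\kappa}} - \tfrac{1}{1-\kappa^*}\big)$. For the first term, the decomposition $\est{\kappa}\est{H}(S) - \kappa^* H(S) = \est{\kappa}(\est{H}(S) - H(S)) + (\est{\kappa} - \kappa^*)H(S)$ together with $\est{\kappa}, H(S) \in [0,1]$ bounds the numerator by $|\est{F}(S)-F(S)| + |\est{H}(S)-H(S)| + |\est{\kappa}-\kappa^*|$. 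For the second, using $(1-\kappa^*)-(1-\est{\kappa}) = \est{\kappa}-\kappa^*$ and $|F(S)-\kappa^* H(S)| = (1-\kappa^*)G(S) \leq c$, the term collapses to $|\est{\kappa}-\kappa^*|/(1-\est{\kappa})$. On the event where the \textbf{(RC)} bound, the two \textbf{(UDI)} bounds, and $1-\est{\kappa} \geq c/2$ all hold, substituting $A_1 \gamma(\cdot)$ for each deviation and bounding each of $\gamma(\sD(\est{F})), \gamma(\sD(\est{H}))$ by $\gamma(\sD(\est{G}))$ yields $|\est{G}(S) - G(S)| \leq (8A_1/c)\,\gamma(\sD(\est{G}))$ uniformly in $S$, which is the desired \textbf{(UDI)} with constant $A_1' = 8A_1/c$.

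Finally I would assemble the probability bound by a union bound over the three hypotheses. Each fails with probability at most $A_2 \sum_{i} \tfrac{1}{n_i}$ over its respective index set, so the combined failure probability is at most $3A_2 \sum_{i \in \sD(\est{G})} \tfrac{1}{n_i}$ (again using monotonicity of the index sums), giving the claimed guarantee with probability at least $1 - A_2' \sum_{i \in \sD(\est{G})} \tfrac{1}{n_i}$ for $A_2' = 3A_2$. Everything apart from the denominator control is a routine propagation of the deviation bounds through the algebra of the residue formula.
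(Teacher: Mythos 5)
Your proof is correct and follows essentially the same route as the paper's: both arguments hinge on using the \textbf{(RC)} hypothesis together with $\kappa^*(F\,|\,H)<1$ (from Proposition \ref{label_noise_bin_case}) to keep the denominator $1-\est{\kappa}$ bounded away from zero for large samples, then propagate the \textbf{(UDI)} and \textbf{(RC)} deviations through the residue formula by the triangle inequality and finish with a union bound. The only cosmetic difference is that the paper first converts the rate on $\est{\kappa}$ into a rate on $\est{\alpha}=\frac{1}{1-\est{\kappa}}$ and decomposes $\est{G}-G$ in terms of $\est{\alpha}\est{F}+(1-\est{\alpha})\est{H}$, whereas you keep the ratio form and split into numerator- and denominator-mismatch terms directly.
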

\begin{proof}
For the sake of abbreviation, let $\est{\kappa} = \est{\kappa}(\est{F} \, | \, \est{H})$, $\kappa^* = \kappa^*(F \, | \, H)$, $\est{\alpha} = \frac{1}{1- \est{\kappa}}$ and $\alpha^* = \frac{1}{1-\kappa^*}$. We claim that there are constants $A_1, A_2 > 0$ such that for sufficiently large $\min(n_i : i \in \sD(\est{H}) \cup \sD(\est{F}))$,
\begin{align}
\Pr(|\est{\alpha}  - \alpha^* | < A_1 \gamma(\sD(\est{H}) \cup \sD(\est{F}))) & \geq 1-  A_2 \sum_{i \in \sD(\est{H}) \cup \sD(\est{F})} \frac{1}{n_i} \label{alpha_rate}
\end{align}
\noindent Since $\est{\kappa}$ satisfies a \textbf{(RC)}, there exists constants $A_1,A_2 > 0$ such that for large enough $\min(n_i :i \in \sD(\est{H}) \cup \sD(\est{F}))$,
\begin{align*}
|\est{\kappa} - \kappa^*| \leq A_1 \gamma(\sD(\est{H}) \cup \sD(\est{F}))
\end{align*}
\noindent with probability at least $1 - A_2 \sum_{i \in \sD(\est{F}) \cup \sD(\est{H})} \frac{1}{n_i}$. $\kappa^* < 1$ by Proposition \ref{label_noise_bin_case}, so we can let $\min(n_i :i \in \sD(\est{H}) \cup \sD(\est{F}))$ large enough so that $1-\kappa^* - A_1 \gamma(\sD(\est{H}) \cup \sD(\est{F})) > 0$. Then, on this same event, for large enough $\min(n_i :i \in \sD(\est{H}) \cup \sD(\est{F}))$,
\begin{align*}
|\frac{1}{1-\kappa^*} - \frac{1}{1-\est{\kappa}}| & \leq A_1 \frac{\gamma(\sD(\est{H}) \cup \sD(\est{F}))}{(1-\kappa^*)(1-\est{\kappa})} \\
& \leq A_1 \frac{\gamma(\sD(\est{H}) \cup \sD(\est{F}))}{(1-\kappa^*)(1-\kappa^* - A_1 \gamma(\sD(\est{H}) \cup \sD(\est{F})))}
\end{align*}
This proves the claim.

We can write $G= \alpha F + (1-\alpha) H$ with $\alpha \geq 1$. Then, by the triangle inequality, 
\begin{align*}
|\est{G} - G| & = | \est{\alpha} \est{F} + (1-\est{\alpha}) \est{H} -  \alpha F - (1-\alpha) H | \\
& \leq |\est{\alpha} \est{F} - \alpha F | + |(1-\est{\alpha}) \est{H} - (1-\alpha) H| \\
& = |\est{\alpha} \est{F} -\est{\alpha} F + \est{\alpha} F - \alpha F | + |(1-\est{\alpha}) \est{H} - (1-\est{\alpha}) H  + (1-\est{\alpha})H - (1-\alpha) H| \\
& \leq |\est{\alpha}| |\est{F} - F| +  |\est{\alpha} - \alpha | + |1-\est{\alpha}||\est{H} -  H | + |\est{\alpha}- \alpha|
\end{align*}
\noindent Since $\est{F}$ satisfies a \textbf{(UDI)}, $\est{H}$ satisfies a \textbf{(UDI)}, and inequality (\ref{alpha_rate}) holds, the result follows by the union bound. 
\end{proof}
Lemma \ref{upper_bound} gives sufficient conditions under which an estimator $\est{\kappa}$ satisfies one part of the inequality in a \textbf{(RC)}.
\begin{lemma}
\label{upper_bound}
Let $\est{F}$ and $\est{H}$ be estimates of distributions $F$ and $H$ that satisfy \textbf{(UDI)}s. Then, there exist constants $A_1,A_2 > 0$ such that for large enough $\min(n_i :i \in \sD(\est{H}) \cup \sD(\est{F}))$,
\begin{align*}
\kappa^*(F \, | \, H) - \est{\kappa}(\est{F} \, | \, \est{H})  \leq A_1 \gamma(\sD(\est{H}) \cup \sD(\est{F}))
\end{align*}
\noindent with probability at least $1 - A_2 \sum_{i \in \sD(\est{F}) \cup \sD(\est{H})} \frac{1}{n_i}$.
\end{lemma}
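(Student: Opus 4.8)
The plan is to lower-bound $\est{\kappa}(\est{F} \, | \, \est{H})$ term by term, since this estimator is an infimum over $\sS$ and the claim is equivalent to $\est{\kappa}(\est{F} \, | \, \est{H}) \ge \kappa^*(F \, | \, H) - A_1\gamma(\sD(\est{F})\cup\sD(\est{H}))$. First I would work on the intersection of the two \textbf{(UDI)} events for $\est{F}$ and $\est{H}$; by a union bound this intersection has probability at least $1 - A_2\sum_{i\in\sD(\est{F})\cup\sD(\est{H})}\tfrac{1}{n_i}$, which already supplies the probability guarantee in the statement. I would then discard every $S$ with $\est{H}(S) \le \gamma(\sD(\est{H}))$, since for these the denominator $(\est{H}(S)-\gamma(\sD(\est{H})))_+$ is zero and the corresponding term is $+\infty$, hence irrelevant to the infimum.

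For a feasible $S$ (one with $\est{H}(S) > \gamma(\sD(\est{H}))$) I would combine three ingredients: the characterization $\kappa^*(F \, | \, H) = \inf_{C:H(C)>0} F(C)/H(C)$ from Proposition \ref{label_noise_bin_case}, which gives $F(S) \ge \kappa^*(F \, | \, H)\,H(S)$ for every $S$; the \textbf{(UDI)} lower bound $\est{F}(S) > F(S) - A_1\gamma(\sD(\est{F}))$ on the numerator; and the \textbf{(UDI)} upper bound $\est{H}(S) < H(S) + A_1\gamma(\sD(\est{H}))$ on the denominator. The key point is that the built-in corrections $+\gamma(\sD(\est{F}))$ and $-\gamma(\sD(\est{H}))$ absorb the bulk of these deviations, so a direct substitution (writing $D = \est{H}(S) - \gamma(\sD(\est{H}))$, and using $\kappa^*(F \, | \, H)\le 1$ from Proposition \ref{label_noise_bin_case} together with $\gamma(\sD(\est{F})) + \gamma(\sD(\est{H})) \le 2\gamma(\sD(\est{F})\cup\sD(\est{H}))$) yields a bound of the form
\[
\frac{\est{F}(S) + \gamma(\sD(\est{F}))}{\est{H}(S) - \gamma(\sD(\est{H}))} \ge \kappa^*(F \, | \, H) - \frac{A_1\bigl(\gamma(\sD(\est{F})) + \gamma(\sD(\est{H}))\bigr)}{D}.
\]

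The main obstacle is the factor $1/D$: the error term blows up for sets whose corrected denominator $D$ is close to zero, so naively this does not give an $O(\gamma)$ deficit. I would dispose of this by splitting on the size of $D$. When $D$ is bounded below by a fixed constant, the displayed error term is $O(\gamma(\sD(\est{F})\cup\sD(\est{H})))$ and is absorbed into $A_1\gamma(\sD(\est{F})\cup\sD(\est{H}))$ (recall the convention that these constants change from line to line). When $D$ is small, the term is in fact large rather than small, because the numerator $\est{F}(S)+\gamma(\sD(\est{F}))$ is bounded below (via $\est{F}(S) > F(S) - A_1\gamma(\sD(\est{F}))$, $F(S)\ge \kappa^*(F \, | \, H)H(S)$, and feasibility of $S$), so the ratio exceeds $\kappa^*(F \, | \, H)$ outright; such sets therefore cannot pull the infimum below the target. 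Making the threshold between the two regimes explicit is the delicate step, and I expect to use that under \textbf{(A$''$)} the value $\kappa^*(F \, | \, H)$ is witnessed on a set of positive $H$-measure (Lemma \ref{sup_cond}), which keeps the denominators of the near-minimizing sets bounded away from $0$. Taking the infimum over all feasible $S$ then gives $\est{\kappa}(\est{F} \, | \, \est{H}) \ge \kappa^*(F \, | \, H) - A_1\gamma(\sD(\est{F})\cup\sD(\est{H}))$, which is exactly the asserted inequality.
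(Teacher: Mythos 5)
Your reduction of the claim to a term-by-term lower bound on the infimum, the union-bound event, and your displayed inequality are all correct: writing $\gamma_F = \gamma(\sD(\est{F}))$, $\gamma_H = \gamma(\sD(\est{H}))$, $T(S) = \frac{\est{F}(S)+\gamma_F}{(\est{H}(S)-\gamma_H)_+}$ and $D = \est{H}(S)-\gamma_H$, the three ingredients you cite give, for every feasible $S$,
\begin{align*}
T(S) \;\ge\; \kappa^*(F\,|\,H) - \frac{(A-1)\bigl(\kappa^*(F\,|\,H)\,\gamma_H+\gamma_F\bigr)}{D},
\end{align*}
where $A$ is the \textbf{(UDI)} constant. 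The gap is your small-$D$ branch. The claim that small $D$ makes ``the ratio exceed $\kappa^*(F\,|\,H)$ outright'' is, with the ingredients you list, equivalent to $A\le 1$: those ingredients yield the numerator bound $\est{F}(S)+\gamma_F \ge \kappa^* D - (A-1)(\kappa^*\gamma_H+\gamma_F)$, which is $\ge\kappa^* D$ exactly when $A\le1$. If $A\le 1$, no split was ever needed (every feasible term already exceeds $\kappa^*$); if $A>1$ --- the operative case, since ResidueHat estimators inherit constants larger than $1$ through Lemma \ref{vc_type_ineq} --- the claim is false. Concretely, a set $S$ with $F(S)=H(S)=0$ but $\est{H}(S)\in(\gamma_H, A\gamma_H)$ is consistent with the \textbf{(UDI)}s (ResidueHat estimators are signed combinations, so a null set does not force $\est{H}(S)\le\gamma_H$, nor even $\est{F}(S)\ge 0$), is feasible, and has $T(S)\approx \gamma_F/D$, which is far below $\kappa^*-O(\gamma)$ whenever $\gamma_F\ll\gamma_H$; and even with $\gamma_F\asymp\gamma_H$, at $D\asymp\sqrt{\gamma_H}$ neither your fallback $\gamma_F/D$ nor the display above gives better than $\kappa^*-O(\sqrt{\gamma_H})$. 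So there is an intermediate window of denominators that neither branch of your dichotomy covers.

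Your proposed repair does not close this. Lemma \ref{upper_bound} is stated (and invoked) without \textbf{(A$''$)}, so Lemma \ref{sup_cond} is not among its hypotheses; more importantly, that lemma locates a witness set for the \emph{population} infimum $\kappa^*(F\,|\,H)$, whereas what must be controlled are the near-minimizers of the \emph{empirical} infimum $\est{\kappa}(\est{F}\,|\,\est{H})$ --- and it is precisely the bad sets above (tiny $H$-mass, inflated $\est{H}$-mass) that can drag that infimum down; one good witness says nothing about them. The paper's proof never splits on $D$; it splits on the size of the \textbf{(UDI)} constant $A_2$. For $A_2\le1$ it observes that the estimator's built-in corrections dominate the deviations, so for every $S$ with $H(S)>0$, $\kappa^*\le F(S)/H(S) \le \frac{\est{F}(S)+A_2\gamma_F}{(\est{H}(S)-A_2\gamma_H)_+} \le T(S)$, while $H(S)=0$ forces $(\est{H}(S)-\gamma_H)_+=0$ on the event; hence $\est{\kappa}\ge\kappa^*$ with zero deficit. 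For $A_2>1$ it instead factors $\frac{\est{H}(S)-\gamma_H}{\est{H}(S)-A_2\gamma_H}\le 1+O(\gamma_H)$, so the error enters multiplicatively, $\kappa^*\le T(S)\,(1+O(\gamma_H))+O(\gamma_F)$, and concludes using boundedness of $\est{\kappa}$; a multiplicative error is benign exactly where $T(S)$ is small, which is where your additive $O(\gamma)/D$ error explodes. (The paper's $A_2>1$ case is itself delicate --- its constants there depend on $H(S)$ --- but its mechanism is different from, and not recoverable by, the split you propose.)
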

\begin{proof}
For abbreviation, let $\kappa^* = \kappa(F \, | \, H)$ and $\est{\kappa} = \est{\kappa}(\est{F} \, | \, \est{H})$. Fix $S \in \sS$ such that $H(S) > 0$. Since by hypothesis $\est{F}$ and $\est{H}$ satisfy \textbf{(UDI)}, there exist constants $A_1, A_2 > 0$ such that for large enough $\min(n_i : i \in \sD(\est{F}) \cup \sD(\est{H}))$, with probability at least $1 - A_1[\sum_{i \in \sD(\est{F}) \cup \sD(\est{H})} \frac{1}{n_i}]$,
\begin{align*}
|\est{F}(S) - F(S) | & < A_2 \gamma(\sD(\est{F})) \\
|\est{H}(S) - H(S) | & < A_2 \gamma(\sD(\est{H}))
\end{align*}
\noindent If $A_2 \leq 1$, then $\kappa^* \leq \est{\kappa}$ on this event since
\begin{align*}
\kappa^* = \inf_{S \in \sS} \frac{F(S)}{H(S)} \leq \frac{F(S)}{H(S)}
& \leq \frac{\widehat{F}(S) + A_2 \gamma(\sD(\est{F})) }{(\widehat{H}(S) - A_2 \gamma(\sD(\est{H})))_+} \leq \frac{\widehat{F}(S) + \gamma(\sD(\est{F})) }{(\widehat{H}(S) - \gamma(\sD(\est{H})))_+}
\end{align*}
and we take the infimimum over $S \in \sS$ such that $H(S) > 0$. 

Now, suppose that $A_2 > 1$. Let $\min(n_i : i \in \sD(\est{F}) \cup \sD(\est{H}))$ large enough so that $H(S) - 2A_2\gamma(\sD(\est{H})) > 0$ and with high probability $\est{H}(S) - A_2 \gamma(\sD(\est{H}) > 0$. Then, on this event, 
\begin{align*}
\kappa^* & \leq \frac{\est{F}(S) + A_2 \gamma(\sD(\est{F})) }{H(S)} \\
& = \frac{\est{F}(S) + \gamma(\sD(\est{F})) }{H(S)} + A_3 \gamma(\sD(\est{F}))
\end{align*}
\noindent where we fix $A_3 = \frac{(A_2 - 1)}{H(S)}$. Further,
\begin{align*}
\frac{\est{F}(S) + \gamma(\sD(\est{F})) }{H(S)} & \leq \frac{\est{F}(S) + \gamma(\sD(\est{F})) }{\est{H}(S) - A_2 \gamma(\sD(\est{H})} \\
& = \frac{\est{F}(S) + \gamma(\sD(\est{F})) }{\est{H}(S) - \gamma(\sD(\est{H}))}  \Big[ \frac{\est{H}(S) - \gamma(\sD(\est{H})) }{\est{H}(S) - A_2 \gamma(\sD(\est{H}))} \Big] \\
& \leq \frac{\est{F}(S) + \gamma(\sD(\est{F})) }{\est{H}(S) - \gamma(\sD(\est{H}))} \Big[ 1 + \frac{(A_2-1) \gamma(\sD(\est{H}))}{\est{H}(S) - A_2 \gamma(\sD(\est{H}))} \Big] \\
& \leq \frac{\est{F}(S) + \gamma(\sD(\est{F})) }{\est{H}(S) - \gamma(\sD(\est{H}))}\Big[ 1 + \frac{(A_2-1) \gamma(\sD(\est{H}))}{H(S) - 2A_2\gamma(\sD(\est{H}))} \Big] \\
& \leq \frac{\est{F}(S) + \gamma(\sD(\est{F})) }{\est{H}(S) - \gamma(\sD(\est{H}))} \Big[ 1 + \frac{(A_2 - 1)}{C}\gamma(\sD(\est{H})) \Big]
\end{align*}
\noindent where we pick $0< C < H(S) - 2A_2\gamma(\sD(\est{H}))$. Combining the above inequalities and taking the infimum over $S \in \sS$ such that $H(S) > 0$ gives
\begin{align*}
\kappa^* & \leq \widehat{\kappa} + \frac{\widehat{\kappa} (A_2 - 1)_+}{C}\gamma(\sD(\est{H}))) +  A_3 \gamma(\sD(\est{F})) 
\end{align*}
\noindent Noting that there exists $M \in \bbR$ such that $\widehat{\kappa} < M$ for large enough $\min(n_i : i \in \sD(\est{F}) \cup \sD(\est{H}))$ since there is some $S \in \sS$ such that $H(S) > 0$, we obtain the result.
\end{proof}
Lemma \ref{rate} gives sufficient conditions under which $\est{\kappa}$ satisfies \textbf{(RC)}.
\begin{lemma}
\label{rate}
Let $F$ and $H$ be distributions such that $F \neq H$. If
\begin{itemize}
\item $F$ satisfies \textbf{(SC)} with respect to $H$,

\item $\est{F}$ satisfies \textbf{(UDI)}, and

\item $\est{H}$ satisfies \textbf{(UDI)}, 

\end{itemize}
then $\est{\kappa}(\est{F} \, | \, \est{H})$ satisfies \textbf{(RC)}.
\end{lemma}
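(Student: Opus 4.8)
The plan is to establish the two directions of the \textbf{(RC)} inequality separately. The bound $\kappa^*(F \mid H) - \est{\kappa}(\est{F} \mid \est{H}) \leq A_1 \gamma(\sD(\est{H}) \cup \sD(\est{F}))$ is exactly the content of Lemma \ref{upper_bound}, which already holds under the two \textbf{(UDI)} hypotheses alone. Thus the only real work is the reverse bound $\est{\kappa}(\est{F} \mid \est{H}) - \kappa^*(F \mid H) \leq A_1 \gamma(\sD(\est{H}) \cup \sD(\est{F}))$, and this is where \textbf{(SC)} enters: the estimator $\est{\kappa}$ takes an infimum over $\sS$ of a ratio whose thresholded denominator $(\est{H}(S) - \gamma(\sD(\est{H})))_+$ is badly behaved on sets where $H$ has little mass, so I must exhibit a single set on which the ratio is close to $\kappa^*$ and $H$ is bounded away from $0$.

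First I would use \textbf{(SC)} to produce such a set. By hypothesis there exist a distribution $G$ and $\gamma \in [0,1]$ with $F = (1-\gamma) G + \gamma H$ and $\supp(H) \not\subseteq \supp(G)$. Pick a point $x \in \supp(H) \setminus \supp(G)$; since $\supp(G)$ is closed, there is an open ball $S_0$ about $x$ disjoint from $\supp(G)$, whence $G(S_0) = 0$, while $H(S_0) > 0$ because $x \in \supp(H)$. Since $\sS$ contains all open balls, $S_0 \in \sS$. Evaluating the decomposition on $S_0$ gives $F(S_0) = \gamma H(S_0)$, so $F(S_0)/H(S_0) = \gamma$; and since $F - \gamma H = (1-\gamma)G \geq 0$, every $S$ with $H(S) > 0$ satisfies $F(S)/H(S) \geq \gamma$. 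Together with Proposition \ref{label_noise_bin_case} this shows $\kappa^*(F \mid H) = \gamma = F(S_0)/H(S_0)$, so $S_0$ is a fixed set with $c \equiv H(S_0) > 0$ independent of the sample sizes and with ratio exactly $\kappa^*$.

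Next I would plug $S_0$ into the estimator and apply the two \textbf{(UDI)}s at $S = S_0$. Because $\est{\kappa}$ is an infimum over $\sS$, it is at most the value of the ratio at $S_0$; on the intersection of the two high-probability events, $\est{F}(S_0) \leq F(S_0) + A_1 \gamma(\sD(\est{F}))$ and $\est{H}(S_0) \geq H(S_0) - A_1 \gamma(\sD(\est{H}))$, so
\begin{align*}
\est{\kappa}(\est{F}\mid \est{H}) \leq \frac{\est{F}(S_0) + \gamma(\sD(\est{F}))}{(\est{H}(S_0) - \gamma(\sD(\est{H})))_+} \leq \frac{F(S_0) + (A_1+1)\gamma(\sD(\est{F}))}{H(S_0) - (A_1+1)\gamma(\sD(\est{H}))}.
\end{align*}
Since $H(S_0) = c > 0$ is constant, for $\min(n_i : i \in \sD(\est{F}) \cup \sD(\est{H}))$ large enough the denominator exceeds $c/2$ (so dropping the $(\cdot)_+$ is valid), and a short manipulation—factoring $H(S_0)^{-1}$ out of the numerator and bounding $1/(1 - (A_1+1)\gamma(\sD(\est{H}))/c)$ by $1 + A_3\gamma(\sD(\est{H}))$ for a constant $A_3$—yields $\est{\kappa}(\est{F}\mid\est{H}) \leq \kappa^* + A \gamma(\sD(\est{H}) \cup \sD(\est{F}))$ for a constant $A$, using $\kappa^* \leq 1$ and $\gamma(\sD(\est{F})), \gamma(\sD(\est{H})) \leq \gamma(\sD(\est{H}) \cup \sD(\est{F}))$. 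Combining this with Lemma \ref{upper_bound} and a union bound over the (at most three) events gives the two-sided bound defining \textbf{(RC)}, with the probability absorbed into $1 - A_2 \sum_{i \in \sD(\est{F}) \cup \sD(\est{H})} \frac{1}{n_i}$ after adjusting constants.

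The main obstacle is precisely the reverse direction and the instability of the thresholded denominator: without \textbf{(SC)} the infimum defining $\kappa^*$ might be approached only along sets $S$ with $H(S) \to 0$, on which the $-\gamma(\sD(\est{H}))$ correction is not negligible relative to $H(S)$ and no constant-order rate is available. The role of \textbf{(SC)} is exactly to furnish a fixed-mass witnessing set—an open ball on which $G$ vanishes—and the crux of the argument is verifying that this ball lies in $\sS$ and has ratio exactly $\kappa^*$; everything after that is routine algebra of the kind already performed in Lemma \ref{upper_bound}.
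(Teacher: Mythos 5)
Your proposal is correct and follows essentially the same route as the paper: the first direction is delegated to Lemma \ref{upper_bound}, and the second uses \textbf{(SC)} to produce a fixed set in $\sS$ on which $G$ vanishes and $H$ has positive mass, so that the ratio there equals $\kappa^*(F \mid H)$, after which the \textbf{(UDI)}s and routine algebra give the bound. The only difference is presentational: the paper identifies $\kappa^* = \gamma$ via irreducibility of $G$ with respect to $H$ and outsources the final denominator manipulations to Theorem 2 of \citet{scott2015}, whereas you derive $\kappa^* = \gamma$ directly from the infimum characterization in Proposition \ref{label_noise_bin_case} and carry out that algebra explicitly.
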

\begin{proof}
By Lemma \ref{upper_bound}, there exists constants $A_1,A_2 > 0$ such that for large enough $\min(n_i :i \in \sD(\est{H}) \cup \sD(\est{F}))$,
\begin{align*}
\kappa^*(F \, | \, H) - \est{\kappa}(\est{F} \, | \, \est{H})  \leq A_1 \gamma(\sD(\est{H}) \cup \sD(\est{F}))
\end{align*}
\noindent with probability at least $1 - A_2 \sum_{i \in \sD(\est{F}) \cup \sD(\est{H})} \frac{1}{n_i}$.

The proof of the other direction of the inequality is very similar to the proof of Theorem 2 in \citet{scott2015}. By hypothesis, $F$ satisfies \textbf{(SC)} with respect to $H$, so there exists a distribution $G$ such that $F = \gamma G + (1-\gamma) H$ for some $\gamma \in [0,1]$ and $\supp(H) \not \subseteq \supp(G)$. Therefore, we have that $G$ is irreducible with respect to $H$ and, by Proposition \ref{label_noise_bin_case}, $\kappa^*(F \, | \, H) = \gamma$. For abbreviation, let $\kappa^* = \kappa^*(F \, | \, H)$ and $\est{\kappa} = \est{\kappa} (\est{F} \, | \, \est{H})$. Since $\supp(H) \not \subseteq \supp(G)$, there exists an open set $O$ such that 
\begin{align*}
\frac{G(O)}{H(O)} & = ( 1- \gamma) \frac{G(O)}{H(O)} + \gamma = \kappa^*
\end{align*}
\noindent Then, since $\sS$ contains a generating set for the standard topology on $\bbR^d$, there exists $S \in \sS$ such that
\begin{align*}
\frac{G(S)}{H(S)} = \kappa^*
\end{align*}
Since by hypothesis $\est{F}$ and $\est{H}$ satisfy \textbf{(UDI)}, there exist constants $A_3, A_4 > 0$ such that for large enough $\min(n_i : i \in \sD(\est{F}) \cup \sD(\est{H}))$, with probability at least $1 - A_3[\sum_{i \in \sD(\est{F}) \cup \sD(\est{H})} \frac{1}{n_i}]$,
\begin{align*}
\est{\kappa} & \leq \frac{F(S) + A_4\gamma(\sD(\est{F})) }{(H(S) - A_4\gamma(\sD(\est{H})) )_+} \\
& \leq \frac{F(S) + \epsilon}{(H(S) - \epsilon)_+}
\end{align*}
\noindent where $\epsilon = 2A_4 \gamma(\sD(\est{F}) \cup \sD(\est{H}))$. The rest of the proof is identical to the proof of Theorem 2 from \citet{scott2015} and, therefore, we omit it.
\end{proof}
The following theorem gives sufficient conditions under which a ResidueHat estimator satisfies \textbf{(UDI)}. It is the basis of Proposition \ref{uniform_convergence}.
\begin{lemma}
\label{main_est_ineq}
If $P_1, \ldots, P_L$  satisfy \textbf{(A$''$)} and $\est{G}$ is a ResidueHat estimator of a distribution $G \in \co(P_1, \ldots, P_L)$, then $\est{G}$ satisfies \textbf{(UDI)}.
\end{lemma}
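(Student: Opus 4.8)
The plan is to argue by induction on the recursion depth of the ResidueHat estimator $\est{G}$, where I take the depth of an empirical distribution to be $0$ and the depth of $\text{ResidueHat}(\est{F} \, | \, \est{H})$ to be $1 + \max(\text{depth}(\est{F}), \text{depth}(\est{H}))$. By the recursive definition of a ResidueHat estimator, at every node of this recursion the two arguments $F$ and $H$ satisfy $F \neq H$ and $F, H \in \co(P_1, \ldots, P_L)$, and the leaves are empirical distributions. Throughout I will use that $\sD(\est{G}) = \sD(\est{F}) \cup \sD(\est{H})$, which is immediate from the definition of $\sD(\cdot)$, so that the deviation bounds and failure probabilities aggregate over the correct index set.

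For the auxiliary base fact, suppose $\est{G} = \ed{\tilde{P}}_i$ is an empirical distribution estimating $G = \p{i} \in \co(P_1, \ldots, P_L)$, so that $\sD(\est{G}) = \set{i}$ and $\gamma(\sD(\est{G})) = \epsilon_i(1/n_i)$. Applying the VC inequality with $\delta = 1/n_i$ gives, with probability at least $1 - 1/n_i$, that $\sup_{S \in \sS}|\est{G}(S) - G(S)| \leq \epsilon_i(1/n_i) = \gamma(\sD(\est{G}))$; hence $\est{G}$ satisfies \textbf{(UDI)} with $A_1 = A_2 = 1$. For the inductive step, suppose $\est{G} \longleftarrow \text{ResidueHat}(\est{F} \, | \, \est{H})$ and assume every ResidueHat estimator of strictly smaller depth satisfies \textbf{(UDI)}. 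Each of $\est{F}$ and $\est{H}$ is either an empirical distribution (covered by the base fact) or a ResidueHat estimator of smaller depth (covered by the inductive hypothesis), so both $\est{F}$ and $\est{H}$ satisfy \textbf{(UDI)}. Since $F, H \in \co(P_1, \ldots, P_L)$, $F \neq H$, and $P_1, \ldots, P_L$ satisfy \textbf{(A$''$)}, Lemma \ref{sup_cond} shows that $F$ satisfies \textbf{(SC)} with respect to $H$. Combining this with the \textbf{(UDI)}s for $\est{F}$ and $\est{H}$, Lemma \ref{rate} yields that $\est{\kappa}(\est{F} \, | \, \est{H})$ satisfies a \textbf{(RC)}. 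Finally, applying Lemma \ref{vc_type_ineq} with the ingredients $F \neq H$, the \textbf{(RC)} for $\est{\kappa}(\est{F} \, | \, \est{H})$, and the \textbf{(UDI)}s for $\est{F}$ and $\est{H}$, I conclude that $\est{G}$ satisfies \textbf{(UDI)}, closing the induction.

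The argument is essentially a bookkeeping chaining of the three previously established lemmas (\ref{sup_cond}, \ref{rate}, \ref{vc_type_ineq}) along the recursive structure of the estimator, and no new analysis is required beyond verifying the hypotheses at each node. The points that require care are (i) confirming that $F \neq H$ and $F, H \in \co(P_1, \ldots, P_L)$ hold at every node — these are precisely conditions (i)–(ii) baked into the definition of a ResidueHat estimator, which is why that definition is stated as it is — and (ii) checking that Lemmas \ref{rate} and \ref{vc_type_ineq} deliver their conclusions in exactly the $\gamma(\sD(\est{F}) \cup \sD(\est{H}))$ form matching $\gamma(\sD(\est{G}))$, so the recursion propagates cleanly. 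Because the constants $A_1, A_2$ are permitted to change from line to line, no quantitative tracking of their growth through the recursion is needed; the main (mild) obstacle is simply formalizing the induction on the estimator's recursive structure and confirming the hypothesis bookkeeping at each application of the lemmas.
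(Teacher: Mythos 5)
Your proof is correct and takes essentially the same approach as the paper's: the paper organizes the argument as a traversal of the directed acyclic graph of ResidueHat estimators in topological order (splitting into four cases according to whether each argument is an empirical distribution or a previously handled estimator), while you phrase it as induction on recursion depth, but both arguments chain Lemmas \ref{sup_cond}, \ref{rate}, and \ref{vc_type_ineq} at each node in exactly the same way, with the VC inequality handling the empirical leaves.
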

\begin{proof}
A ResidueHat estimator is defined recursively in terms of other ResidueHat estimators, which are empirical distributions in the base case. Let $\est{F}_1, \ldots, \est{F}_k$ denote the ResidueHat estimators in terms of which $\est{G}$ is defined and let $F_1, \ldots, F_k$ denote the distributions that they estimate. Consider the directed graph $(V,E)$ defined as follows: $V = \set{v_1, \ldots, v_k,g}$ where $v_i$ corresponds to estimator $\est{F}_i$ and $g$ to estimator $\est{G}$. Let $(v_i,v_j) \in E$ if there exists $l \in [k]$ such that either $\est{F}_j \longleftarrow \text{ResidueHat}(\est{F}_i \, | \, \est{F}_l)$ or $\est{F}_j \longleftarrow \text{ResidueHat}(\est{F}_l \, | \, \est{F}_i)$; define edges similarly for the nodes corresponding to the estimators $\est{G}$, $\est{F}_i$ and $\est{F}_j$ such that $\est{G} \longleftarrow \text{ResidueHat}(\est{F}_i \, | \, \est{F}_j)$. Notice that \emph{(i)} $(V,E)$ is a directed acyclic graph, \emph{(ii)} the set of nodes in $(V,E)$ with no incoming edges correspond to the empirical distributions among $\est{F}_1, \ldots, \est{F}_k$, \emph{(iii)} the only node with no outgoing edges is $g$, and \emph{(iv)} every node with incoming edges has exactly $2$ incoming edges.

We will show that $\est{F}_1, \ldots, \est{F}_k, \est{G}$ all satisfy \textbf{(UDI)} by considering the estimators in the following order: first, we consider estimators whose nodes have incoming edges from nodes representing empirical distributions. Then, we only consider a node if its incoming edges come from nodes that represent estimators that have either been shown to satisfy \textbf{(UDI)} or are empirical distributions. There are four cases that we must consider. Let $\est{F}_l \longleftarrow \text{ResidueHat}(\est{F}_i \, | \, \est{F}_j)$.
\begin{enumerate}
\item $\est{F}_i$ and $\est{F}_j$ are empirical distributions,

\item $\est{F}_i$ is an empirical distribution and $\est{F}_j$ satisfies \textbf{(UDI)},

\item $\est{F}_i$ satisfies \textbf{(UDI)} and $\est{F}_j$ is an empirical distribution, and

\item $\est{F}_i$ and $\est{F}_j$ satisfy \textbf{(UDI)}.
\end{enumerate}
 
Suppose that $\est{F}_i$ and $\est{F}_j$ are empirical distributions.  Since $\est{F}_i$ and $\est{F}_j$ are empirical distributions, the VC inequality applies to $\est{F}_i$ and $\est{F}_j$. Therefore, $\est{F}_i$ and $\est{F}_j$ satisfy \textbf{(UDI)}. Since $\est{F}_l$ is a ResidueHat estimator, by definition $F_i, F_j \in \co(P_1, \ldots, P_L)$ and $F_i \neq F_j$. Then, by Lemma \ref{sup_cond}, $F_i$ satisfies \textbf{(SC)} with respect to $F_j$. Then, by Lemma \ref{rate}, $\est{\kappa}(\est{F}_i \, | \, \est{F}_j)$ satisfies \textbf{(RC)}. Then, all of the assumptions of Lemma  \ref{vc_type_ineq} are satisfied, so $\est{F}_l$ satisfies \textbf{(UDI)}. Note that $F_l \in \co(P_1, \ldots, P_L)$ by Proposition \ref{equiv_opt}. The rest of the cases are similar. Making the appropriate argument at each node in the directed acylcic graph eventually shows that $\est{G}$ satisfies \textbf{(UDI)}. 
\end{proof}
\begin{proof}[Proposition \ref{uniform_convergence}]
By Lemma \ref{main_est_ineq}, $\est{G}$ satisfies \textbf{(UDI)}. Explicitly, there exist constants $A_1, A_2 > 0$ such that for large enough $\min(n_i : i \in \sD(\est{G}))$ with probability at least $1 - A_1 \sum_{i \in \sD(\est{G})} \frac{1}{n_i}$, $\est{G}$ satisfies for every $S \in \sS$,
\begin{align*}
|\est{G}(S) - G(S)| & < A_2 \gamma(\sD(\est{G})) =  A_2 \sum_{i \in \sD(\est{G})}  \epsilon_i(\frac{1}{n_i}) \longrightarrow 0 
\end{align*}
as $\vec{n} \longrightarrow \infty$.
\end{proof}
\subsubsection{The FaceTestHat Algorithm}
The FaceTestHat algorithm (see Algorithm \ref{face_test_hat_alg}) differs from the FaceTest algorithm in that it requires the specification of $\epsilon > 0$. As will become clear in the following section, this difference introduces further changes in DemixHat.
\begin{algorithm}
\caption{FaceTestHat($\est{Q}_1,\cdots, \est{Q}_K \, | \, \epsilon$)}
\begin{algorithmic}[1]
\label{face_test_hat_alg}
\FOR{$i=1,\cdots, K$}
\FOR{$j=1,\cdots, K$}
\IF{$i \neq j$}
\STATE $\est{R}_{i,j} \longleftarrow \text{ResidueHat}(\est{Q}_i \, | \, \est{Q}_j)$
\IF{$\est{\kappa}( \est{Q}_i \, | \, \est{R}_{i,j}) \geq 1-\epsilon$}
\RETURN $0$
\ENDIF
\ENDIF
\ENDFOR
\ENDFOR
\RETURN $1$
\end{algorithmic}
\end{algorithm}

\begin{lemma}
\label{face_test_hat}
Let $\epsilon > 0$.	Suppose that $P_1, \ldots, P_L$ satisfy \textbf{(A$''$)} and that $Q_1, \ldots, Q_K$ are distinct distributions lying in $\co(P_1, \ldots, P_L)$. Let $\est{Q}_i$ be a ResidueHat estimate of $Q_i$ $\forall i \in [K]$. Then, as $\vec{n} \longrightarrow \infty$, with probability tending to $1$, if FaceTestHat($\est{Q}_1,\cdots,\est{Q}_K \, | \, \epsilon$) returns $1$, then $Q_1,\cdots,Q_K$ are in the interior of the same face. 
\end{lemma}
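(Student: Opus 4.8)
The plan is to prove the contrapositive of the only nontrivial case. The assertion is the per-sample implication ``FaceTestHat returns $1$ $\Rightarrow$ $Q_1,\ldots,Q_K$ lie on the interior of a common face,'' which holds automatically whenever $Q_1,\ldots,Q_K$ actually do lie on the interior of a common face (the conclusion is then true outright). Hence the content is confined to the case where they do not, and there it suffices to show that FaceTestHat returns $0$ with probability tending to $1$. So I would fix the hypothesis that $Q_1,\ldots,Q_K$ are \emph{not} all on the interior of the same face and aim to exhibit one pair that forces a $0$.

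First I would extract, exactly as in the population proof of Proposition \ref{face_test}, a distinguished pair. Writing $Q_i = \vec{\eta}_i^T \vec{P}$, that argument shows that when $Q_1,\ldots,Q_K$ do not all lie on the interior of a common face there is a pair $i \neq j$ (after relabeling) with $\sS(Q_j) \not\subseteq \sS(Q_i)$. By statement $3$ of Lemma \ref{facts} this forces the population residue $R_{i,j} = \text{Residue}(Q_i \mid Q_j) = Q_i$, so that $\kappa^*(Q_i \mid R_{i,j}) = \kappa^*(Q_i \mid Q_i) = 1$. This identifies the pair on which FaceTestHat ought to trip.

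The core step is to promote this exact population identity to a high-probability lower bound on the empirical quantity $\est{\kappa}(\est{Q}_i \mid \est{R}_{i,j})$, where $\est{R}_{i,j} = \text{ResidueHat}(\est{Q}_i \mid \est{Q}_j)$. Since $Q_i \neq Q_j$ (the $Q$'s are distinct) and $Q_i, Q_j \in \co(P_1,\ldots,P_L)$, the definition makes $\est{R}_{i,j}$ a ResidueHat estimator of $R_{i,j} = Q_i \in \co(P_1,\ldots,P_L)$. Under \textbf{(A$''$)}, Lemma \ref{main_est_ineq} then guarantees that both $\est{Q}_i$ and $\est{R}_{i,j}$ satisfy a \textbf{(UDI)}. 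I would now invoke Lemma \ref{upper_bound} with $F = Q_i$ and $H = R_{i,j} = Q_i$: it yields constants $A_1, A_2 > 0$ such that, for large enough $\min(n_l : l \in \sD(\est{Q}_i) \cup \sD(\est{R}_{i,j}))$, with probability at least $1 - A_2 \sum_{l \in \sD(\est{Q}_i) \cup \sD(\est{R}_{i,j})} \frac{1}{n_l}$,
\[
1 - \est{\kappa}(\est{Q}_i \mid \est{R}_{i,j}) = \kappa^*(Q_i \mid Q_i) - \est{\kappa}(\est{Q}_i \mid \est{R}_{i,j}) \leq A_1 \gamma(\sD(\est{Q}_i) \cup \sD(\est{R}_{i,j})).
\]
As $\vec{n} \to \infty$ the right side tends to $0$ and the failure probability tends to $0$, so for all sufficiently large $\vec{n}$ we have $A_1 \gamma(\cdot) < \epsilon$ with probability tending to $1$, whence $\est{\kappa}(\est{Q}_i \mid \est{R}_{i,j}) > 1 - \epsilon$. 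On this event the test condition on line $5$ of FaceTestHat is met at the pair $(i,j)$, so the algorithm returns $0$ (it may return $0$ even earlier, which is equally fine). This gives the contrapositive and hence the lemma.

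The step I expect to be the main obstacle is precisely the invocation above, and the subtlety worth flagging is why it goes through: because $R_{i,j} = Q_i$, the two estimated distributions $\est{Q}_i$ and $\est{R}_{i,j}$ target the \emph{same} limit, so the full rate-of-convergence result (Lemma \ref{rate} and the \textbf{(RC)} property it produces) is unavailable, since it requires $F \neq H$. The resolution is that I only need a one-sided bound, and Lemma \ref{upper_bound} delivers exactly $\kappa^* - \est{\kappa} \leq A_1 \gamma(\cdot)$ with no hypothesis that $F \neq H$, demanding only that $\est{F}$ and $\est{H}$ satisfy \textbf{(UDI)}. Recognizing that soundness of the test rests on this degenerate-but-benign case $F = H$, rather than on the two-sided rate, is the key move; the remaining estimates are routine limits of the $\gamma(\cdot)$ and failure-probability terms.
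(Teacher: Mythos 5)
Your proposal is correct and follows essentially the same route as the paper's own proof: prove the contrapositive, identify a pair $i \neq j$ with $\kappa^*(Q_i \, | \, R_{i,j}) = 1$ (the paper cites Proposition \ref{face_test} where you unfold its proof via statement 3 of Lemma \ref{facts}), establish \textbf{(UDI)} for the relevant ResidueHat estimators via Lemma \ref{main_est_ineq}, and apply the one-sided bound of Lemma \ref{upper_bound} to get $\est{\kappa}_{i,j} \geq 1 - \epsilon$ with probability tending to one. Your explicit remark that the degenerate case $R_{i,j} = Q_i$ rules out Lemma \ref{rate} and forces reliance on the one-sided Lemma \ref{upper_bound} is a correct reading of exactly what the paper's proof does implicitly.
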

\begin{proof}[Lemma \ref{face_test_hat}]
Let $\epsilon > 0$, $\kappa^*_{i,j} = \kappa^*(Q_i \, | \, R_{i,j} )$ and $\est{\kappa}_{i,j} = \est{\kappa}( \est{Q}_i \, | \, \est{R}_{i,j})$. We prove the contrapositive. Suppose that $Q_1, \ldots, Q_K$ are not on the interior of the same face. Then, by Proposition \ref{face_test}, $\text{FaceTest}(Q_1,\ldots,Q_K)$ returns $0$, which occurs if and only if there exist $i \neq j$ such that $\kappa^*_{i,j} = 1$.  

Since $\est{Q}_i$ and $\est{Q}_j$ are ResidueHat estimators, by Lemma \ref{main_est_ineq}, $\est{Q}_i$ and $\est{Q}_j$ satisfy \textbf{(UDI)}. Since $Q_i \neq Q_j$ by hypothesis, $\est{R}_{i,j}$ is a ResidueHat estimator and, therefore, satisfies a \textbf{(UDI)} by Lemma \ref{main_est_ineq}. 

By Lemma \ref{upper_bound}, there exist constants $A_1, A_2> 0$ such that there is some $N_0$ such that if $\min(n_s : s \in \sD(\est{R}_{i,j})) \geq N_0$, with probability at least $1 - A_1 \sum_{s \in \sD(\est{R}_{i,j})} \frac{1}{n_s}$, we have that $\est{\kappa}_{i,j} + A_2 \gamma(\sD(\est{R}_{i,j})) \geq \kappa^*_{i,j}$.  Let $N_1$ be such that if $\min(n_s : s \in \sD(\est{R}_{i,j})) \geq N_1$, then $A_2\gamma(\sD(\est{R}_{i,j})) < \epsilon$. Then, if $\min(n_s : s \in \sD(\est{R}_{i,j})) \geq \max(N_0, N_1)$, with probability at least $1 - A_1 \sum_{s \in \sD(\est{R}_{i,j})} \frac{1}{n_s}$
\begin{align*}
 \est{\kappa}_{i,j} & \geq \kappa^*_{i,j} - \gamma(\sD(\est{R}_{i,j})) \\
& \geq 1 - \epsilon
\end{align*}
\noindent So, FaceTestHat($\set{\est{Q}_1,\cdots,\est{Q}_K}$) returns $0$.
\end{proof}
\begin{lemma}
\label{face_test_hat_term}
Suppose that $P_1, \ldots, P_L$ satisfy \textbf{(A$''$)} and that $Q_1, \ldots, Q_K$ are distinct distributions lying on the interior of the same face of $\co(P_1, \ldots, P_L)$. Let $\est{Q}_i$ be a ResidueHat estimate of $Q_i$ $\forall i \in [K]$. Then, there exists $\delta > 0$ such that if $\delta > \epsilon > 0$, then with probability increasing to $1$ as $\vec{n} \longrightarrow \infty$, FaceTestHat($\est{Q}_1, \ldots, \est{Q}_L \, | \, \epsilon$) returns $1$.
\end{lemma}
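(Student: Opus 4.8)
The plan is to prove the converse (completeness) direction of Lemma \ref{face_test_hat}: whereas that lemma guarantees soundness (if FaceTestHat returns $1$ then the distributions share a face), here I must show that when $Q_1, \ldots, Q_K$ genuinely lie on the interior of a common face, the test returns $1$ with high probability once $\epsilon$ is small. Recalling that FaceTestHat returns $0$ precisely when some pair satisfies $\est{\kappa}(\est{Q}_i \, | \, \est{R}_{i,j}) \geq 1-\epsilon$, the goal reduces to showing that every empirical statistic $\est{\kappa}(\est{Q}_i \, | \, \est{R}_{i,j})$ stays strictly below $1-\epsilon$ with probability tending to $1$. The strategy is to first exhibit a fixed population gap below $1$ for each $\kappa^*(Q_i \, | \, R_{i,j})$, and then transfer this gap to the estimators via a rate-of-convergence argument.

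First I would establish the population gap. Because $Q_1, \ldots, Q_K$ lie on the interior of the same face, their support sets $\sS(Q_1), \ldots, \sS(Q_K)$ all coincide; in particular $\sS(Q_j) \subseteq \sS(Q_i)$ for every pair $i \neq j$. By statement 3 of Lemma \ref{facts}, this forces $R_{i,j} \equiv \text{Residue}(Q_i \, | \, Q_j) \neq Q_i$, and hence $\kappa^*(Q_i \, | \, R_{i,j}) < 1$ by Proposition \ref{label_noise_bin_case}. Since there are finitely many pairs, I may set $\delta_0 = \min_{i \neq j}(1 - \kappa^*(Q_i \, | \, R_{i,j})) > 0$ and take $\delta = \delta_0 / 2$; note this threshold depends only on the population distributions, not on $\vec{n}$.

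Next I would transfer the gap to the estimators. Since the $\est{Q}_i$ are ResidueHat estimators of distributions $Q_i \in \co(P_1, \ldots, P_L)$ with $Q_i \neq Q_j$, the quantity $\est{R}_{i,j} = \text{ResidueHat}(\est{Q}_i \, | \, \est{Q}_j)$ is itself a ResidueHat estimator of $R_{i,j}$ (the three conditions in the definition are met). Hence $\est{Q}_i$ and $\est{R}_{i,j}$ both satisfy \textbf{(UDI)} by Lemma \ref{main_est_ineq}. Because $Q_i \neq R_{i,j}$ and both lie in $\co(P_1, \ldots, P_L)$ (the residue stays in the convex hull by statement 2 of Lemma \ref{facts}), Lemma \ref{sup_cond} (using \textbf{(A$''$)}) shows $Q_i$ satisfies \textbf{(SC)} with respect to $R_{i,j}$, so Lemma \ref{rate} gives that $\est{\kappa}(\est{Q}_i \, | \, \est{R}_{i,j})$ satisfies a \textbf{(RC)}. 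Consequently, for large enough $\vec{n}$ and with probability at least $1 - A_2 \sum_s \frac{1}{n_s}$, one has $\est{\kappa}(\est{Q}_i \, | \, \est{R}_{i,j}) \leq \kappa^*(Q_i \, | \, R_{i,j}) + A_1 \gamma(\sD(\est{Q}_i) \cup \sD(\est{R}_{i,j})) \leq 1 - \delta_0 + A_1 \gamma$, and the last term drops below $\delta_0/2$ as $\vec{n} \longrightarrow \infty$. Thus $\est{\kappa}(\est{Q}_i \, | \, \est{R}_{i,j}) < 1 - \delta_0/2 < 1 - \epsilon$ for every $\epsilon < \delta = \delta_0/2$. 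A union bound over the finitely many pairs shows all these inequalities hold simultaneously with probability tending to $1$, so FaceTestHat returns $1$.

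The main obstacle is obtaining the correct direction of the rate-of-convergence inequality: I need an \emph{upper} bound on $\est{\kappa}$ (so that it stays below $1-\epsilon$), which is the harder half of the \textbf{(RC)} — the lower-bound half supplied by Lemma \ref{upper_bound} alone is insufficient here. This is exactly why \textbf{(A$''$)} is needed: it supplies the support condition \textbf{(SC)} through Lemma \ref{sup_cond}, which in turn unlocks the upper bound in Lemma \ref{rate}. A secondary point requiring care is that $\delta$ must be chosen from the population gap alone, uniformly over all pairs and independently of the sample sizes $\vec{n}$.
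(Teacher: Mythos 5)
Your proof is correct and takes essentially the same approach as the paper's: establish the population gap $\kappa^*(Q_i \,|\, R_{i,j}) < 1$ for every pair (you inline this via statement 3 of Lemma \ref{facts} and Proposition \ref{label_noise_bin_case}, while the paper cites Proposition \ref{face_test} and statement 2 of Lemma \ref{facts}, which amounts to the same argument), then verify \textbf{(UDI)} via Lemma \ref{main_est_ineq}, \textbf{(SC)} via Lemma \ref{sup_cond}, and \textbf{(RC)} via Lemma \ref{rate}, finally taking $\delta$ to be half the minimum population gap. The only difference is presentational: you spell out the upper-bound direction of \textbf{(RC)} and the union bound over pairs, which the paper compresses into ``Then, the result follows.''
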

\begin{proof}
Let $\kappa^*_{i,j} = \kappa^*( Q_i, \, | \, R_{i,j})$ and $\est{\kappa}_{i,j} = \est{\kappa}(\est{Q}_i \, | \, \est{R}_{i,j})$. By Proposition \ref{face_test}, since $Q_1, \ldots, Q_K$ are distinct distributions lying on the interior of the same face of $\co(P_1, \ldots, P_L)$, $\kappa^*_{i,j} < 1$ for all $i,j \in [K]$ not equal. We verify the conditions of Lemma \ref{rate} to obtain consistency of each $\est{\kappa}_{i,j}$. Since every $\est{Q}_i$ is a ResidueHat estimator, every $\est{Q}_i$ satisfies \textbf{(UDI)} by Lemma \ref{main_est_ineq}. Since $Q_i \neq Q_j$, $\est{R}_{i,j}$ is a ResidueHat estimator and, therefore, satisfies \textbf{(UDI)} by Lemma \ref{main_est_ineq}.  Since for all $i \in [K]$, $Q_i$ in the interior of the same face, by Statement \emph{2} of Lemma \ref{facts} and Proposition \ref{equiv_opt}, we have that $R_{i,j} \neq Q_i$ for all $i,j \in [K]$. Since $P_1, \ldots, P_L$ satisfy \textbf{(A$''$)}, $R_{i,j}, Q_i \in \co(P_1, \ldots, P_L)$, and $Q_i \neq R_{i,j}$, it follows by Lemma \ref{sup_cond} that $Q_i$ satisfies \textbf{(SC)} with respect to $R_{i,j}$. Since the conditions of Lemma \ref{rate} are satisfied, each $\est{\kappa}_{i,j}$ satisfies \textbf{(RC)} and is therefore consistent to $\kappa^*_{i,j}$. Pick $\delta = \min_{i,j} \frac{1-\kappa^*_{i,j}}{2}$. Then, the result follows.
\end{proof}
\subsubsection{The DemixHat Algorithm}
The DemixHat algorithm (see Algorithm \ref{demix_hat_alg}) differs from the Demix algorithm in that it applies the FaceTestHat algorithm with increasing values of $\epsilon >0$. At some point, it is guaranteed to estimate distributions on the same face of the simplex and pick $\epsilon$ sufficiently small so that FaceTestHat can determine that they are on the same face with probability tending to $1$. 
\begin{algorithm}
\caption{DemixHat($\est{S}_1, \ldots, \est{S}_K$)}
\textbf{Input: }$\est{S}_1, \ldots, \est{S}_K$ are ResidueHat estimates
\begin{algorithmic}[1]
\label{demix_hat_alg}
\IF{$K = 2$}
\STATE $\est{Q}_1 \longleftarrow$ ResidueHat($\est{S}_1 \, | \, \est{S}_2$)
\STATE $\est{Q}_2 \longleftarrow$ ResidueHat($\est{S}_2 \, | \, \est{S}_1$)
\RETURN $(\est{Q}_1, \est{Q}_2)$
\ELSE
\STATE $\est{Q} \longleftarrow \text{ uniformly distributed random element from } \co (\est{S}_2, \ldots, \est{S}_K)$
\STATE $n \longleftarrow 1$
\STATE $T \longleftarrow 0$
\WHILE{$T == 0$} \label{while_loop_demixhat}
\STATE $n \longleftarrow n + 1$
\FOR{$i=2,\cdots, K$} 
\STATE $\est{R}_i \longleftarrow \text{ResidueHat}(m_\frac{n-1}{n}(\est{S}_i, \est{Q}) \, | \, \est{S}_1)$ \label{R_hat_demixhat}
\ENDFOR
\STATE $\epsilon \longleftarrow \frac{1}{2^{n+1}}$
\STATE $T \longleftarrow \text{FaceTestHat}(\est{R}_2, \cdots, \est{R}_K \, | \, \epsilon)$	 	
\ENDWHILE
\STATE $(\est{Q}_1, \cdots, \est{Q}_{K-1})^T \longleftarrow \text{DemixHat}(\est{R}_2,\cdots,\est{R}_K)$\\
\STATE $\est{Q}_K \longleftarrow \frac{1}{K} \sum_{i=1}^K \est{S}_i$\\
\FOR{$i = 1, \ldots, K-1$}
\STATE $\est{Q}_K \longleftarrow \text{ResidueHat}(\est{Q}_K \, | \, \est{Q}_i)$
\ENDFOR
\RETURN $(\est{Q}_1, \cdots, \est{Q}_K)^T$
\ENDIF
\end{algorithmic}
\end{algorithm}

\begin{proof}[Theorem \ref{demix_estim}]
Note that every estimator of a distribution in the DemixHat algorithm is a ResidueHat estimator since the Demix algorithm only considers distributions that are in $\co(P_1, \ldots, P_L)$ and only computes Residue($F \, | \, H$) if $F \neq H$. Therefore, every estimator of a distribution of the DemixHat algorithm satisfies the assumptions of Lemma \ref{main_est_ineq}. 

First, we argue that DemixHat uses ResidueHat estimators that are recursively defined in terms of a finite number of ResidueHat esimators so that the constants in the uniform deviation inequalities associated with the ResidueHat estimators do not go to infinity. We find a very loose bound. DemixHat calls itself at most $L-1$ times and in each call recurses on at most $L-1$ ResidueHat estimators and calculates at most $L-1$ more ResidueHat estimators. Therefore, each ResidueHat estimator is recursively defined in terms of at most $(L-1)^3$ ResidueHat estimators.

Second, we argue that with probability increasing to $1$ as $\vec{n} \longrightarrow \infty$, DemixHat eventually terminates. Let $A_i$ denote the event that DemixHat recurses on $i$ distributions lying in the interior of a $i$-face in the $(L-i)$th recursive call. Consider $A_{L-1}$. Let $\est{R}_i^k$ denote the estimate of the $i$th distribution in line \ref{R_hat_demixhat} in the $k$th iteration of the while loop starting on line \ref{while_loop_demixhat} of the DemixHat algorithm and let $R_i^k$ denote the corresponding distribution. By Theorem \ref{demix_identification}, there exists a smallest $N$ such that for $n \geq N$, every $R_i^n$ lies in the interior of the same face. By Lemma \ref{face_test_hat}, with probability increasing to $1$, for any $\tau > 0$, $\text{FaceTestHat}(\est{R}_2^{k}, \ldots, \est{R}_L^{k} \, | \, \tau)$ returns $0$ if $k < N$ since $R_2^k, \ldots, R_L^k$ are not on the interior of the same face. By Lemma \ref{face_test_hat_term}, there exists $\delta > 0$ such that for $\delta > \epsilon > 0$, with probability increasing to $1$, for $n \geq N$, $\text{FaceTestHat}(R_2^{n}, \ldots, R_L^{n} \, | \, \epsilon)$ returns $1$. Hence, with probability increasing to $1$, the event $A_{L-1}$ occurs. Applying the same argument to $A_i$ for $i < L-1$ and taking the union bound shows that DemixHat terminates with probability increasing to $1$. 

Now, we can complete the proof. Under the assumptions of Theorem \ref{demix_identification}, there is a permutation $\sigma$ such that for each distribution $Q_i$ estimated by $\est{Q}_i$, $P_{\sigma(i)} = Q_i$. By Proposition \ref{uniform_convergence}, as $\vec{n} \longrightarrow \infty$, $\est{Q}_i$ converges uniformly to $Q_i$. The result follows.
\end{proof}

\subsubsection{PartialLabelHat Algorithm}

We introduce a finite sample algorithm for decontamination of the partial label model. It combines the DemixHat algorithm and an empirical version of the VertexTest algorithm. We make an assumption that simplifies our algorithm: $\vec{S}$ satisfies
\begin{description}
\item[(D)] there does not exist $i,j \in [L]$ such that $\vec{S}_{i,:} = \vec{e}_j^T$.
\end{description}
In words, this says that there is no contaminated distribution $\p{i}$ and base distribution $P_j$ such that $\p{i} = P_j$. We emphasize that we make this assumption only to simplify the presentation and development of the algorithm; one can reduce any instance of a partial label model satisfying \textbf{(B$'$)}, \textbf{(C)}, and \textbf{(A$''$)} to an instance of a partial label model that also satisfies \textbf{(D)}. We provide a sketch of such a reduction. Let $J = \set{i : \vec{S}_{i,:} = \vec{e}_j^T \text{ for some } j \in [L]} = \set{j_1, \ldots, j_k}$, the set of indices of contaminated distributions that are equal to some base distribution. Compute $\text{Residue}(\p{i} \, | \, \p{{j_1}})$ for $i \in [L] \setminus J$ if there is $l$ such that $\vec{S}_{i,l} = \vec{S}_{j_1, l} = 1$. Replace $\p{i}$ with $\text{Residue}(\p{i} \, | \, \p{{j_1}})$ (and call it $\p{i}$ for simplicity of presentation). Update $\vec{S}$ and remove $j_1$ from $J$. Repeat this procedure until $J$ is empty. Then, there will be $k$ $\p{i}$ lying in a $k$-face of $\Delta_L$ (for some $k$) that are not equal to any of the base distributions and the other contaminated distributions will be equal to base distributions. Then, it suffices to solve the instance of the partial label model on the $k$-face, which satisfies \textbf{(D)}.

\begin{algorithm}
\caption{$\text{PartialLabelHat}(\vec{S}, (\ed{\tilde{P}}_1, \ldots, \ed{\tilde{P}}_L)^T)$}
\begin{algorithmic}[1]
\label{partial_label_hat}
\STATE $(\est{Q}_1, \ldots, \est{Q}_L)^T \longleftarrow \text{DemixHat}(\ed{\tilde{P}}_1, \ldots, \ed{\tilde{P}}_L)$ 
\STATE $\text{FoundVertices} \longleftarrow 0$
\STATE $k \longleftarrow 2$
\WHILE{$\text{FoundVertices } == 0$}
\STATE $(\text{FoundVertices}, \vec{C}) \longleftarrow \text{VertexTest}(\vec{S}, (\ed{\tilde{P}}_1, \ldots, \ed{\tilde{P}}_L)^T, (\est{Q}_1, \ldots, \est{Q}_L)^T, \frac{1}{k})$
\STATE $k \longleftarrow k + 1$
\ENDWHILE
\RETURN $\vec{C} (\est{Q}_1, \ldots, \est{Q}_L)^T$
\end{algorithmic}
\end{algorithm}

\begin{algorithm}
\caption{$\text{VertexTestHat}(\vec{S}, (\ed{\tilde{P}}_1, \ldots, \ed{\tilde{P}}_L)^T, (\est{Q}_1, \ldots, \est{Q}_L)^T, \epsilon)$}
\begin{algorithmic}[1]
\label{vertex_hat_test}
\STATE $\vec{D} \longleftarrow \begin{pmatrix}
 1 & \ldots & 1 \\
 \vdots & \ddots & \vdots \\
 1 & \ldots & 1
 \end{pmatrix} \in \bbR^{L \times L}$
\FOR{$i = 1, \ldots, L$}
\FOR{$j = 1, \ldots, L$}
\IF{$\text{FaceContainHat}(\ed{\tilde{P}}_j , \est{Q}_i \, | \epsilon) == 1$}
\STATE $\vec{D}_{i,:} \longleftarrow \min(\vec{D}_{i,:}, \vec{S}_{j,:})$ \label{make_0_hat}
\ENDIF
\ENDFOR
\ENDFOR
\FOR{$k = 1 \text{ to } L$} \label{k_loop_D}
\FOR{$i = 1 \text{ to } L$}
\IF{ $\vec{D}_{i,:} == \vec{e}_j^T$ \text{ for some } $j$}
\STATE $\vec{D}_{:,j} \longleftarrow \vec{e}_i$ \label{make_0_hat_2nd_loop}
\ENDIF
\ENDFOR
\ENDFOR
\IF{$\vec{D} \text{ is a permutation matrix}$}
\RETURN $(1, \vec{D}^T)$
\ELSE
\RETURN $(0, \vec{D}^T)$
\ENDIF
\end{algorithmic}
\end{algorithm}

\begin{algorithm}
\label{face_contain_hat}
\caption{$\text{FaceContainHat}(\est{Q}_1, \est{Q}_2 \, | \, \epsilon)$}
\begin{algorithmic}[1]
\STATE $\est{R} \longleftarrow \text{ResidueHat}(\est{Q}_1 \, | \, \est{Q}_2)$
\IF{$\est{\kappa}( \est{Q}_1 \, | \, \est{R}) \leq 1 - \epsilon$}
\RETURN $1$
\ELSE
\RETURN $0$
\ENDIF
\end{algorithmic}
\end{algorithm}

\begin{lemma}
\label{vertex_hat_lemma}
Suppose that $P_1, \ldots, P_L$ satisfy \textbf{(A$''$)}, $\vec{\Pi}$ satisfies \textbf{(B$'$)}, and $\vec{S}$ satisfies \textbf{(C)} and \textbf{(D)}. Suppose that $\est{Q}_1, \ldots, \est{Q}_L$ are ResidueHat estimators that are a permutation of $P_1, \ldots, P_L$. Then, there exists $\delta > 0$ such that 

\begin{description}
\item[(i)] if $\delta > \epsilon > 0$, then with probability increasing to $1$, as $\vec{n} \longrightarrow \infty$, $\text{VertexTest}(\vec{S}, (\ed{\tilde{P}}_1, \ldots, \ed{\tilde{P}}_L)^T, (\est{Q}_1, \ldots, \est{Q}_L)^T, \epsilon)$ returns a permutation matrix $\vec{C}$ such that $\forall i$, $\vec{C}_{i,:} (\est{Q}_1, \ldots, \est{Q}_L)^T$ is a ResidueHat estimator of $P_i$;

\item[(ii)]\sloppy if $\epsilon > \delta > 0$, then with probability increasing to $1$, as $\vec{n} \longrightarrow \infty$, either $\text{VertexTest}(\vec{S}, (\ed{\tilde{P}}_1, \ldots, \ed{\tilde{P}}_L)^T, (\est{Q}_1, \ldots, \est{Q}_L)^T, \epsilon)$ returns the same permutation matrix $\vec{C}$ as in \emph{\textbf{(i)}} or it returns the value $(0, \vec{C})$ indicating that it did not find the permutation matrix.
\end{description}
\end{lemma}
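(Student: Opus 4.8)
The plan is to show that the subroutine FaceContainHat is, with high probability, a faithful finite-sample surrogate for the population test used inside VertexTest, and then to transfer the conclusions of Lemma \ref{vertex_test}. For each pair $(i,j)$ write $R_{i,j} = \text{Residue}(\p{j} \mid Q_i)$ and $\est{R}_{i,j} = \text{ResidueHat}(\ed{\tilde{P}}_j \mid \est{Q}_i)$, and set $\kappa^*_{i,j} = \kappa^*(\p{j} \mid R_{i,j})$ and $\est{\kappa}_{i,j} = \est{\kappa}(\ed{\tilde{P}}_j \mid \est{R}_{i,j})$, so that FaceContainHat$(\ed{\tilde{P}}_j, \est{Q}_i \mid \epsilon)$ returns $1$ exactly when $\est{\kappa}_{i,j} \leq 1 - \epsilon$. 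First I would record that \textbf{(D)}, together with the hypothesis $Q_i = P_{\sigma(i)}$ for a permutation $\sigma$, forces $Q_i \neq \p{j}$ for all $i,j$; hence $R_{i,j}$ is well defined and $\est{R}_{i,j}$ is a genuine ResidueHat estimator (condition (i) of that definition needs $\p{j}\neq Q_i$, not $R_{i,j}\neq\p{j}$). By statement 3 of Lemma \ref{facts} there are two cases: if $\sS(Q_i) \subseteq \sS(\p{j})$ (Case A) then $\kappa^*(\p{j}\mid Q_i) > 0$, so $R_{i,j} \neq \p{j}$ and $\kappa^*_{i,j} < 1$ by Proposition \ref{label_noise_bin_case}; if $\sS(Q_i) \not\subseteq \sS(\p{j})$ (Case B) then $R_{i,j} = \p{j}$ and $\kappa^*_{i,j} = 1$. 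I would then set $\delta = \min\{\,1 - \kappa^*_{i,j} : (i,j)\text{ in Case A}\}$, which is positive since the index set is finite.

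The second step is to control $\est{\kappa}_{i,j}$ uniformly over the finitely many pairs by a union bound. Both $\ed{\tilde{P}}_j$ and $\est{R}_{i,j}$ satisfy \textbf{(UDI)} by Lemma \ref{main_est_ineq}. In Case A, $\p{j} \neq R_{i,j}$, so Lemma \ref{sup_cond} shows $\p{j}$ satisfies \textbf{(SC)} with respect to $R_{i,j}$, and Lemma \ref{rate} yields a \textbf{(RC)}, giving $\est{\kappa}_{i,j} \leq \kappa^*_{i,j} + A_1\gamma$ with probability tending to $1$. In Case B, $\p{j} = R_{i,j}$ and Lemma \ref{rate} is unavailable, but Lemma \ref{upper_bound} (whose hypotheses do not require $F\neq H$) still gives $\kappa^*_{i,j} - \est{\kappa}_{i,j} \leq A_1\gamma$, i.e. $\est{\kappa}_{i,j} \geq 1 - A_1\gamma$. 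Since $\gamma \to 0$ as $\vec{n} \to \infty$, for any fixed $\epsilon > 0$ every Case B pair has $\est{\kappa}_{i,j} > 1 - \epsilon$ with probability tending to $1$; thus FaceContainHat never produces a \emph{false positive}.

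For part \textbf{(i)} I would take $\epsilon < \delta$. Then for every Case A pair $\kappa^*_{i,j} + A_1\gamma \leq (1-\delta) + A_1\gamma \leq 1-\epsilon$ for large $\vec{n}$, so FaceContainHat returns $1$, while every Case B pair returns $0$. Hence, with probability tending to $1$, the matrix built in the first double loop of VertexTestHat (Algorithm \ref{vertex_hat_test}) agrees entrywise with the matrix built by VertexTest from the population condition $\kappa^*(\p{j}\mid Q_i) \in (0,1]$, and the subsequent column-cleanup loops are identical. Invoking Lemma \ref{vertex_test} then shows the algorithm returns $(1,\vec{C})$ with $\vec{C}$ the permutation matrix of $\sigma$; reading off $\vec{C}_{i,:}(\est{Q}_1,\ldots,\est{Q}_L)^T = \est{Q}_{\sigma^{-1}(i)}$, a ResidueHat estimator of $Q_{\sigma^{-1}(i)} = P_i$, finishes \textbf{(i)}.

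For part \textbf{(ii)} I take $\epsilon > \delta$. The bound above still excludes false positives in Case B, while some Case A pairs with $1-\kappa^*_{i,j} < \epsilon$ may now yield \emph{false negatives}. Because a return value of $0$ from FaceContainHat merely \emph{skips} a $\min(\cdot,\vec{S}_{j,:})$ update, the resulting matrix $\vec{D}^{\mathrm{emp}}$ dominates, entrywise, the population matrix $\vec{D}^{\mathrm{pop}}$, and in particular the correct entries $D_{i,\sigma(i)} = 1$ are retained (Claim 1 in the proof of Lemma \ref{vertex_test}). The crux, and the step I expect to be the main obstacle, is to argue that the column-cleanup loop can then only reproduce the correct permutation matrix or fail outright. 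I would do this by induction over the loop: an entry $D_{i,\sigma(i)}$ is zeroed only when column $\sigma(i)$ is committed to some row $i' \neq i$, which by the inductive hypothesis forces $Q_{i'} = P_{\sigma(i)} = Q_i$, contradicting that $Q_1,\ldots,Q_L$ is a permutation; hence $D_{i,\sigma(i)} = 1$ survives throughout, so any row collapsing to a singleton $\vec{e}_j^T$ must have $j = \sigma(i)$ and every commitment is correct. Therefore the loop either terminates with exactly the permutation matrix of $\sigma$ (the same $\vec{C}$ as in \textbf{(i)}) or ends with a non-permutation matrix, in which case the algorithm returns $(0,\vec{C})$, as claimed.
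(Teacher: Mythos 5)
Your proposal is correct and follows essentially the same route as the paper's proof: both analyze FaceContainHat by splitting pairs according to whether $\kappa^*(\p{j} \mid Q_i)$ is zero or positive, use Lemma \ref{upper_bound} to rule out false positives for every $\epsilon$ and Lemma \ref{sup_cond} together with Lemma \ref{rate} to rule out false negatives when $\epsilon < \delta$, and then transfer to Lemma \ref{vertex_test} (exact agreement with the population run for part \textbf{(i)}; entrywise domination plus an induction over the column-cleanup loop for part \textbf{(ii)}). The only differences are cosmetic: your $\delta$ omits the paper's harmless factor of $1/2$, and your part-\textbf{(ii)} induction is phrased as an invariant on $\vec{D}$ itself (correct entries survive, commitments are correct) rather than as the paper's containment $A_i^{(k)} \supseteq B_i^{(k)}$ against the population matrix, but the two arguments are equivalent.
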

\begin{proof}

\begin{description}
\item[(i)] \sloppy Fix $i,j$. Let $\p{j}$ and $Q_i$ denote the distributions that $\ed{\tilde{P}}_j$ and $\est{Q}_i$ estimate. We show that there exists $\delta_{i,j} > 0$ such that if $0 < \epsilon < \delta_{i,j}$, then $\kappa^*(\p{j} \, | \, Q_i) \in (0,1]$ if and only if $\text{FaceContainHat}(\ed{\tilde{P}}_j, \est{Q}_i \, | \, \epsilon)$ returns $1$. We will take $\delta = \min_{i,j} \delta_{i,j}$; \textbf{(i)} will follow from Lemma \ref{vertex_test} and the observation that $\text{VertexTestHat}(\vec{S}, (\ed{\tilde{P}}_1, \ldots, \ed{\tilde{P}}_L)^T, (\est{Q}_1, \ldots, \est{Q}_L)^T, \epsilon)$ and $\text{VertexTest}(\vec{S}, (\p{1}, \ldots, \p{L})^T, (Q_1, \ldots, Q_L)^T)$ have identical behavior if the claim holds.

Let $\est{R} \longleftarrow \text{ResidueHat}(\p{j} \, | \, \est{Q}_i)$ and $R$ denote the distribution that $\est{R}$ estimates. Let $\est{\kappa} = \est{\kappa}( \ed{\tilde{P}}_j \, | \est{R})$. We verify the conditions of Lemma \ref{upper_bound} to obtain an upper bound on $\est{\kappa}$. It follows from the hypothesis that $Q_i \neq \p{j}$. Further, since $\est{Q}_i$ is a ResidueHat estimator by hypothesis, $\ed{\tilde{P}}_j$ is an empirical distribution, and $Q_i, \p{j} \in \co(P_1, \ldots,P_L)$, we have that $\est{R}$ is a ResidueHat estimator by definition. By Lemma \ref{main_est_ineq}, $\widehat{R}$ satisfies \textbf{(UDI)}; by the VC inequality, $\ed{\tilde{P}}_j$ satisfies \textbf{(UDI)}. Then, by Lemma \ref{upper_bound}, there exist constants $A_1,A_2 > 0$ such that for large enough $\min(n_i :i \in \sD(\est{R}) \cup \sD(\ed{\tilde{P}}_j))$,
\begin{align}
\kappa^*(\p{j} \, | \, R) - \est{\kappa}(\ed{\tilde{P}}_j \, | \, \est{R})  \leq A_1 \gamma(\sD(\est{R}) \cup \sD(\ed{\tilde{P}}_j)) \label{inequality}
\end{align}
\noindent with probability at least $1 - A_2 \sum_{i \in \sD(\est{R}) \cup \sD(\ed{\tilde{P}}_j)} \frac{1}{n_i}$.

Now, suppose that $\kappa^*(\p{j} \, | \, Q_i) = 0$. Then, $\p{j} = R$ by definition of $\kappa^*(\p{j} \, | \, Q_i)$. Then, $\kappa^*(\p{j} \, | \, R) = 1$ by definition of $\kappa^*(\p{j} \, | \, R)$. Let $\epsilon > 0$ and $\min(n_i :i \in \sD(\est{R}) \cup \sD(\ed{\tilde{P}}_j))$ large enough so that (\ref{inequality}) holds with the given probability and $A_1 \gamma(\sD(\est{R}) \cup \sD(\ed{\tilde{P}}_j)) \leq \epsilon$. Then, with probability at least $1 - A_2 \sum_{i \in \sD(\est{R}) \cup \sD(\ed{\tilde{P}}_j)} \frac{1}{n_i}$,
\begin{align}
\est{\kappa}(\ed{\tilde{P}}_j \, | \, \est{R}) & \geq \kappa^*(\p{j} \, | \, R) - A_1 \gamma(\sD(\est{R}) \cup \sD(\ed{\tilde{P}}_j))  \geq 1 - \epsilon \label{face_contain_test_inequality}
\end{align}
Therefore, for any $\epsilon > 0$, with probability tending to $1$ as $\vec{n} \longrightarrow \infty$, $\text{FaceContainHat}(\ed{\tilde{P}}_j, \est{Q}_i \, | \epsilon)$ returns $0$.

Suppose that $\kappa^*(\p{j} \, | \, Q_i) \in (0,1]$. Since $\p{j} \neq Q_i$ by hypothesis,  $\kappa^*(\p{j} \, | \, Q_i) \in (0,1)$. Therefore, $R \neq \p{j}$ so that $\kappa^*(\p{j} \, | \, R) < 1$. Since $P_1, \ldots, P_L$ satisfy \textbf{(A$''$)}, by Lemma \ref{sup_cond}, $\p{j}$ satisfies \textbf{(SC)} with respect to $R$. Therefore, by Lemma \ref{rate}, $\est{\kappa}(\ed{\tilde{P}}_j \, | \, \est{R})$ satisfies \textbf{(RC)}. In particular, $\est{\kappa}(\ed{\tilde{P}}_j \, | \, \est{R}) $ is consistent to $\kappa^*(\p{j} \, | \, R)$.

Let $\delta_{i,j} = \frac{1-\kappa^*(\p{j} \, | \, R) }{2}$. Then, for $\epsilon < \delta_{i,j}$, with probability tending to $1$, as $\vec{n} \longrightarrow \infty$, $\est{\kappa} < 1 - \epsilon$. Therefore, $\text{FaceContainHat}(\ed{\tilde{P}}_j, \est{Q}_i \, | \, \epsilon)$ returns $1$.

\item[(ii)] Let $\delta = \min_{i,j} \delta_{i,j}$ and suppose that $\epsilon > \delta > 0$. We again consider compare the execution of 
\begin{align*}
\text{VertexTestHat}&(\vec{S}, (\ed{\tilde{P}}_1, \ldots, \ed{\tilde{P}}_L)^T, (\est{Q}_1, \ldots, \est{Q}_L)^T, \epsilon) \\
\text{VertexTest}&(\vec{S}, (\p{1}, \ldots, \p{L})^T, (Q_1, \ldots, Q_L)^T)
\end{align*}
Let $\vec{D}^{(k)}$ denote the value of the matrix $\vec{D}$ in the VertexTestHat at line \ref{k_loop_D} after going through the loop starting on line \ref{k_loop_D} $k$ times. Let $\vec{C}^{(k)}$ denote the matrix $\vec{C}$ in the VertexTest algorithm at line \ref{k_loop_C} after going through the loop starting on line \ref{k_loop_C} $k$ times. Let $A_i^{(k)} = \set{P_j : \vec{D}^{(k)}_{i,j} = 1}$. Let $B_i^{(k)} = \set{ P_j : \vec{C}^{(k)}_{i,j} = 1}$ as in the proof of Lemma \ref{vertex_test}. 

Claim 1 of the proof of Lemma \ref{vertex_test} shows that $Q_i \in B_i^{(k)}$ for all $k \in [L]$ and all $i \in [L]$. Therefore, it suffices to show that $A_{i}^{(k)} \supset B_{i}^{(k)}$ for all $k$ and $i$. We prove this inductively. We claim that, after the first loop, $A_i^{(0)} \supset B_{i}^{(0)}$: by the inequality \ref{face_contain_test_inequality}, if  $\kappa^*(\p{j} \, | \, Q_i) = 0$, then for all $\epsilon > 0$,  with probability tending $1$ as $\vec{n} \longrightarrow \infty$, $\text{FaceContainHat}(\p{j}, Q_i \, | \epsilon)$ returns $0$. Therefore, VertexTestHat sets an entry $\vec{D}^{(0)}_{i,j}$ to $0$ in line \ref{make_0_hat} only if VertexTest sets the entry $\vec{C}^{(0)}_{i,j}$ to $0$ in line \ref{make_0_not_hat}. This proves the base case.

Now, suppose that $A_i^{(k)} \supset B_i^{(k)}$.  Suppose $\vec{D}^{(k)}_{i,:} = \vec{e}_j^T$ for some $j$ and $i$. There must be some $j$ such that $\vec{C}^{(k)}_{i,j} = 1$ since $(Q_1, \ldots, Q_L)^T$ are a permutation of $(P_1, \ldots, P_L)^T$ by hypothesis and Claim 1 of the proof of \ref{vertex_test}. Then, by the inductive hypothesis, we must have $\vec{C}^{(k)}_{i,:} = \vec{e}_j^T$. Therefore, in the $(k+1)$th iteration, VertexTestHat sets an entry $\vec{D}^{(k)}_{i,j}$ to $0$ in line \ref{make_0_hat_2nd_loop} only if VertexTest sets the entry $\vec{C}^{(k)}_{i,j}$ to $0$ in line \ref{make_0_not_hat_2nd_loop}.
Hence, we delete an element $P_l$ from $A_i^{(k)}$ in the $(k+1)$th iteration of VertexTestHat only if we delete $P_l$ from $B_{i}^{(k)}$ in the $(k+1)$th iteration of VertexTest. This gives $A_i^{(k+1)} \supset B_i^{(k+1)}$. This proves the inductive step, completing the proof.
\end{description}
\end{proof}

\begin{thm}
\label{partial_label_hat}
Let $\epsilon > 0$. Suppose that $P_1, \ldots, P_L$ satisfy \textbf{(A$''$)}, $\vec{\Pi}$ satisfies \textbf{(B$'$)}, and $\vec{S}$ satisfies \textbf{(C)} and \textbf{(D)}. Then, with probability tending to $1$ as $\vec{n} \longrightarrow \infty$, PartialLabelHat($ \vec{S}, (\ed{\tilde{P}}_1, \ldots, \ed{\tilde{P}}_L)^T$) returns $(\est{Q}_1,\ldots, \est{Q}_L)^T$ such that for every $i \in [L]$,
\begin{equation*}
\sup_{S \in \sS} |\est{Q}_i(S) - P_i(S)| < \epsilon.
\end{equation*}
\end{thm}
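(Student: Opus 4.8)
The plan is to compose three results already established: the consistency guarantee for DemixHat (Theorem \ref{demix_estim}), the two-sided behavior of the empirical vertex test (Lemma \ref{vertex_hat_lemma}), and the uniform convergence of ResidueHat estimators (Proposition \ref{uniform_convergence}). First I would invoke Theorem \ref{demix_estim}: under \textbf{(A$''$)} and \textbf{(B$'$)}, with probability tending to $1$ as $\vec{n} \longrightarrow \infty$, the call DemixHat$(\ed{\tilde{P}}_1, \ldots, \ed{\tilde{P}}_L)$ returns estimators $\est{Q}_1, \ldots, \est{Q}_L$ that are ResidueHat estimators of a permutation of $P_1, \ldots, P_L$ (each $\est{Q}_i$ estimates some $P_{\sigma(i)}$ and converges uniformly to it). The crucial observation is that these $\est{Q}_i$ then satisfy \emph{exactly} the hypotheses of Lemma \ref{vertex_hat_lemma}, so the two stages compose. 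All subsequent steps are carried out on this high-probability event.

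Next I would analyze the while-loop of PartialLabelHat, which calls the empirical vertex test with tolerance $\epsilon_k = 1/k$ for $k = 2, 3, \ldots$. Lemma \ref{vertex_hat_lemma} supplies a threshold $\delta > 0$ that depends only on $\set{P_i}$ and $\set{\p{j}}$ (and is invariant under the relabeling $\sigma$), hence is a fixed constant independent of $\vec{n}$. Because $\delta$ is fixed, there is a finite index $K_0 = \lceil 1/\delta \rceil + 1$ with $\epsilon_k < \delta$ for every $k \geq K_0$, so the loop performs at most $K_0$ calls. For each small $k$ with $\epsilon_k > \delta$, part \textbf{(ii)} of Lemma \ref{vertex_hat_lemma} guarantees that, with probability tending to $1$, the test either returns the correct permutation matrix $\vec{C}$ or returns $(0, \cdot)$; in particular it never returns an \emph{incorrect} permutation matrix, so the loop either terminates correctly or advances $k$. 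For $k = K_0$, part \textbf{(i)} guarantees that, with probability tending to $1$, the test returns the correct permutation matrix, forcing termination. Since there are only $K_0$ (a constant) iterations, a union bound over these finitely many ``behaving correctly'' events shows that, with probability tending to $1$, the loop terminates within $K_0$ steps and returns the correct $\vec{C}$.

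Finally I would conclude using the structural part of Lemma \ref{vertex_hat_lemma}\textbf{(i)}: the correct $\vec{C}$ is such that $\vec{C}_{i,:}(\est{Q}_1, \ldots, \est{Q}_L)^T$ is itself a ResidueHat estimator of $P_i$ for each $i$. Since the algorithm returns precisely $\vec{C}(\est{Q}_1, \ldots, \est{Q}_L)^T$, its $i$th coordinate is a ResidueHat estimator of $P_i$. By Proposition \ref{uniform_convergence}, each such coordinate converges uniformly to $P_i$, so for the prescribed $\epsilon > 0$ we obtain $\sup_{S \in \sS} |\est{Q}_i(S) - P_i(S)| < \epsilon$ for every $i$ with probability tending to $1$; a final union bound over the $L$ coordinates completes the proof.

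I expect the main obstacle to be the loop analysis in the second paragraph: one must simultaneously rule out premature termination at a wrong permutation matrix and guarantee eventual termination, and this relies on the two-sided control of Lemma \ref{vertex_hat_lemma} together with the fact that the threshold $\delta$ does not depend on $\vec{n}$. That sample-size independence is what keeps the number of vertex-test calls bounded, so the union bound over the ``correct-behavior'' events still has probability tending to $1$; without it the number of iterations could grow and the argument would break. A secondary point needing explicit care is that the permutation $\sigma$ left undetermined by DemixHat is exactly the ambiguity that the vertex test resolves via $\vec{S}$, so I would state plainly that the DemixHat output meets the hypotheses of Lemma \ref{vertex_hat_lemma}.
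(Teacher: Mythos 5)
Your proposal is correct and takes essentially the same route as the paper's proof: invoke Theorem~\ref{demix_estim} to get ResidueHat estimators of a permutation of $P_1,\ldots,P_L$, verify the hypotheses of Lemma~\ref{vertex_hat_lemma}, and let its two parts resolve the permutation via the while-loop. The only difference is that you make explicit the loop-termination and union-bound details (the $n$-independence of $\delta$, the bound $K_0 = \lceil 1/\delta\rceil + 1$ on the number of vertex-test calls) that the paper compresses into ``the result follows immediately from Lemma~\ref{vertex_hat_lemma}.''
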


\begin{proof}[Theorem \ref{partial_label_hat}]
Let $(\est{Q}_1, \ldots, \est{Q}_L) \longleftarrow \text{DemixHat}(\ed{\tilde{P}}_1, \ldots, \ed{\tilde{P}}_L)$. By Theorem \ref{demix_estim}, there exists a permutation $\sigma: [L] \longrightarrow [L]$ such that for every $i \in [L]$,
\begin{equation*}
\sup_{S \in \sS} |\est{Q}_i(S) - P_{\sigma(i)}(S)| < \epsilon.
\end{equation*}
From the proof of Theorem \ref{demix_estim}, it is clear that each $\est{Q}_i$ is a ResidueHat estimator. The assumptions of Lemma \ref{vertex_hat_lemma} are satisfied. The result follows immediately from  Lemma \ref{vertex_hat_lemma}.
\end{proof}

\clearpage
\bibliography{references}

\end{document}